\def\eqref#1{equation~\ref{#1}}
\def\1{\bm{1}}
\DeclareMathAlphabet{\mathsfit}{\encodingdefault}{\sfdefault}{m}{sl}
\SetMathAlphabet{\mathsfit}{bold}{\encodingdefault}{\sfdefault}{bx}{n}
\newcommand{\Var}{\mathrm{Var}}
\DeclareMathOperator*{\argmax}{arg\,max}
\newcommand{\xini}{x_{\text{ini}}}
\newcommand{\var}{\Var}
\newcommand{\algbase}{\text{CT-MLE}}
\newtheoremstyle{allitalic}
  {\topsep}
  {\topsep}
  {\itshape}      
  {}
  {\bfseries}
  {.}
  { }
  {}
\theoremstyle{allitalic}
\newtheorem{theorem_i}[theorem]{Theorem}
\newtheorem{proposition_i}[theorem]{Proposition}
\newtheorem{definition_i}[theorem]{Definition}
\newtheorem{assumption_i}[theorem]{Assumption}
\newtheorem{corollary_i}[theorem]{Corollary}
\title{Instance-Dependent Continuous-Time Reinforcement Learning via Maximum Likelihood Estimation}
\author{%
  Runze Zhao\thanks{These authors contributed equally to this work.} \\
  Luddy School of Informatics, Computing, and Engineering\\
  Indiana University Bloomington\\
  Bloomington, IN 47408 \\
  \texttt{zhaorunz@iu.edu} \\
  \And
  Yue Yu\footnotemark[1] \\
  Department of Statistics\\
  Indiana University Bloomington\\
  Bloomington, IN 47405 \\
  \texttt{yyu3@iu.edu} \\
  \And
  Ruhan Wang \\
  Luddy School of Informatics, Computing, and Engineering\\
  Indiana University Bloomington\\
  Bloomington, IN 47408 \\
  \texttt{ruhwang@iu.edu} \\
  \And
  Chunfeng Huang \\
  Department of Statistics\\
  Indiana University Bloomington\\
  Bloomington, IN 47405 \\
  \texttt{huang48@iu.edu} \\
  \And
  Dongruo Zhou \\
  Luddy School of Informatics, Computing, and Engineering\\
  Indiana University Bloomington\\
  Bloomington, IN 47408 \\
  \texttt{dz13@iu.edu}
}
\begin{document}

\maketitle

\begin{abstract}
 Continuous-time reinforcement learning (CTRL) provides a natural framework for sequential decision-making in dynamic environments where interactions evolve continuously over time. While CTRL has shown growing empirical success, its ability to adapt to varying levels of problem difficulty remains poorly understood. In this work, we investigate the instance-dependent behavior of CTRL and introduce a simple, model-based algorithm built on maximum likelihood estimation (MLE) with a general function approximator. Unlike existing approaches that estimate system dynamics directly, our method estimates the state marginal density to guide learning. We establish instance-dependent performance guarantees by deriving a regret bound that scales with the total reward variance and measurement resolution. Notably, the regret becomes independent of the specific measurement strategy when the observation frequency adapts appropriately to the problem’s complexity. To further improve performance, our algorithm incorporates a randomized measurement schedule that enhances sample efficiency without increasing measurement cost. These results highlight a new direction for designing CTRL algorithms that automatically adjust their learning behavior based on the underlying difficulty of the environment.
\end{abstract}

\section{Introduction}

Many real-world systems—such as autonomous robots, financial markets, and medical interventions—evolve in continuous time, where actions and feedback unfold without discrete intervals. This motivates the study of continuous-time reinforcement learning (CTRL), a framework where the agent learns to interact with a dynamic environment in real time to maximize cumulative reward. Unlike its discrete-time counterpart, CTRL is grounded in the natural temporal structure of many applications, making it particularly well-suited for control in physical and continuous systems. Recent work has highlighted its empirical potential, drawing on tools from continuous control theory \citep{greydanus2019hamiltonian, yildiz2021continuous, lutter2021value, treven2024efficient} and the emerging use of diffusion-based models \citep{yoon2024censored, xie2023difffit}. These developments underscore CTRL’s growing relevance and its advantage in capturing fine-grained interactions that discrete-time methods often approximate only coarsely.

In this paper, we focus on the adaptivity of CTRL—that is, the ability of a learning algorithm to adjust its behavior and complexity in response to the difficulty of the problem instance. Intuitively, simpler environments should require less exploration and faster convergence, while more complex dynamics or reward structures may demand prolonged learning and finer control. For example, in robotic manipulation, navigating an open space may require significantly less precision and feedback sensitivity compared to threading a needle or interacting with deformable objects. Despite its importance, adaptivity remains largely underexplored in the CTRL literature: existing methods often lack theoretical guarantees or empirical mechanisms to modulate learning effort according to task complexity. This motivates our first core question:

\begin{center}
    \textit{Can we design a CTRL algorithm that is provably adaptive to problem difficulty, offering instance-dependent performance guarantees?}
\end{center}

A natural starting point to investigate adaptivity in CTRL is to approximate the continuous-time process using discrete-time reinforcement learning with equidistant observations. This enables us to draw on the extensive literature on adaptivity in discrete-time RL, where regret bounds and learning dynamics have been thoroughly analyzed \citep{zhao2023variance, zhou2023sharp, wang2024model, wang2024more}. However, existing CTRL formulations typically apply a fixed, uniform measurement scheme to all environments, ignoring the heterogeneity in their underlying dynamics. For systems with unevenly evolving trajectories, fixed-interval sampling may either miss important events or expend effort on redundant measurements. This lack of adaptivity prevents CTRL methods from tailoring their measurement schedule to the actual variability of the environment. Consequently, a key question arises: 

\begin{center}
    \textit{How does the choice of measurement strategy in CTRL influence its ability to adapt across problem instances?}
\end{center}

In this work, we aim to address the two core questions outlined above. Our main contributions are summarized as follows.

\begin{figure}
    \centering
\includegraphics[width=.9\linewidth]{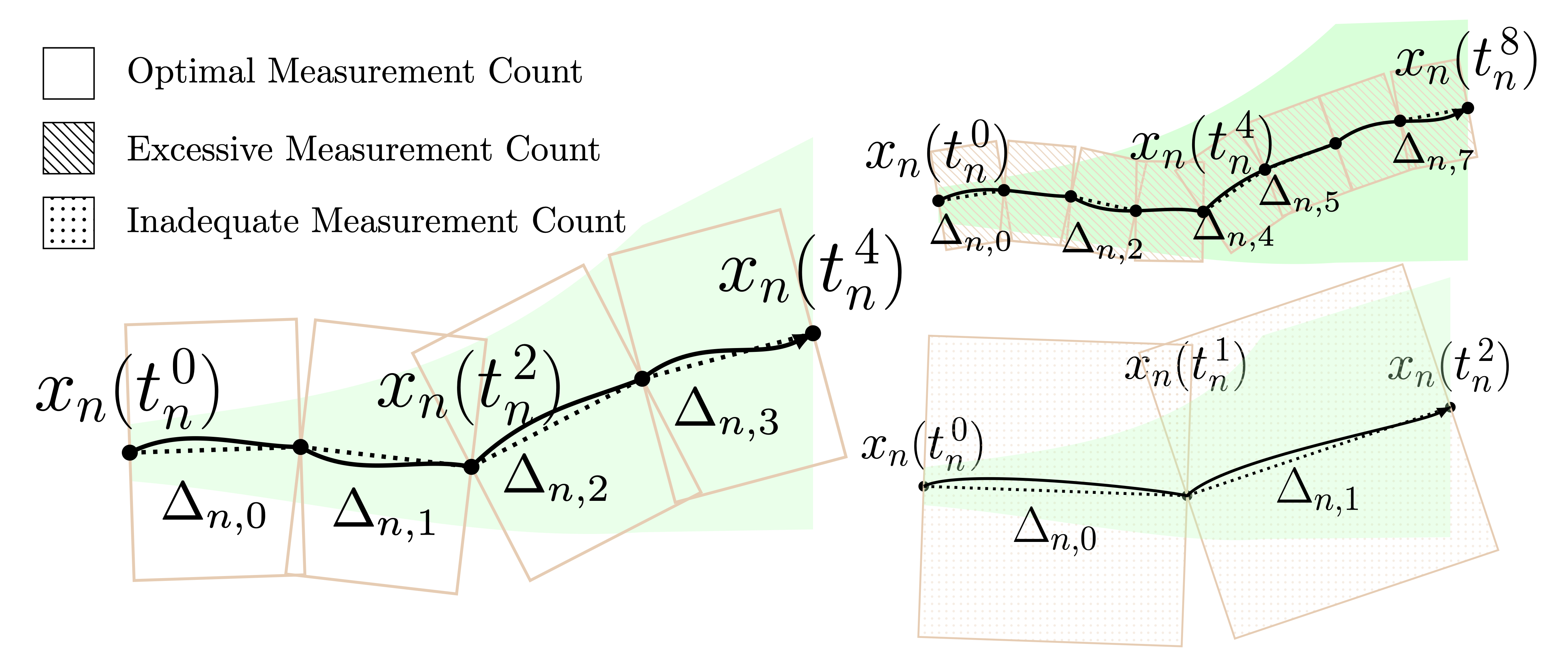}
\caption{We depict the state trajectory $x_n(t)$ over $t\in[0,T]$ in episode $n$, with $x_n(0)=\xini$ and $x_n(T)$ at the endpoints. Observation times $t_n^k$ are marked by black dots.  Each measurement interval $\Delta_{n,k}=t_n^{k+1}-t_n^k$ is overlaid by a brown rectangle of width $\Delta_{n,k}$ and height proportional to $\Delta_{n,k}$, so that its area encodes $\Delta_{n,k}^2$ in our regret bound.  The green shading illustrates the total variance $\Var^{u_n}$. Proper measurement gap should be selected in accordance with policy variance $\Var^{u_n}$ to achieve an optimal instance‐dependent performance.}
    \label{fig:demo}
\end{figure}

\begin{itemize}[leftmargin = *]
    \item We introduce a conceptually simple model-based algorithm for CTRL, termed $\algbase$ (Continuous-Time Reinforcement Learning with Maximum Likelihood Estimation). Unlike previous methods that estimate the underlying system dynamics directly \citep{treven2024efficient, ouruai2025}, $\algbase$ instead estimates the marginal state density using maximum likelihood estimation (MLE) with a general function approximator (e.g., neural networks or kernel models). This shift—from modeling dynamics to modeling marginal distributions—offers greater modeling flexibility and improved sample efficiency in practice. Additionally, $\algbase$ is modular and compatible with a broad range of policy classes and sampling strategies, making it applicable to a wide variety of CTRL settings.
    
    \item From a theoretical perspective, we establish a regret bound for $\algbase$ over the first $N$ episodes of interaction. Specifically, we show that the regret satisfies 
    $$
    {\widetilde O\bigg(d^2 + d \sqrt{\sum_{n=1}^N\sum_{k=0}^{m_n-1} \Delta_{n,k}^2 + \sum_{n=1}^N \var^{u_n}}\,\bigg)},
    $$
    where $d$ denotes the complexity of the function class used for marginal density estimation, $m_n$ represents the number of measurements in episode $n$, $\Delta_{n,k}$ represents the $k$-th measurement gap in episode $n$, and $\var^{u_n}$ quantifies the total variance of the integrated reward under policy $u_n$. A central insight of our analysis is that when the measurement schedule is adapted to the problem instance—i.e., when $\sum_{k=0}^{m_n-1} \Delta_{n,k}^2$ is chosen in accordance with $\var^{u_n}$—the regret becomes primarily dependent on the reward variance and is nearly independent of the measurement schedule itself. This instance-dependent property highlights a key distinction from traditional discrete-time reinforcement learning, where measurements are typically uniform and agnostic to problem complexity. Figure~\ref{fig:demo} provides a demonstration of this phenomenon. Our results underscore the importance of adaptive measurement strategies for achieving instance-optimal performance in the continuous-time setting.

    \item A core technical innovation in $\algbase$ is its Monte Carlo-type randomized measurement strategy, which augments the default measurement grid with additional observation points sampled within each interval. This randomization enables unbiased estimation of the reward integral across each measurement gap, while maintaining the total number of measurements (i.e., measurement complexity) at the same order. This design not only enhances the practical effectiveness of $\algbase$ but also introduces a general technique that may be of independent interest for continuous-time decision-making problems.
\end{itemize}

    \paragraph{Notation.} 
    We use lower case letters to denote scalars, and use lower and upper case bold face letters to denote vectors and matrices respectively. We denote by $[n]$ the set $\{1,\dots, n\}$. 
For two positive functions $a(x)$ and $b(x)$ defined on a common domain, we write $a(x) \lesssim b(x)$ if there exists an absolute constant $C > 0$ such that $a(x) \leq C b(x)$ for all $x$ in the domain. Given a distribution $p(x)$, we use $\EE_{x \sim p}[\cdot]$ to denote expectation and $\VV_{x \sim p}[\cdot]$ to denote variance. For two distributions $p$ and $q$, we define their squared Hellinger distance as $\HH^2(p \,\|\, q) := 1 - \int \sqrt{p(x) q(x)}\, dx$.

\section{Additional Related Work}

\subsection{Continuous-time reinforcement learning}

Our work resides within the paradigm of CTRL, a foundational research thread in the control community. Early studies emphasized planning in analytically tractable settings such as the linear–quadratic regulator (LQR)~\citep{doya2000reinforcement,vrabie2009neural,faradonbeh2023online,caines2019stochastic,huang2024sublinear,basei2022logarithmic,szpruch2024optimal}. A pivotal advance occurred when \citet{chen2018neural} introduced neural function approximation for learning nonlinear dynamics and value functions, thereby catalysing data-driven CTRL. Building on this foundation, \citet{yildiz2021continuous} proposed an episodic model-based framework that alternates between fitting ODE models to collected trajectories and solving the resulting optimal-control problem with a continuous-time actor–critic. Subsequently, \citet{holt2024active} showed that under costly observations, uniform time sampling is suboptimal and that state-dependent schedules can yield higher returns. Parallel efforts \citep{karimi2023decision,ni2022continuous,holt2023active} have bridged continuous-time theory with practical implementations by considering deterministic systems with discrete measurements or control updates. More recent analyses have extended these ideas to deterministic and stochastic dynamics with nonlinear approximation~\citep{treven2024efficient,treven2024sense}, and \citet{ouruai2025} further broadened the approximation class while relaxing assumptions on epistemic-uncertainty estimators. Yet the existing theory largely provides worst-case guarantees.  We close this gap by establishing the first variance‐aware, nearly horizon‐free \emph{second‐order} regret bound for stochastic CTRL under general function approximation—measured via the eluder dimension—and show that a simple, standard MLE‐based model-based algorithm attains this bound.

\subsection{Variance-aware reinforcement learning}

There has been a series of work studying variance-aware or horizon-free sample complexity for discrete-time reinforcement learning \citep{simchowitz2019non, jin2020reward, dann2021beyond, xu2021fine, wagenmaker2022first, he2021logarithmic, he2021uniform, zhou2021provably, zhou2021nearly, zhao2022bandit, zhou2022computationally}. To mention a few, early online‐learning work provided second‐order bounds: \citet{cesa2007improved} derived refined regret bounds based on squared losses in expert advice, and \citet{ito2020tight} established tight first- and second-order regret for adversarial linear bandits using Bernstein‐type concentration. Extending to MDPs, \citet{zanette2019tighter}'s EULER algorithm achieves regret scaling with the maximum per-step return variance rather than $H$, and \citet{foster2021efficient} used triangular‐discrimination bonuses to obtain small‐loss bounds in contextual bandits. For structured function approximation, \citet{kim2022improved} obtained horizon-free, variance-adaptive regret for linear mixture MDPs via weighted least‐squares, \citet{zhao2023variance} provided computationally efficient variance‐dependent bounds for linear bandits and mixtures, and \citet{zhang2021improved} devised variance‐aware confidence sets giving logarithmic horizon dependence. Distributional RL has delivered second-order guarantees under general classes by modeling full return distributions \citep{zhang2022horizon}, and \citet{huang2024sublinear} achieved sublinear regret for continuous‐time stochastic LQR by estimating transition. Despite these advances, all require specialized variance or distributional machinery; our work shows that a standard MLE-based model-based RL approach attains nearly horizon-free, second-order variance-dependent bounds under general function approximation without bespoke variance estimation or distributional techniques, similar to \citet{wang2024model} but under the continuous-time setup.

\section{Problem Setup}\label{sec:setup}

\paragraph{Stochastic Differential Equation Formulation.}
We consider a general nonlinear continuous-time dynamical system governed by a stochastic differential equation (SDE). Let $x(\cdot)$ denote the state trajectory over a fixed planning horizon $[0, T]$, where $x(t) \in \mathcal{X}\subseteq \RR^l$ for all $t \in [0, T]$. The system dynamics under a deterministic policy $u \in \Pi: \mathcal{X} \to \mathcal{U}\subseteq \RR^r$ are described by
\begin{align}
   dx(t) = f(x(t), u(x(t))) \, dt + g(x(t), u(x(t))) \, dw(t), \notag
\end{align}
where $w(t)\in \RR^l$ is a standard Wiener process and the SDE is interpreted in the Itô sense. Here, $f \in \mathcal{F}$ and $g \in \mathcal{G}$, where each $f: \mathcal{X} \times \mathcal{U} \to \mathbb{R}^l$ and $g: \mathcal{X} \times \mathcal{U} \to \mathbb{R}^{l \times l}$ denote the drift and diffusion functions, respectively. Given an initial state $x(0) = x$, we denote by $p_{f,g}(u, x)$ the law of the trajectory $x(\cdot)$. We write $p_{f,g}(u, x, s)$ for the marginal distribution of $x(s)$ and use $p_{f,g}(\cdot \mid u, x, s)$ to denote its corresponding density function.

\paragraph{Learning Protocol.}
The learning process unfolds in episodes. In each episode $n = 1, \dots, N$, the agent executes a policy $u_n$ and observes the trajectory $x(\cdot) \sim p_{f^*, g^*}(u_n, \xini)$, where $(f^*, g^*)$ denotes the unknown environment and $\xini$ is the fixed initial state. During execution, the agent selects a set of measurement times $\{t_n^k\}_{k=1}^{m_n} \subset [0, T]$ at which observations are collected. These observations are used to update the policy for the next episode. The agent's objective is to find a policy that maximizes the expected cumulative reward under the reward function $b: \mathcal{X} \times \mathcal{U} \to \mathbb{R}$:
\begin{align}
    u^* = \arg\max_{u \in \Pi} R_{f^*, g^*}(u), \quad \text{where} \quad R_{f,g}(u) := V_{f,g}(u, \xini, 0), \notag
\end{align}
and the value function is given by
\begin{align}
    V_{f,g}(u, x, s) := \mathbb{E}_{x(\cdot) \sim p_{f,g}(u, \xini)} \left[ \int_{t=s}^T b(x(t), u(x(t))) \, dt \,\bigg|\, x(s) = x \right]. \notag
\end{align}
\paragraph{Performance Metrics.}
We evaluate algorithmic performance using several metrics. \textit{The regret} is defined as
\begin{align}
    \text{Regret}(N) := \sum_{n=1}^N \left( R_{f^*, g^*}(u^*) - R_{f^*, g^*}(u_n) \right), \notag
\end{align}
We say a policy $u$ is $\epsilon$-optimal if $R_{f^*, g^*}(u^*) - R_{f^*, g^*}(u) \leq \epsilon$. For any CTRL algorithm that returns an $\epsilon$-optimal policy after $N$ episodes, we define the \emph{episode complexity} as $N$, and the \emph{measurement complexity} as $\sum_{n=1}^N m_n$, where $m_n$ denotes the number of measurements in episode $n$. We also consider the \emph{$\lambda$-total complexity} for any $\lambda \in [0, 1]$, defined as the weighted sum: $(1 - \lambda) N + \lambda \sum_{n=1}^N m_n.$
This interpolates between pure episode complexity ($\lambda = 0$) and pure measurement complexity ($\lambda = 1$).

\begin{algorithm*}[t!]
\caption{Continuous-Time Reinforcement Learning with Maximum Likelihood Estimation}
\label{alg:alg1}
\begin{algorithmic}[1]
\REQUIRE Episode number $N$, policy class $\Pi$, initial state $\xini$, drift class $\cF$, diffusion class $\cG$, reward function $b$, confidence radius $\beta$, planning horizon $T$.
\STATE For each $n \in [N]$, determine a fixed measurement time sequence $0=t_n^0<\dots< t_n^{m_n} = T$. For any $0\leq k<m_n$, denote measurement gaps $\Delta_{n,k} := t_n^{k+1}-t_n^k$.
\FOR{episode $n = 1,\dots, N$}
\STATE Set confidence sets of $(f,g)$ as $\cP_n$, where \label{line:3}
\begin{align}
    \cP_n&:=\bigg\{(f,g) \in \cF \times \cG:\sum_{i=1}^{n-1} \sum_{k=0}^{m_i-1} \log p_{f,g}(x_i(t_i^{k+1})|u_i,x_i(t_i^{k}), \Delta_{i,k}) \notag \\
    &\quad \geq \max_{(f', g')\in \cF \times \cG}\sum_{i=1}^{n-1} \sum_{k=0}^{m_i-1} \log p_{f',g'}(x_i(t_i^{k+1})|u_i,x_i(t_i^{k}), \Delta_{i,k}) - \beta\bigg\}.\notag
\end{align}
\STATE (Randomized strategy) set $\hat\cP_n$ following Algorithm \ref{alg:alg3}. 
\STATE Set policy $u_n$, $f_n, g_n$ as $u_n, f_n, g_n = \argmax_{u\in \Pi,(f,g) \in \mathcal{P}_n \cap \hat\cP_n}R_{f, g}(u)$.
\STATE Execute the $n$-th episode and observe $x_n(t_n^0),\dots, x_n(t_n^{m_n})$.
\STATE (Randomized strategy) obtain additional observations to build $\hat\cP_n$ following Algorithm \ref{alg:alg3}.
\ENDFOR
\RETURN Randomly pick an $n\in[N]$ uniformly and output $\hat u$ as $u_n$. 
\end{algorithmic}
\end{algorithm*}

\section{CTRL with Maximum Likelihood Estimation}\label{sec:alg}

In this section, we introduce our algorithm, $\algbase$, as described in Algorithm~\ref{alg:alg1}. At a high level, each episode $n$ follows the standard optimistic model-based approach in CTRL~\citep{treven2024sense}. Specifically, the agent constructs a confidence set for the unknown drift $f^*$ and diffusion $g^*$, and then applies the principle of optimism to select a near-optimal policy $u_n \in \Pi$. Such an optimization step requires an oracle access to maximize over joint sets of policy $u$ and dynamics $f,g$, which are standard in literature \citep{treven2024efficient, jin2021bellman, abbasi2011improved}. The selected policy is executed in the environment, yielding a continuous-time trajectory $x_n(\cdot)$. The agent then collects informative observations from this trajectory to refine its confidence set for the next episode. This framework parallels optimistic approaches in discrete-time RL~\citep{abbasi2011improved, jin2019provably, russo2013eluder, jin2021bellman}, though applied to the continuous-time setting.

A key distinction in CTRL is that the agent must decide \emph{when} to observe the trajectory, since data is generated in continuous time. To address this, $\algbase$ introduces a sequence of measurement times $(t_n^k)_{k=1}^{m_n}$ for each episode $n$. The agent collects observations only at these time points, i.e., $\{x_n(t_n^k)\}_{k=1}^{m_n}$. Importantly, we allow the measurement times to be non-uniformly spaced, meaning the measurement gap $\Delta_{n,k} := t_n^{k+1} - t_n^k$ can vary across time.

\paragraph{Maximum Likelihood Estimation.}
To construct the confidence set, we begin by examining the learning objective in CTRL. Due to the Markov property of the Itô process, for any drift-diffusion pair $(f, g)$, policy $u$, state $x$, time $s$, and measurement gap $\Delta$, the following identity holds:
\begin{small}
\begin{align}
    V_{f,g}(u, x, s) = \mathbb{E}_{x' \sim p_{f,g}(u, x, \Delta)}\left[V_{f,g}(u, x', s + \Delta)\right] + \mathbb{E}_{x(\cdot) \sim p_{f,g}(u, x)}\left[\int_{t=0}^\Delta b(x(t), u(x(t))) \, dt\right]. \label{def:bellman}
\end{align}
\end{small}

A detailed derivation for \eqref{def:bellman} is provided in Appendix \ref{app:proof-bellman-eq}. This can be viewed as a continuous-time analogue of the Bellman equation. It implies that to evaluate the value function $V_{f^*, g^*}(u, \xini, 0)$, it suffices to estimate the marginal distribution $p_{f^*, g^*}(u, x, \Delta)$ and the trajectory distribution $p_{f^*, g^*}(u, x)$ over the interval $[0, \Delta]$. To estimate the first term in \eqref{def:bellman}, we construct a confidence set $\mathcal{P}_n$ based on MLE over historical observations, as defined in line~\ref{line:3} of Algorithm~\ref{alg:alg1}, inspired by existing works about MLE for discrete-time RL \citep{agarwal2020flambe, liu2022partially, wang2024more, wang2024model}. Specifically, $\mathcal{P}_n$ contains all drift-diffusion pairs $(f, g)$ whose likelihood on the conditional distribution $p_{f,g}(x_i(t_i^{k+1}) \mid u_i, x_i(t_i^k), \Delta_{i,k})$ is sufficiently close to that of the MLE solution. The proximity is controlled via a confidence radius parameter $\beta$.

We note that existing approaches~\citep{treven2024efficient, ouruai2025} typically aim to learn the underlying dynamics $(f^*, g^*)$ by directly estimating the drift term $f^*(x(t))$. In the corresponding deterministic setting where the diffusion term is zero, this drift is equivalent to the time derivative $\dot{x}(t)$. However, estimating this term from discrete and noisy trajectory data often requires non-trivial procedures like finite-difference approximations, which introduces additional algorithmic complexity and sensitivity to noise. In contrast, our approach  relies solely on the observed states at discrete measurement times, making the estimation process both simpler and more robust.

\paragraph{Randomized Additional Measurement.}
The second term in \eqref{def:bellman} involves an integral over the trajectory segment $x(\cdot)$ governed by the law $p_{f,g}(u, x)$. While this integral could in principle require full knowledge of the process, it can instead be estimated using a single sample point via a Monte Carlo-style approach. To implement this, we augment $\algbase$ with an additional randomized measurement step, as described in Algorithm~\ref{alg:alg3}. Specifically, for each interval $[t_i^k, t_i^{k+1})$, we sample a random time $\hat{t}_{i,k} = t_i^k + \hat{\Delta}_{i,k}$ uniformly from the interval and record the state $x_i(\hat{t}_{i,k})$. It is worth noting that this modification requires only one additional measurement per interval, effectively doubling the number of measurements compared to $\algbase$ without Algorithm~\ref{alg:alg3}. Using these additional samples, we construct a second confidence set $\hat{\mathcal{P}}_n$, based on the conditional distribution 
$p_{f,g}(x_i(\hat{t}_{i,k}) \mid u_i, x_i(t_i^k), \hat{\Delta}_{i,k})$
. Notably, our algorithm does not explicitly compute the integral in~\eqref{def:bellman}; instead, the randomized measurements serve to implicitly capture the integral's behavior by refining the confidence set around the true dynamics $(f^*, g^*)$. This enables us to eliminate the continuity assumption without compromising performance guarantees.

\begin{algorithm*}[t!]
\caption{Monte Carlo-Type Estimation}
\label{alg:alg3}
\begin{algorithmic}[1]
\REQUIRE Current episode $n$, history observations $\{x_i(t_i^k), x_i(t_i^k + \hat\Delta_{i,k})\}_{i=1,\dots, n-1, k=0,\dots, m_i-1}$, measurement gaps $\{\Delta_{n,k}\}_{k=0,\dots, m_n-1}$.
\STATE Build confidence set $\hat\cP_n$ as
\begin{align}
    \hat\cP_n&:=\bigg\{(f,g)\in \cF \times \cG:\sum_{i=1}^{n-1} \sum_{k=0}^{m_i-1} \log p_{f,g}(x_i(t_i^k + \hat\Delta_{i,k})|u_i,x_i(t_i^{k}), \hat\Delta_{i,k}) \notag \\
    &\quad \geq \max_{(f', g')\in \cF \times \cG}\sum_{i=1}^{n-1} \sum_{k=0}^{m_i-1} \log p_{f',g'}(x_i(t_i^k + \hat\Delta_{i,k})|u_i,x_i(t_i^{k}), \hat\Delta_{i,k}) - \beta\bigg\}.\notag
\end{align}
\STATE Set $\hat\Delta_{n,k}\sim\text{Unif}(0, \Delta_{n,k})$ for all $0\leq k<m_n$.
\RETURN Confidence set $\hat\cP_n$ and observations $x_n(\hat t_n^0 + \hat\Delta_{n,0}),\dots, x_n(t_n^{m_n-1} + \hat\Delta_{n,m_n-1})$.
\end{algorithmic}
\end{algorithm*}

\section{Analysis of CT-MLE}\label{sec:thm}

We present the theoretical analysis of Algorithm~\ref{alg:alg1}. We begin by introducing the following regularity assumption, which summarizes all the conditions we impose on the system dynamics.
\begin{assumption_i}\label{ass:reward}
The continuous-time system dynamics satisfy the following conditions:
\begin{itemize}[leftmargin=*]
    \item The reward function $b(x, u)$ and the initial state $\xini$ are known to the agent.
    \item The reward function is bounded: $0 \leq b(x, u) \leq 1$ for all $(x, u) \in \mathcal{X} \times \mathcal{U}$. Furthermore, for any trajectory $x(\cdot) \sim p_{f^*, g^*}(u, \xini)$, the cumulative reward is bounded as $\int_0^T b(x(t), u(x(t))) \, dt \leq 1$.
\end{itemize}
\end{assumption_i}

\begin{remark}
The boundedness assumption on $b$ is made for simplicity. For any general reward function $b$ satisfying $0 \leq b(x, u) \leq B_1$ and $\int_0^T b(x(t), u(x(t))) \, dt \leq B_2$, one can normalize the reward by defining $b' := b / \max(B_1, B_2)$ and apply the algorithm and analysis to $b'$.
\end{remark}

Next, we introduce the notion of \textit{total variance} for a policy $u$, a concept originating from discrete-time reinforcement learning~\citep{wang2024model, zhou2023sharp}, which serves as an instance-dependent measure of problem hardness.

\begin{definition_i}\label{def:var}
For any policy $u \in \Pi$, we define its total variance $\Var^u$ and the maximal total variance $\Var^{\Pi}$ as
\begin{equation}
    \Var^{u} := \VV_{x(\cdot)\sim p_{f^*, g^*}(u,\xini)} \left[\int_{0}^{T} b\bigl(x(t),u(x(t))\bigr)\,dt \right], \quad \Var^{\Pi} := \max_{u \in \Pi} \Var^u. \notag
\end{equation}
\end{definition_i}

By Assumption~\ref{ass:reward}, it immediately follows that $\Var^u \leq 1$ for any $u \in \Pi$. The total variance $\Var^u$ quantifies the uncertainty in the cumulative reward under the stochastic dynamics, and is tightly connected to the diffusion term $g$. The following proposition formally characterizes this dependence.

\begin{proposition_i}\label{prop:var}
Suppose the following conditions hold:
\begin{itemize}[leftmargin=*]
    \item The reward function $b$ is $L_b$-Lipschitz continuous: for all $x,x' \in \mathcal{X}$ and $y, y' \in \mathcal{U}$,
    \[
    |b(x, y) - b(x', y')| \leq L_b\left(\|x - x'\|_2 + \|y - y'\|_2\right).
    \]
    \item The drift $f \in \mathcal{F}$ is $L_f$-Lipschitz continuous, and the policy $u \in \Pi$ is $L_u$-Lipschitz continuous:
    \[
    \|f(x, y) - f(x', y')\|_2 \leq L_f\left(\|x - x'\|_2 + \|y - y'\|_2\right), \quad \|u(x) - u(y)\|_2 \leq L_u\|x - y\|_2.
    \]
    \item The diffusion term $g$ has bounded Frobenius norm: $\|g(x, y)\|_F \leq G$ for all $x \in \mathcal{X}$ and $y \in \mathcal{U}$.
\end{itemize}
Then, for any $u \in \Pi$, the total variance is bounded as
\begin{align}
    \Var^u \leq \min\left\{1, \, G^2 \cdot \frac{T L_b^2 (1 + L_u)}{2 L_f} \left(e^{2 L_f (1 + L_u) T} - 1\right) \right\}. \notag
\end{align}
\end{proposition_i}

Proposition~\ref{prop:var} shows that the total variance $\Var^u$ is controlled by the magnitude of the diffusion term $G$. In particular, in a deterministic environment ($G = 0$), we have $\Var^u = 0$ for all $u \in \Pi$. Furthermore, if the policy $u$ is less sensitive to its input (i.e., has small $L_u$), the total variance is also reduced. These observations support the use of $\Var^u$ as a meaningful measure of instance difficulty in continuous-time reinforcement learning.

Next, we recall the notion of the \textit{eluder dimension}~\citep{russo2013eluder, wang2023benefits, wang2024model, ouruai2025}, which we use to characterize the complexity of the system dynamics class $\mathcal{F} \times \mathcal{G}$. 
In addition to the eluder dimension, we also quantify the richness of the dynamics class through its \textit{bracketing numbers} \citep{geer2000empirical}, defined as follows.

\begin{definition_i}\label{def:eluder}
Let $\Psi$ be a class of real-valued functions defined on a domain $\mathcal{Y}$. The \emph{$\epsilon$-eluder dimension} $\mathrm{DE}_p(\Psi, \mathcal{Y}, \epsilon)$ is the length of the longest sequence $y^1, \ldots, y^L \subseteq \mathcal{Y}$ such that for all $t \in [L]$, there exists $\psi \in \Psi$ satisfying $\sum_{\ell=1}^{t-1} |\psi(y^\ell)|^p \leq \epsilon^p$ and $|\psi(y^t)| > \epsilon$.

In this work, we specify $\mathcal{Y} = \Pi \times \mathcal{X} \times [0, T]$ and define the function class $\Psi = \{\psi_{f,g}\}_{(f,g) \in \mathcal{F} \times \mathcal{G}}$, where
\begin{align}
    \psi_{f,g}(u,x,t) := \HH^2\left(p_{f,g}(u,x,t)\,\|\,p_{f^*, g^*}(u,x,t)\right). \notag
\end{align}
For notational convenience, we write $d_{1/\epsilon}$ to denote $\mathrm{DE}_1(\Psi, \mathcal{Y}, \epsilon)$.
\end{definition_i}

\begin{definition_i}\label{def:bracketing}
Let $\Upsilon$ be a class of real-valued functions defined on the domain $\cY \times \cX$.
For any functions $l_1,l_2 : \cY \times \cX \to \mathbb{R}$ satisfying $l_1(y,x) \le l_2(y,x)$ for all $(y,x) \in \cY \times \cX$, the \emph{bracket}
$[l_1,l_2] = \{\upsilon \in \Upsilon : l_1(y,x) \le \upsilon(y,x) \le l_2(y,x),\ \forall (y,x) \in \cY \times \cX\}$.
Given a norm $\|\cdot\|$ on functions over $\cY \times \cX$, the bracket $[l_1,l_2]$ is an \emph{$\epsilon$-bracket} if $\|l_2 - l_1\| \le \epsilon$.
The \emph{$\epsilon$-bracketing number} of $\Upsilon$ with respect to $\|\cdot\|$, denoted $\cN_{[]}\!\left(\epsilon,\Upsilon,\|\cdot\|\right)$, is the minimal number of $\epsilon$-brackets required to cover $\Upsilon$.

In this work, we take $\cY = \Pi \times \cX \times [0,T]$ and consider the function class $\Upsilon = \{\upsilon_{f,g}\}_{(f,g)\in\cF\times\cG}$ with the norm $\|\cdot\|$ defined by
\begin{align}
    \upsilon_{f,g}(u,x,t,x') := p_{f,g}(x' \mid u,x,t), \qquad 
    \| \upsilon \| = \sup_{(u,x,t) \in \cY}\int_{x'} |\upsilon(u,x,t,x')| dx'.
\end{align}
For notational convenience, we write $\cC_{1/\epsilon}$ to denote $\cN_{[]}\!\left(\epsilon,\Upsilon,\|\cdot\|\right)$.
\end{definition_i}

\begin{remark}
The function class $\Psi$ is chosen for analytical clarity. First, by assuming a known reward function (Assumption~\ref{ass:reward}), we isolate the core challenge to learning the unknown dynamics $(f^*, g^*)$. This allows for a focused analysis of how the measurement strategy and stochasticity affect regret. While a unified analysis incorporating the reward function is common in other settings \citep{jin2021bellman, he2021uniform}, its extension to continuous time is a nontrivial challenge deferred to future work. Second, using the squared Hellinger distance provides a direct analytical bridge between the statistical error of our estimator and the regret decomposition, which is central to the proof for the final regret bound. 
\end{remark}
\begin{remark}
    \citet{treven2024efficient} introduced a model complexity notion $\cI_N$ based on an external estimator for the epistemic uncertainty of $f^*, g^*$. In contrast, our eluder dimension requires no such estimator, offering a broader, self-contained characterization. \citet{ouruai2025} also considered eluder dimension in CTRL, but theirs targets only the nonlinearity in estimating $f^*$, while ours captures the nonlinearity of the full induced distribution $p_{f^*, g^*}$, yielding a more general measure.
\end{remark}

We show that several natural classes of $(f,g)$ admit a small eluder dimension $d_{1/\epsilon}$ and bracketing number $\cC_{1/\epsilon}$.  
\begin{proposition_i}\label{prop:qua}
Suppose the marginal density admits the quadratic form
\[
p_{f,g}(x' \mid u,x,t)
    = \big(\phi(u,x,t)^\top \mu_{f,g}(x')\big)^2,
    \qquad \phi,\, \mu_{f,g} \in \RR^d,
\]
and assume 
$\|\phi(u,x,t)\|_2 \le 1$ 
and 
$\int_{x'} \|\mu_{f,g}(x')\|_2^2\,dx' \le B$.
Then the corresponding $\psi_{f,g}$ and $\upsilon_{f,g}$ satisfy 
\[
d_{1/\epsilon} \lesssim d^2 \log\!\left(1 + \frac{B^2}{\epsilon^2}\right),
\qquad 
\cC_{1/\epsilon} = |\cF||\cG|.
\]
\end{proposition_i}

\begin{proposition_i}\label{prop:infinite}

Suppose the marginal density admits the quadratic representation
\[
p_{f,g}(x' \mid u,x,t)
    = \big(\phi(u,x,t)^\top M_{f,g}\,\mu(x')\big)^2,
    \qquad 
    \phi,\, \mu \in \RR^d,\; M_{f,g} \in \RR^{d\times d}.
\]
Assume
$\|\phi(u,x,t)\|_2 \le 1$,
$\|\mu(x')\|_2 \le \sqrt{B}$,
$\|M_{f,g}\|_F \le \sqrt{B}$,
each coordinate of $\phi$ and $\mu$ is nonnegative,
and the normalization $\int_{x'} [\mu(x')]_i\,dx' = 1$ holds for all coordinates. 
Then the corresponding $\psi_{f,g}$ and $\upsilon_{f,g}$ satisfy
\[
d_{1/\epsilon} \lesssim d^2 \log\!\left(1 + \frac{B^2}{\epsilon^2}\right),
\qquad
\cC_{1/\epsilon} \;\lesssim\;
\Big(\tfrac{3 d^2 B^3}{\epsilon}\Big)^{d \times d}.
\]

\end{proposition_i}

We now present our main theory.
\begin{theorem_i}\label{thm:2}
For any fixed grid $(t_n^k)$, define $\bDelta_n := \sqrt{\sum_{k=0}^{m_n-1} \Delta_{n,k}^2}$ and $\bbm_N := \sum_{n=1}^N m_n$. 
Given $0 < \delta < 1$, set $\iota:=\log(N/\delta)\log(\bbm_N)$, $\cC_{3\bbm_N}:=\cN_{[]}\!\left(1/(3\bbm_N),\Psi,\|\cdot\|\right)$ following Definition \ref{def:bracketing}. Then denote $\beta = 5\log(N \cC_{3\bbm_N}/\delta)$, $d_{\bbm_N}:=\mathrm{DE}_1(\Psi, \mathcal{Y}, 1/\bbm_N)$ and $d_{8\beta \bbm_N}:=\mathrm{DE}_1(\Psi, \mathcal{Y}, 1/(8\beta \bbm_N))$ following Definition \ref{def:eluder}, under Assumption \ref{ass:reward}, with probability at least $1 - 8\delta$, we have
\begin{align}
    \text{Regret}(N) \lesssim \iota\bigg( d_{8\beta \bbm_N} \beta + \sqrt{d_{\bbm_N} \beta \bigg( \sum_{n=1}^N \bDelta_n^2 + \sum_{n=1}^N \var^{u_n} \bigg)} \bigg). \label{regretbound1}
\end{align}
\end{theorem_i}
\begin{proof}[Proof sketch]
We summarize the main challenges and ideas behind the proof of Theorem~\ref{thm:2}.

\begin{itemize}[leftmargin = *]

\item The first challenge is the decomposition of $\text{Regret}(N)$, since the value function $V_{f,g}(u,x,t)$ is defined in continuous time and thus lacks the natural step-wise structure of discrete-time MDPs. We rely on the continuous-time one-step identity in \eqref{def:bellman}: by the Markov property, the future trajectory depends on the past only through the current state, so the distribution of $x(s+\Delta)$ is fully characterized by the transition density $p_{f,g}(u,x,\Delta)$. Applying this recursion on the measurement grid $\{t_n^k\}_{k=0}^{m_n}$ yields a discrete sequence of one-step relations, allowing the suboptimality gap $V_{f^*,g^*}(u^*,\xini,0) - V_{f^*,g^*}(u_n,\xini,0)$ to be decomposed into value gaps $V_{f_n,g_n}(u_n,x_n(t_n^k),t_n^k) - V_{f_n,g_n}(u_n,x_n(t_n^{k+1}),t_n^{k+1})$ and reward-integral gaps $\mathbb{E}\!\left[\int_0^{\Delta_{n,k}} b(x(t),u_n(t))\,dt \right] - \int_{t_n^k}^{t_n^{k+1}} b(x_n(t),u_n(t))\,dt$.

\item The value gaps can be controlled using standard techniques from discrete-time analyses. The reward-integral gaps, however, are new in continuous time. Bounding the integral $\mathbb{E}\!\left[\int_0^{\Delta_{n,k}} b(x(t),u_n(t))\,dt \right] - \int_{t_n^k}^{t_n^{k+1}} b(x_n(t),u_n(t))\,dt$ requires knowledge of the trajectory inside each interval, which in principle demands pointwise estimation of $p_{f^*,g^*}$. Since pointwise convergence is unattainable under typical learning guarantees, a direct approach is infeasible. To overcome this issue, Algorithm~\ref{alg:alg3} augments each interval with a single auxiliary observation sampled uniformly at time $\hat \Delta_{n,k}$. This randomization produces an unbiased Monte Carlo estimate of the reward integral and enables the construction of an additional likelihood-based confidence set that captures intra-interval behavior while keeping the measurement cost essentially unchanged.

\item The final step combines these estimates within a regret analysis that incorporates the variance term $\text{Var}^{u_n}$, which captures diffusion-driven fluctuations of the reward integral. These fluctuations accumulate at order $\Delta_{n,k}^2$, leading to the additional term $\sum_{n=1}^N \Delta_n^2$ in the final regret bound. This term is intrinsic to the continuous-time dynamics and has no analogue in the discrete-time setting.

\end{itemize}

\end{proof}

To the best of our knowledge, the resulting regret bound of Algorithm~\ref{alg:alg1} is the first \textit{instance-dependent second-order regret bound} established in CTRL. Notably, the dependence on $\var^{u_n}$ is independent of the measurement strategy, highlighting it as a fundamental quantity characterizing the intrinsic difficulty of the continuous-time system dynamics. We summarize several key remarks below.

\begin{remark}
The regret bound \eqref{regretbound1} remains unchanged as long as the total measurement budget $\bDelta_n$ is fixed. This implies that CTRL is \emph{robust} to different choices of measurement schedules, provided the total measurement effort remains the same. This aligns with recent observations~\citep{treven2024sense} suggesting that CTRL is relatively insensitive to the minimum measurement gap $\min_k \Delta_{n,k}$. In particular, while equidistant measurements may seem natural—as they mirror discrete-time RL—they are not the only strategy capable of achieving near-optimal regret guarantees.
\end{remark}

\begin{remark}
Many prior works on CTRL derive regret or sample complexity bounds that scale exponentially with the planning horizon $T$, i.e., contain terms of the form $\exp(T)$~\citep{treven2024efficient, ouruai2025}, making the bounds vacuous for large $T$. In contrast, our regret bound in~\eqref{regretbound1} depends on $T$ only \textit{logarithmically}, due to the use of the total variance $\var^{u_n}$, which is bounded by 1 under Assumption~\ref{ass:reward}. We emphasize that avoiding the exponential dependence on $T$ is made possible by analyzing the problem through the lens of total variance. Without this perspective, one would recover an exponential dependence on $T$, as shown in Proposition~\ref{prop:var}.
\end{remark}

Next we discuss a more refined version of regret bound and $\lambda$-total complexity of $\algbase$. 
\begin{corollary_i}
Using the notations defined in Theorem \ref{thm:2}, suppose there exists a constant $d > 0$ such that $d \geq \max\{d_{8\beta \bbm_N}, d_{\bbm_N}, \beta\}$. Then selecting equidistant measurements $\Delta_{n,k} = \Delta$, the regret is bounded as
\begin{align}
    \text{Regret}(N) \lesssim \log(N/\delta)\log(TN/\Delta) \big( d^2 + d\sqrt{NT\Delta + N\var^{\Pi}} \big). \notag
\end{align}
Furthermore, to find an $\epsilon$-optimal policy $\hat u$, the $\lambda$-total complexity is bounded, up to logarithmic factors, by
\begin{align} 
    (1-\lambda)\bigg(\frac{d^2}{\epsilon} + \frac{d^2 \var^\Pi}{\epsilon^2}\bigg) 
    + \lambda\frac{d^2 T^2}{\epsilon^2} 
    + \frac{(1-\lambda) d^2 T \Delta}{\epsilon^2} 
    + \bigg(\frac{d^2}{\epsilon} + \frac{d^2 \var^\Pi}{\epsilon^2}\bigg)\frac{\lambda T}{\Delta}. \label{eq:comple}
\end{align}
\end{corollary_i}
We have the following remarks about the total complexity \eqref{eq:comple}.  
\begin{remark}\label{remark:1}
When $\lambda = 0$, i.e., we only care about the episode complexity and ignore the measurement complexity, selecting the measurement gap as $\Delta = \var^{\Pi}/T$ yields an episode complexity of $d^2 \var^{\Pi} / \epsilon^2$. This result suggests that to fully exploit the instance-dependent property of Algorithm~\ref{alg:alg1}, it suffices to choose an instance-dependent measurement gap. In particular, achieving instance-adaptive performance requires measuring more frequently in less stochastic environments. Meanwhile, the measurement complexity becomes $d^2 T^2 / \epsilon^2$, which is independent of the specific problem instance.
\end{remark}

\begin{remark}
When $\lambda = 1$, i.e., we focus solely on the measurement complexity and ignore the episode complexity, the optimal choice is $\Delta = T$. The total measurement complexity is proportional to $\frac{d^2 \var^\Pi}{\epsilon^2}\cdot\frac{T}{\Delta}$.
To minimize this expression, $\Delta$ must be maximized. This implies a sparse sampling strategy where for each episode, we collect samples at the start and end points, $x(0)$ and $x(T)$, along with one additional sample at a random time $\hat{t} \in [0,T]$. This result highlights a theoretical trade-off, favoring many "measurement-cheap" episodes over a few "measurement-expensive" ones. Interestingly, the measurement complexity asymptotically matches the complexity when episode complexity is the sole focus $\lambda = 0$. This observation leads to an interesting conjecture: the problem instance influences only the episode complexity, but not the measurement complexity. Verifying the tightness of these bounds remains an open direction for future work.
\end{remark}

\section{Conclusion and Limitations}\label{sec:conc}
\paragraph{Conclusion.} In this work, we presented $\algbase$, a simple and general model-based algorithm for CTRL that learns through marginal density estimation rather than explicit dynamic modeling. Our approach leverages MLE with flexible function approximators, enabling compatibility with a wide range of policy classes and continuous-time settings. We introduced a randomized measurement strategy, including a Monte Carlo-style scheme that provides unbiased integral estimation while preserving measurement efficiency. Theoretically, we established regret bounds that reveal the benefit of instance-dependent measurement schedules, and we demonstrated that the regret can be made primarily dependent on total reward variance, effectively decoupling it from fixed measurement grids.

\paragraph{Limitations.}
While our work provides a theoretical foundation, several gaps remain. First, we assume access to general function approximators, but do not provide a computationally efficient, provably correct algorithm. A key next step is to develop an adaptive method that estimates variance online and sets measurement gaps accordingly. Second, our analysis relies on a simplified continuous-time structure for tractability, which may not hold in practice. Future work could identify realistic dynamics that still support Eluder-dimension-based analysis. Third, our framework assumes a known deterministic reward and stationary policy. Extending to stochastic rewards and time-varying policies $u(t,x)$ would require generalizing existing tools to the joint state-time domain. 

\newpage
\section*{Ethics Statement}
Our study develops and analyzes algorithms for continuous-time reinforcement learning (CTRL) using a theoretical SDE-based formulation and episodic learning protocol; it does not involve human subjects, personally identifiable information, or sensitive data, and all experiments are performed in simulator settings (standard RL environments) rather than on physical systems. The work focuses on algorithmic methods (Algorithm \ref{alg:alg1}, \ref{alg:alg3}) and formal analysis, not deployment, thereby avoiding direct safety risks in real-world control; nevertheless, we caution that applying any learned policy to safety-critical domains (e.g., robotics, healthcare, finance) should include appropriate risk assessment, domain-specific safeguards, and compliance checks.

\section*{Reproducibility statement}
We facilitate reproducibility by referencing precise locations of all necessary components: the formal problem setup (Section \ref{sec:setup}) and learning protocol, the complete algorithmic specification (Algorithm \ref{alg:alg1} and randomized measurement Algorithm \ref{alg:alg3}), and full theoretical details, assumptions, and proofs in the appendix (Appendix \ref{app:main} with supporting lemmas). Experimental settings, implementation specifics, and environment configurations are documented in the “Numerical Experiments” appendix (Appendix \ref{app:experiment}), including “Implementation Details,” main results, and ablations, with further clarifications in “Additional Details”. Together, these materials specify objectives, schedules, and measurement strategies sufficient to reproduce the reported results or re-create them under equivalent simulator conditions.

\bibliography{reference}
\bibliographystyle{iclr2026_conference}

\newpage
\appendix

\hypersetup{linkcolor=black, citecolor=mydarkblue, urlcolor=black}
\onecolumn

\section*{Contents of the Appendix}
\startcontents[section]
\printcontents[section]{l}{1}{\setcounter{tocdepth}{2}}
\vspace{20ex}

\hypersetup{linkcolor=mydarkblue, citecolor=mydarkblue,urlcolor=black}

\newpage

\section*{The Use of Large Language Models (LLMs)}

LLMs were used solely for language polishing; all ideas, analyses, and conclusions are the authors’ own, and the authors take full responsibility for the final text.

\section{Additional Results from Main Paper}

\subsection{Proof of Proposition \ref{prop:var}}\label{app:prop:var}
\begin{proof}
For any deterministic policy $u$, we have
\begin{align}
    \Var^{u}
    &= \EE_{x(\cdot)\sim p_{f^*, g^*}(u,\xini)}\left[\int_{0}^{T} b(x(t),u(x(t))) - \EE_{x(\cdot)\sim p_{f^*, g^*}(u,\xini)} \int_{0}^{T} b(x(t),u(x(t)))\right]^2. \notag
\end{align}
Applying the Cauchy–Schwarz inequality yields
\begin{align}
    \Var^{u}
    &\leq \EE_{x(\cdot)\sim p_{f^*, g^*}(u,\xini)}\left[\int_{0}^{T} \left(b(x(t),u(x(t))) - \EE_{x\sim p_t} b(x, u(x))\right)^2 dt\right] \notag \\
    &= \int_0^T \EE_{x\sim p_t}\left(b(x,u(x)) - \EE_{x'\sim p_t} b(x', u(x'))\right)^2 dt, \label{eq:var_integral}
\end{align}
where we denote $p_t = p_{f^*, g^*}(u, \xini, t)$ for simplicity.

For the integrand in \eqref{eq:var_integral}, by the Lipschitz continuity of $b$ and $u$, we have
\begin{align*}
    &\EE_{x\sim p_t}\left(b(x,u(x)) - \EE_{x'\sim p_t} b(x', u(x'))\right)^2 \\
    &= \EE_{x,x'\sim p_t}\left(b(x,u(x)) - b(x', u(x'))\right)^2 \\
    &\leq \EE_{x,x'\sim p_t} \left(L_b(\|x - x'\|_2 + L_u \|x - x'\|_2)\right)^2 \\
    &= L_b^2(1 + L_u)^2 \EE_{x,x'\sim p_t} \|x - x'\|_2^2 \\
    &= 2L_b^2(1 + L_u)^2 \, \EE_{x\sim p_t} \left\|x - \EE_{x'\sim p_t} x'\right\|_2^2.
\end{align*}

Define
\begin{align*}
    V(t) := \EE_{x\sim p_t} \left\|x - \EE_{x'\sim p_t} x'\right\|_2^2, \quad \mu(t) := \EE_{x\sim p_t} [x].
\end{align*}
Next we calculate the derivate of $V(t)$. First, by applying Itô’s formula to $\|x(t)\|^2$, we have
\begin{align}
    d\|x(t)\|_2^{2}&=2\langle x(t),f(x(t),u(x(t)))\rangle\,dt
              +\|g(x(t),u(x(t)))\|_F^{2}\,dt\notag \\
              &
              +2\langle x(t),g(x(t),u(x(t)))\,dw(t)\rangle.\notag
\end{align}
Then taking expectation for both side and using the fact $\EE d w(t) = 0$, we have
\begin{align}
    \frac{d}{dt}\EE\|x(t)\|_2^{2}=2\,\EE\!\langle x(t),f(x(t),u(x(t)))\rangle
    +\EE\|g(x(t),u(x(t)))\|_F^{2}.\notag
\end{align}
Next, we have
\begin{align}
    \frac{d}{dt}\|\mu(t)\|_2^{2}=2\bigl\langle\mu(t),\EE f(x(t),u(x(t)))\bigr\rangle.\notag
\end{align}
Then by the fact that $V(t)=\EE\|x(t)\|_2^{2}-\|\mu(t)\|_2^{2}$ we obtain
\begin{align*}
    \frac{d}{dt} V(t)
    &= 2\,\EE\left[\langle x(t) - \mu(t), f(x(t), u(x(t))) - f(\mu(t), u(\mu(t))) \rangle\right] + \EE\left[\|g(x(t), u(x(t)))\|_F^2\right] \\
    &\leq 2\,\EE \|x(t) - \mu(t)\|_2 \cdot \|f(x(t), u(x(t))) - f(\mu(t), u(\mu(t)))\|_2 + \EE \|g(x(t), u(x(t)))\|_F^2 \\
    &\leq 2L_f(1 + L_u) V(t) + G^2,
\end{align*}
where the last inequality follows from the Lipschitz continuity of $f$ and $u$.

Applying Grönwall’s lemma, we get
\begin{align}
    V(t) \leq \frac{G^2}{2L_f(1 + L_u)}\left(e^{2L_f(1 + L_u)t} - 1\right)
    \leq \frac{G^2}{2L_f(1 + L_u)}\left(e^{2L_f(1 + L_u)T} - 1\right). \label{eq:var_bound}
\end{align}

Substituting \eqref{eq:var_bound} into \eqref{eq:var_integral} completes the proof.
\end{proof}

\subsection{Examples of Continuous-Time Dynamics with Low Complexity}\label{app:examples}

In this section, we present several example continuous-time dynamic classes. 
We first consider the setting where $\cF$ and $\cG$ are finite.  
The following proposition shows that, under a \textit{quadratic} density model, both the eluder dimension and the bracketing number of the induced class are small.

\begin{proof}[Proof of Proposition \ref{prop:qua}]

\noindent\textbf{Eluder dimension.}
For simplicity, denote $y := (u,x,t)$.
By the definition of Hellinger distance, we have 
\begin{align}
    \HH^2(p_{f,g}(y) \| p_{f^*,g^*}(y)) 
    &= 1 - \int_x \sqrt{p_{f,g}(x|y) \cdot p_{f^*,g^*}(x|y)} \, dx \notag \\
    &= 1 - \int_x \phi(y)^\top \mu_{f,g}(x) \mu_{f^*,g^*}(x)^\top \phi(y) \, dx \notag \\
    &= 1 - \phi(y)^\top \left[\int_x \mu_{f,g}(x)\mu_{f^*,g^*}(x)^\top dx \right] \phi(y). \label{eq:hellinger_form}
\end{align}

Therefore, the squared Hellinger distance is a linear function of the feature matrix $\phi(y)\phi(y)^\top \in \mathbb{R}^{d \times d}$. Since $\|\phi(y)\|_2 \leq 1$, it follows that
\begin{align}
    \|\phi(y) \phi(y)^\top\|_F \leq 1. \label{eq:feature_bound}
\end{align}

Now we bound the Frobenius norm of the matrix inside the integral:
\begin{align}
    \left\|\int_x \mu_{f,g}(x)\mu_{f^*,g^*}(x)^\top dx \right\|_F
    &\leq \int_x \|\mu_{f,g}(x)\|_2 \cdot \|\mu_{f^*,g^*}(x)\|_2 \, dx \notag \\
    &\leq \left( \int_x \|\mu_{f,g}(x)\|_2^2 dx \right)^{1/2} \left( \int_x \|\mu_{f^*,g^*}(x)\|_2^2 dx \right)^{1/2} \notag \\
    &\leq B, \label{eq:kernel_bound}
\end{align}
where the last inequality uses the Cauchy–Schwarz inequality and the assumed boundedness of the $\mu$ functions.

Putting together the bounds in \eqref{eq:hellinger_form}, \eqref{eq:feature_bound}, and \eqref{eq:kernel_bound}, and invoking Proposition~19 in \citet{liu2022partially} and Proposition~6 in \citet{russo2013eluder}, we obtain
\begin{align}
    \mathrm{DE}_1(\Psi, \mathcal{Y}, \epsilon) 
    \leq \mathrm{DE}_2(\Psi, \mathcal{Y}, \epsilon) 
    \lesssim d^2 \log\!\left(1 + \frac{B^2}{\epsilon^2}\right). \notag
\end{align}

\noindent\textbf{Bracketing number.}
We take the brackets to be 
$[l_1, l_2] = [p_{f,g}(x \mid y), p_{f,g}(x \mid y)]$  
for each pair $(f,g)$.  
This collection is trivially a valid bracketing family, and therefore the $\epsilon$-bracketing number is bounded by the cardinality of the model class, i.e.,
\[
N_{[]}\!\left(\epsilon, \Psi, \|\cdot\|\right) \le |\mathcal{F}| \times |\mathcal{G}|.
\]

\end{proof}

Next, we consider the setting where the classes $\cF$ and $\cG$ may have infinite cardinality. 
We show that, even in this case, the induced model class still admits a small eluder dimension and a controlled bracketing number.

\begin{proof}[Proof of Proposition \ref{prop:infinite}]
\noindent\textbf{Eluder dimension.} 
Define $\mu_{f,g} := M_{f,g}\mu$.  Following the proof of Proposition \ref{prop:qua}, we can verify that 
\begin{align}
    \left\|\int_x \mu_{f,g}(x)\mu_{f^*,g^*}(x)^\top dx \right\|_F
    &\leq \int_x \|\mu_{f,g}(x)\|_2 \cdot \|\mu_{f^*,g^*}(x)\|_2 \, dx \notag \\
    &\leq \|M_{f,g}\|_F \|M_{f^*,g^*}\|_F\int_x \|\mu(x)\|_2 \cdot \|\mu(x)\|_2 \, dx\notag \\
    &\leq B^{3/2} \int_x \|\mu(x)\|_1 dx\notag \\
    & \leq dB^{3/2},\notag
\end{align}
where we use the fact that each $[\mu(x)]_i$ is a density function. Thus we can conclude that
\begin{align}
    d_{1/\epsilon}
    \lesssim d^2 \log\!\left(1 + \frac{d^2B^3}{\epsilon^2}\right). \notag
\end{align}

\noindent\textbf{Bracketing number.}  We now construct an $\epsilon$-bracketing set.  
For each pair $(l_1,l_2)$, we consider brackets of the form
\begin{align}
    &l \;=\; \big(\phi(u,x,t)^\top M_l\,\mu(x')\big)^2, 
    \qquad 
    M_l = ([M_l]_{i,j}) = 
    \begin{pmatrix}
        k_{1,1}\,\zeta & k_{1,2}\,\zeta & \cdots & k_{1,d}\,\zeta \\
        k_{2,1}\,\zeta & k_{2,2}\,\zeta & \cdots & k_{2,d}\,\zeta \\
        \vdots         & \vdots         & \ddots & \vdots         \\
        k_{d,1}\,\zeta & k_{d,2}\,\zeta & \cdots & k_{d,d}\,\zeta
    \end{pmatrix}, \notag \\[4pt]
    &\zeta := \frac{\epsilon}{3 d^2 B}, 
    \qquad 
    k_{i,j} \in 
    \Big\{
        -\big\lceil B/\zeta \big\rceil,\;
        -\big\lceil B/\zeta \big\rceil + 1,\;
        \dots,\;
        \big\lceil B/\zeta \big\rceil
    \Big\}
    \subset \mathbb{Z}.\notag
\end{align}

For any matrix $M$, define its upper bracket matrix 
$\tilde M$ by  
$[\tilde M]_{i,j} := \lceil[M]_{i,j} / \zeta \rceil \cdot \zeta$.  
By construction, $[\tilde M]_{i,j} \ge [M]_{i,j}$, and therefore
\begin{align}
    (\phi(u,x,t)^\top \tilde M\,\mu(x'))^2
    &= \left(\sum_{i,j} [\phi(u,x,t)]_{i,j}\,[\tilde M]_{i,j}\,[\mu(x')]_{i,j}\right)^2 \notag \\
    &\ge \left(\sum_{i,j} [\phi(u,x,t)]_{i,j}\,[M]_{i,j}\,[\mu(x')]_{i,j}\right)^2  \\
    &= (\phi(u,x,t)^\top M\,\mu(x'))^2. \notag
\end{align}

We now bound the bracket width. For any $u,x,t$,
\begin{align}
    &\int_{x'} 
        \big|
            (\phi(u,x,t)^\top \tilde M\,\mu(x'))^2
            -
            (\phi(u,x,t)^\top M\,\mu(x'))^2
        \big| dx' \notag \\
    &= \int_{x'}
       \Big(\phi(u,x,t)^\top \tilde M\,\mu(x')
            +
            \phi(u,x,t)^\top M\,\mu(x')\Big)
       \left|
            \sum_{i,j}
                [\phi(u,x,t)]_{i,j}
                \big([\tilde M]_{i,j} - [M]_{i,j}\big)
                [\mu(x')]_{i,j}
       \right|
       dx' \notag \\
    &\le \sqrt{B}\big(2\sqrt{B}+d^2\zeta^2\big)\,\zeta
       \cdot
       \int_{x'} \sum_{i,j} [\mu(x')]_{i,j}\, dx' \notag \\
    &\le d^2\sqrt{B}\big(2\sqrt{B}+d^2\zeta^2\big)\,\zeta \notag \\
    &\le \epsilon.
\end{align}

Thus, the constructed family forms an $\epsilon$-bracketing set.  
Its cardinality is bounded by
\begin{align}
    \left(\frac{2B}{\zeta}\right)^{d \times d}
    = O\!\left(\frac{3 d^2 B^3}{\epsilon}\right)^{d \times d}.
\end{align}
\end{proof}

\subsection{Construction example for Proposition \ref{prop:qua}}\label{app:justification_quadratic_sde}

The quadratic form presented in Proposition~\ref{prop:qua} is well-motivated and can be constructed explicitly. For simplicity, let us consider a quadratic density function $p(y \mid t)$ that is independent of policy $u$ and state $x$. Let us assume $p(y \mid t) = (\phi(t)^\top \mu(y))^2$ with $\phi(t), \mu(y) \in \mathbb{R}^2$. Then we can take $\phi(t) = (\cos(t), \sin(t))^\top$ and $\mu(y) = (c_1 e^{-y^2}, c_2 y e^{-y^2})^\top$, where $c_1 = (2/\pi)^{1/4}$ and $c_2 = 2(2/\pi)^{1/4}$. The resulting density is
\begin{align}
p(y \mid t) =
\left[
(2/\pi)^{1/4} \cos(t) e^{-y^2} + 2(2/\pi)^{1/4} \sin(t) y e^{-y^2}
\right]^2. \notag
\end{align}

This defines a valid, time-evolving probability density because the basis functions in $\mu(y)$ are orthonormal, satisfying $\int \mu_i(y) \mu_j(y) \, dy = \delta_{ij}$, and the coefficients in $\phi(t)$ satisfy $\cos^2(t) + \sin^2(t) = 1$, ensuring $\int p(y \mid t) \, dy = 1$ for all $t$.

The drift $f(y,t)$ and diffusion $g(y,t)$ of an SDE generating this density can be obtained from the Fokker--Planck equation $\partial_t p = -\partial_y(f p) + \frac{1}{2} \partial_y^2(g^2 p)$. Setting $g = 1$, we have
\begin{align}
f(y,t) =
\frac{1}{p(y \mid t)}\int_{-\infty}^y
\left[\frac{1}{2}\frac{\partial^2 p}{\partial z^2} - \frac{\partial p}{\partial t}\right] dz. \notag
\end{align}

Although the resulting drift does not have a simple closed form, it can be computed explicitly given $p(y \mid t)$. In this sense, the SDE with $(f,1)$ provides a valid example satisfying Proposition \ref{prop:qua}.

Additionally, though classical SDEs do not directly yield the quadratic form of Proposition \ref{prop:qua}, we can identify related structures in well-known processes. The classical Ornstein--Uhlenbeck (OU) process provides a case that satisfies a linear form $p(y\mid t) = \phi(t)^\top \mu(y)$. Its spectral representation (see Chapter 5.4 of \citet{risken1996fokker}) is given by
\begin{align}
p(y\mid t,y_0) = \sum_{n=0}^\infty e^{\lambda_n t}\psi_n(y)\psi_n(y_0), \notag
\end{align}
where $\lambda_n = -n\gamma$ with $\gamma>0$ denoting the mean-reversion rate, and $\{\psi_n(y)\}$ are the Hermite eigenfunctions of the corresponding OU generator. This representation constitutes a linear inner product in an infinite-dimensional space, illustrating that such structures arise naturally even when the SDE itself has simple drift and diffusion coefficients.

\subsection{Derivation for \eqref{def:bellman}}\label{app:proof-bellman-eq}

Starting from the definition of the value function:

\begin{align*}
V_{f,g}(u, x, s) := \mathbb{E}_{x(\cdot) \sim p_{f,g}(u, x_{\text{ini}})} \left[ \int_{t=s}^T b(x(t), u(x(t))) \, dt \,\bigg|\, x(s) = x \right]
\end{align*}

We split the time integral at $s + \Delta$:

\begin{align*}
V_{f,g}(u, x, s) &= \mathbb{E}_{x(\cdot) \sim p_{f,g}(u, x_{\text{ini}})} \left[ \int_{t=s}^{s+\Delta} b(x(t), u(x(t))) \, dt + \int_{t=s+\Delta}^{T} b(x(t), u(x(t))) \, dt\,\bigg|\, x(s) = x \right]
\end{align*}

For an Itô process, the future evolution $\{x(t)\}_{t \geq s+\Delta}$ depends only on $x(s+\Delta)$ and is conditionally independent of the past $\{x(t)\}_{t \leq s}$ given $x(s+\Delta)$. Therefore, we can apply the tower property of conditional expectation:

\begin{align*}
V_{f,g}(u, x, s) &= \mathbb{E}_{x(\cdot) \sim p_{f,g}(u, x)}\left[\int_{t=s}^{s+\Delta} b(x(t), u(x(t))) \, dt\right] \\
&\quad + \mathbb{E}_{x' \sim p_{f,g}(u, x, \Delta)}\left[\mathbb{E}_{x(\cdot) \sim p_{f,g}(u, x')}\left[\int_{t=s+\Delta}^{T} b(x(t), u(x(t))) \, dt \,\bigg|\, x(s+\Delta) = x'\right]\right]
\end{align*}

By Markov property of the Itô SDE and the definition of the value function $V_{f,g}(u, x', s+\Delta)$, we obtain:

\begin{align*}
V_{f,g}(u, x, s) &= \mathbb{E}_{x(\cdot) \sim p_{f,g}(u, x)}\left[\int_{0}^{\Delta} b(x(t), u(x(t))) \, dt\right] + \mathbb{E}_{x' \sim p_{f,g}(u, x, \Delta)}\left[V_{f,g}(u, x', s + \Delta)\right],
\end{align*}

which is precisely \eqref{def:bellman}.

\newpage 
 
\section{Proof of Main Theorem}\label{app:main}

We first define several notations for convenience. Let
\begin{align*}
    p^*_n(x,t) &:= p_{f^*, g^*}(u_n, x, t), & p_n(x,t) &:= p_{f_n, g_n}(u_n, x, t), \\
    V^*_n(x,t) &:= V_{f^*, g^*}(u_n, x, t), & V_n(x,t) &:= V_{f_n, g_n}(u_n, x, t).
\end{align*}

\subsection{Auxiliary lemmas}

The following lemma shows that the difference in expectations between two distributions can be bounded by the variance of one distribution and their Hellinger distance, which plays a key role in deriving our variance-dependent regret bound.

\begin{lemma}[\citealt{wang2024model, wang2024more}] \label{lemma:mean_to_variance}
Let $p, q \in \Delta([0,1])$ be two probability distributions over $[0,1]$. Define the variance of $p$ as
\begin{align*}
    \mathrm{VaR}_p := \mathbb{E}_{x \sim p} \left[(x - \mathbb{E}_{x \sim p}[x])^2\right].
\end{align*}
Then the following inequality holds:
\begin{align}
    \left| \mathbb{E}_{x \sim p}[x] - \mathbb{E}_{x \sim q}[x] \right|
    \lesssim \sqrt{\mathrm{VaR}_p \cdot \HH^2(p \,\|\, q)} + \HH^2(p \,\|\, q). \notag
\end{align}
\end{lemma}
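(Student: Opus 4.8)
## Proof Proposal for Lemma~\ref{lemma:mean_to_variance}

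The plan is to reduce everything to a weighted Cauchy--Schwarz argument after splitting the integrand according to whether the pointwise ratio of densities is large or small. Write $\Delta m := \mathbb{E}_{x\sim p}[x] - \mathbb{E}_{x\sim q}[x] = \int_0^1 x\,(p(x) - q(x))\,dx$. The elementary identity $p - q = (\sqrt{p} - \sqrt{q})(\sqrt{p} + \sqrt{q})$ lets me write
\begin{align}
    |\Delta m| \;\le\; \int_0^1 |x|\,\bigl|\sqrt{p(x)} - \sqrt{q(x)}\bigr|\,\bigl(\sqrt{p(x)} + \sqrt{q(x)}\bigr)\,dx. \notag
\end{align}
Now apply Cauchy--Schwarz to split this into the Hellinger factor $\bigl(\int (\sqrt p - \sqrt q)^2\bigr)^{1/2} = \sqrt{2\,\HH^2(p\|q)}$ and the remaining factor $\bigl(\int x^2 (\sqrt p + \sqrt q)^2\,dx\bigr)^{1/2}$. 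Expanding $(\sqrt p + \sqrt q)^2 \le 2p + 2q$, the second factor is bounded by $\sqrt{2\,\mathbb{E}_{x\sim p}[x^2] + 2\,\mathbb{E}_{x\sim q}[x^2]}$. This already gives $|\Delta m| \lesssim \sqrt{\HH^2(p\|q)}$ since the second moments are at most $1$, but it is not yet variance-dependent — that is the crux of the lemma.

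To bring in $\mathrm{VaR}_p$ rather than a bare second moment, I would center first: since $|\Delta m|$ is unchanged under replacing $x$ by $x - c$ for any constant $c$ \emph{in the $p$ integral}, but not in the $q$ integral, I instead use the decomposition $x(p-q) = (x - \mu_p)(p - q) + \mu_p(p-q)$ where $\mu_p := \mathbb{E}_{x\sim p}[x]$. The second piece contributes $|\mu_p|\cdot\bigl|\int(p-q)\bigr| = 0$ since both integrate to $1$. So
\begin{align}
    |\Delta m| \;\le\; \int_0^1 |x - \mu_p|\,\bigl|\sqrt p - \sqrt q\bigr|\,\bigl(\sqrt p + \sqrt q\bigr)\,dx. \notag
\end{align}
Splitting $\sqrt p + \sqrt q \le 2\sqrt p + |\sqrt q - \sqrt p|$, the $2\sqrt p$ term combined with Cauchy--Schwarz against the Hellinger weight $(\sqrt p - \sqrt q)^2$ gives $2\sqrt{\mathbb{E}_{x\sim p}[(x-\mu_p)^2]}\cdot\sqrt{2\HH^2} = 2\sqrt{2}\,\sqrt{\mathrm{VaR}_p\cdot\HH^2(p\|q)}$. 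For the leftover term $\int |x - \mu_p|\,(\sqrt p - \sqrt q)^2\,dx$, bound $|x - \mu_p| \le 1$ (since $x,\mu_p \in [0,1]$), which yields $\int(\sqrt p - \sqrt q)^2 = 2\HH^2(p\|q)$. Adding the two contributions gives exactly the claimed bound $|\Delta m| \lesssim \sqrt{\mathrm{VaR}_p\cdot\HH^2(p\|q)} + \HH^2(p\|q)$.

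The main obstacle is recognizing that one must \emph{center against $\mu_p$ before} applying Cauchy--Schwarz — a naive application produces only the second-moment bound, which cannot be improved to a variance bound in general (e.g.\ when $p$ is a point mass far from $0$ but $\mathrm{VaR}_p = 0$). Once the centering trick is in place, the rest is a routine splitting of $\sqrt p + \sqrt q$ and two applications of Cauchy--Schwarz; the boundedness of the support $[0,1]$ is used only to control the residual $\HH^2$ term and to ensure the second moments are finite. Since this is a restatement of a result from \citet{wang2024model, wang2024more}, I would also note that one may simply cite their proof, but the self-contained argument above is short enough to include.
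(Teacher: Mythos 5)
Your argument is correct: centering the integrand at $\mu_p$, factoring $p-q=(\sqrt p-\sqrt q)(\sqrt p+\sqrt q)$, splitting $\sqrt p+\sqrt q\le 2\sqrt p+|\sqrt p-\sqrt q|$, and applying Cauchy--Schwarz gives $|\Delta m|\le 2\sqrt 2\,\sqrt{\mathrm{VaR}_p\,\HH^2(p\|q)}+2\HH^2(p\|q)$, exactly the claimed bound, with the support bound $[0,1]$ used only where you say it is. The paper itself offers no proof of this lemma (it is imported by citation from \citealt{wang2024model, wang2024more}), and your self-contained derivation follows the same standard centering-plus-Cauchy--Schwarz route as those sources, so there is nothing to reconcile.
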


The following lemma provides a concentration inequality for martingale difference sequences without boundedness assumptions, which is essential for handling heavy-tailed or unbounded noise in our analysis.

\begin{lemma}[Unbounded Freedman's inequality, \cite{dzhaparidze2001bernstein, fan2017martingale}]
\label{lemma:concen_var}
Let $\{x_i\}_{i=1}^n$ be a stochastic process adapted to a filtration $\{\mathcal{G}_i\}_{i=1}^n$, where $\mathcal{G}_i = \sigma(x_1, \dots, x_i)$. Suppose $\mathbb{E}[x_i \mid \mathcal{G}_{i-1}] = 0$ and $\mathbb{E}[x_i^2 \mid \mathcal{G}_{i-1}] < \infty$ almost surely. Then, for any $a, v, y > 0$, we have
\begin{align*}
    \mathbb{P} \left( \sum_{i=1}^n x_i > a,\ 
    \sum_{i=1}^n \left( \mathbb{E}[x_i^2 \mid \mathcal{G}_{i-1}] + x_i^2 \cdot \ind\{|x_i| > y\} \right) < v^2 \right)
    \le \exp\left( \frac{-a^2}{2(v^2 + ay/3)} \right).
\end{align*}
Equivalently, with probability at least $1 - \delta$, the following high-probability bound holds:
\begin{align}
    \sum_{i=1}^{n} x_i
    \leq
    \sqrt{2 \sum_{i=1}^{n} \left(
        \mathbb{E}[x_i^2 \mid \mathcal{G}_{i-1}]
        + x_i^2 \cdot \ind\{|x_i| > y\}
    \right) \log(1/\delta)} 
    + \frac{y}{3} \log(1/\delta). \notag
\end{align}
\end{lemma}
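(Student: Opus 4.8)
The plan is to establish the tail bound by the classical exponential-supermartingale (Chernoff) method, the only nonstandard ingredient being a single pointwise exponential inequality that absorbs the large-deviation contribution $x_i^2\ind\{|x_i|>y\}$ into the variance budget. As a first reduction I would pass to a one-sided truncation. Since the lemma concerns the upper tail $\sum_i x_i>a$, and since enlarging the conditional-variance surrogate $V_i:=\EE[x_i^2\mid\mathcal{G}_{i-1}]+x_i^2\ind\{|x_i|>y\}$ only shrinks the conditioning event $\{\sum_i V_i<v^2\}$, it suffices to prove the bound with the smaller surrogate $V_i':=\EE[x_i^2\mid\mathcal{G}_{i-1}]+x_i^2\ind\{x_i>y\}$; the inclusion $\{\sum_i V_i<v^2\}\subseteq\{\sum_i V_i'<v^2\}$ then recovers the stated statement.

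The crux is the elementary inequality: for every $x\in\RR$, every $0<\lambda<3/y$, and $c:=\tfrac{\lambda^2/2}{1-\lambda y/3}$,
\[
  \exp\!\bigl(\lambda x - c\,x^2\ind\{x>y\}\bigr)\;\le\;1+\lambda x+c\,x^2 .
\]
I would verify this in two regimes. For $x\le y$ the indicator vanishes and this reduces to the standard Bernstein bound $e^{\lambda x}\le 1+\lambda x+cx^2$, which follows from $e^u-1-u\le \tfrac{u^2/2}{1-u/3}$ for $0\le u<3$ and $e^u-1-u\le u^2/2$ for $u\le 0$, using $\lambda x\le\lambda y<3$ and $c\ge\lambda^2/2$. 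For $x>y$ the quadratic penalty is active; setting $s:=\lambda x-cx^2$, one checks that $s$ is globally bounded by $s_{\max}=\tfrac12(1-\lambda y/3)<\tfrac12$, so either $s\le 0$, giving $e^s\le 1\le 1+\lambda x+cx^2$, or $0<s\le\tfrac12$, giving $e^s\le 1+s+s^2$; the estimate $s^2\le 2cx^2$ (equivalently $(\lambda-cx)^2\le 2c$, valid since $0<\lambda-cx<\lambda$ and $2c\ge\lambda^2$ on this range) then upgrades $1+s+s^2$ to $1+\lambda x+cx^2$.

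Granting the pointwise inequality, I take conditional expectations: because $\EE[x_i\mid\mathcal{G}_{i-1}]=0$, we get $\EE[\exp(\lambda x_i-c\,x_i^2\ind\{x_i>y\})\mid\mathcal{G}_{i-1}]\le 1+c\,\EE[x_i^2\mid\mathcal{G}_{i-1}]\le\exp(c\,\EE[x_i^2\mid\mathcal{G}_{i-1}])$, so that $M_n:=\exp\bigl(\lambda\sum_{i\le n}x_i-c\sum_{i\le n}V_i'\bigr)$ is a supermartingale with $\EE M_n\le M_0=1$ (here $\EE[x_i^2\mid\mathcal{G}_{i-1}]$ is predictable and cancels the corresponding term in $V_i'$). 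A Markov step gives $\mathbb{P}(\sum_i x_i>a,\ \sum_i V_i'<v^2)\le\exp(cv^2-\lambda a)$, since $M_n>\exp(\lambda a-cv^2)$ on that event. Optimizing over $\lambda\in(0,3/y)$, the admissible choice $\lambda=a/(v^2+ay/3)$ yields $cv^2-\lambda a=-\tfrac{a^2}{2(v^2+ay/3)}$, which is exactly the claimed tail bound. The high-probability form then follows by setting the tail equal to $\delta$ and solving the resulting quadratic in $a$; the randomness of $\sum_i V_i$ (as opposed to a fixed $v^2$) is handled by a routine peeling argument over a dyadic grid of variance levels, affecting only lower-order logarithmic factors.

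I expect the main obstacle to be precisely the $x>y$ branch of the pointwise inequality: unlike the bounded Freedman setting one cannot truncate $x_i$ outright, and one must confirm that the negative quadratic $-cx^2$ inside the exponential compensates the possibly large linear term $\lambda x$ \emph{uniformly} in $x$. The decisive structural fact is that $s=\lambda x-cx^2$ is bounded above by $s_{\max}<\tfrac12$ globally, which is what makes $e^s\le 1+s+s^2$ applicable and lets the surrogate variance term $cx^2$ close the remaining gap.
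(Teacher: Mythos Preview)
The paper does not supply its own proof of this lemma; it is quoted as a cited result from \cite{dzhaparidze2001bernstein, fan2017martingale}. Your proposal therefore provides more than the paper itself contains, and your exponential-supermartingale strategy built around the pointwise inequality $\exp(\lambda x - cx^2\ind\{x>y\})\le 1+\lambda x+cx^2$ is the device those references use. The two-regime verification (the Bernstein estimate $e^u-1-u\le \tfrac{u^2/2}{1-u/3}$ for $x\le y$; the global cap $s_{\max}=\tfrac12(1-\lambda y/3)<\tfrac12$ together with $e^s\le 1+s+s^2$ and $(\lambda-cx)^2<\lambda^2\le 2c$ for $x>y$) is correct, as is the supermartingale step and the optimization $\lambda=a/(v^2+ay/3)$.

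One minor remark on the high-probability display: directly inverting the tail bound at fixed $v$ yields the root $a^*=\tfrac{y}{3}\log(1/\delta)+\sqrt{\bigl(\tfrac{y}{3}\log(1/\delta)\bigr)^2+2v^2\log(1/\delta)}$, which upper-bounds by $\sqrt{2v^2\log(1/\delta)}+\tfrac{2y}{3}\log(1/\delta)$ rather than the stated $\tfrac{y}{3}$ constant; this is a cosmetic looseness in the lemma's statement, not a flaw in your argument. Your observation that passing to random $\sum_i V_i$ on the right-hand side requires a peeling argument is also accurate.
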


The next lemma bounds the sum of truncated random variables in terms of their conditional expectations, which is useful for controlling tail contributions in martingale-adapted processes.

\begin{lemma}[Lemma 8, \citealt{zhang2022horizon}]
\label{lemma:concen_expect}
Let $\{x_i\}_{i = 1}^n$ be a nonnegative stochastic process adapted to a filtration $\{\mathcal{G}_i\}_{i \ge 1}$, i.e., $x_i \ge 0$ almost surely. Then, for any $\delta \in (0,1)$, with probability at least $1 - \delta$, we have
\begin{align}
    \sum_{i=1}^{n} \min\{x_i, y\} 
    \leq 4y \log(4/\delta) 
    + 4 \log(4/\delta) \sum_{i=1}^{n} \mathbb{E}[x_i \mid \mathcal{G}_{i-1}].\notag 
\end{align}
\end{lemma}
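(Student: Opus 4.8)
The plan is to reduce the statement to a standard exponential-supermartingale (Chernoff-type) argument applied to the truncated variables $z_i := \min\{x_i, y\}$, which take values in $[0, y]$. The only place the original (possibly unbounded) $x_i$ enters the final bound is through the predictable sum $\sum_i \mathbb{E}[x_i \mid \mathcal{G}_{i-1}]$; since $z_i \le x_i$ almost surely, we immediately have $\mathbb{E}[z_i \mid \mathcal{G}_{i-1}] \le \mathbb{E}[x_i \mid \mathcal{G}_{i-1}]$, so it suffices to control $\sum_i z_i$ in terms of this larger compensator.

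First I would establish a one-step conditional moment bound using convexity. For any $\lambda > 0$ and any $z \in [0, y]$, convexity of $t \mapsto e^{\lambda t}$ gives the chord inequality $e^{\lambda z} \le 1 + \frac{z}{y}(e^{\lambda y} - 1)$. Taking the conditional expectation and applying $1 + u \le e^u$ yields
\[
\mathbb{E}\!\left[e^{\lambda z_i} \mid \mathcal{G}_{i-1}\right] \le \exp\!\left(\frac{e^{\lambda y} - 1}{y}\,\mathbb{E}[x_i \mid \mathcal{G}_{i-1}]\right).
\]

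Next I would form the martingale $M_k := \prod_{i=1}^{k} e^{\lambda z_i} / \mathbb{E}[e^{\lambda z_i} \mid \mathcal{G}_{i-1}]$, which satisfies $\mathbb{E}[M_k \mid \mathcal{G}_{k-1}] = M_{k-1}$ and hence $\mathbb{E}[M_n] = 1$. Markov's inequality gives $\mathbb{P}(M_n \ge 1/\delta) \le \delta$, so with probability at least $1 - \delta$ we have $M_n < 1/\delta$. Lower-bounding the denominator of $M_n$ by the one-step bound above converts this event into
\[
\lambda \sum_{i=1}^{n} z_i \le \log(1/\delta) + \frac{e^{\lambda y} - 1}{y} \sum_{i=1}^{n} \mathbb{E}[x_i \mid \mathcal{G}_{i-1}],
\]
and dividing by $\lambda$ (for instance $\lambda = 1/y$, which makes the compensator coefficient $(e^{\lambda y}-1)/(\lambda y) = e - 1$ and the offset $y\log(1/\delta)$) produces a bound of exactly the claimed shape, $\sum_i \min\{x_i, y\} \lesssim y \log(1/\delta) + \sum_i \mathbb{E}[x_i \mid \mathcal{G}_{i-1}]$.

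The main obstacle is matching the precise $4\log(4/\delta)$ constants rather than the loose $e-1$ and $\log(1/\delta)$ that a single fixed $\lambda$ delivers. Because $\lambda$ must be chosen before observing the data, one cannot tune it to the random compensator $S := \sum_i \mathbb{E}[x_i \mid \mathcal{G}_{i-1}]$; obtaining a leading coefficient close to the stated value would require an $S$-dependent $\lambda$. I would resolve this with a peeling/stratification argument over a geometric grid of candidate values of $S$, applying the fixed-$\lambda$ bound on each stratum and taking a union bound, with the grid contributing the inflation from $\log(1/\delta)$ to $\log(4/\delta)$ and the overall factor $4$. Alternatively, one simply accepts the absolute constants from the single fixed-$\lambda$ estimate, which already realizes the claimed form up to constants and is all that the downstream regret analysis needs.
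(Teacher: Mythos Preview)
Your argument is correct, and in fact the paper does not supply its own proof of this lemma at all --- it is quoted verbatim from \cite{zhang2022horizon} as a black box. So there is nothing to compare against in this paper; your Chernoff/exponential-supermartingale derivation is exactly the standard route to such multiplicative concentration bounds for nonnegative adapted sequences.

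One clarification: your worry about matching the constants is misplaced in the opposite direction from what you suggest. With $\lambda = 1/y$ your bound reads
\[
\sum_{i=1}^n \min\{x_i,y\} \;\le\; y\log(1/\delta) + (e-1)\sum_{i=1}^n \mathbb{E}[x_i\mid\mathcal{G}_{i-1}],
\]
and since $e-1 < 4\log 4 \le 4\log(4/\delta)$ and $\log(1/\delta) < 4\log(4/\delta)$ for all $\delta\in(0,1)$, this is \emph{strictly stronger} than the stated inequality. No peeling over a grid of compensator values is needed; the single fixed-$\lambda$ estimate already implies the lemma as written. The peeling device you describe is what one uses to get a \emph{data-dependent} coefficient on the compensator (approaching $1$ as $S$ grows), but the stated lemma has a universal coefficient $4\log(4/\delta)\ge 4\log 4$, which your cruder bound comfortably beats.
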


We also include the following two auxiliary lemmas that will be used in our analysis.
\begin{lemma}[Lemma 11, \citealt{wang2024model}]
\label{lemma:wang}
Let $G > 0$ and $a < G/2$ be positive constants. Let $\{C_i\}_{i=0}^{M}$ be a sequence of positive real numbers, where $M = \lceil \log_2(H/G) \rceil$, satisfying:
\begin{itemize}[leftmargin=*]
    \item $C_i \leq 2^i G + \sqrt{a C_{i+1}} + a$ for all $i \geq 0$;
    \item $C_i \leq H$ for all $i \geq 0$, where $H > 0$ is a positive constant.
\end{itemize}
Then it holds that $C_0 \leq 4G$.
\end{lemma}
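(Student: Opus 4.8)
The plan is to prove the stronger \emph{per-level} statement $C_i \leq 4\cdot 2^i G$ for every $0 \leq i \leq M$ by backward induction on $i$, and then read off the claim from the $i=0$ case. The point is that the target $C_0 \leq 4G$ cannot be closed directly: the recursion couples $C_i$ to $C_{i+1}$, so one must carry a level-dependent bound that grows like $2^i G$, which both matches the leading $2^i G$ term in the recursion and lets the induction propagate downward from the top level $M$.

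For the base case $i = M$, I would use the second hypothesis $C_M \leq H$ together with $M = \lceil \log_2(H/G)\rceil$, which gives $2^M \geq H/G$ and hence $2^M G \geq H \geq C_M$. Thus $C_M \leq 2^M G \leq 4\cdot 2^M G$, so the ansatz holds at the top level, and no recursion beyond level $M$ is needed (the step below only ever invokes the recursion at indices $0,\dots,M-1$, where $C_{i+1}$ is defined).

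For the inductive step, assume $C_{i+1} \leq 4\cdot 2^{i+1} G = 8\cdot 2^i G$. Substituting into the recursion and using $a < G/2$ to bound both the square-root term and the additive $a$,
\begin{align}
C_i &\leq 2^i G + \sqrt{a \cdot 8 \cdot 2^i G} + a < 2^i G + \sqrt{8\cdot(G/2)\cdot 2^i G} + G/2 = 2^i G + 2\cdot 2^{i/2} G + G/2. \notag
\end{align}
It then remains to check that $2\cdot 2^{i/2} + 1/2 \leq 3\cdot 2^i$ for all $i \geq 0$; writing $x = 2^{i/2} \geq 1$ this is $3x^2 - 2x - 1/2 \geq 0$, which holds at $x = 1$ (value $1/2$) and has positive derivative $6x - 2 > 0$ thereafter, hence holds for all $x \geq 1$. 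This yields $C_i \leq 4\cdot 2^i G$, closing the induction, and taking $i = 0$ gives $C_0 \leq 4G$.

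The main obstacle is really the choice of the inductive ansatz and the constant $4$: a tighter bound such as $C_i \leq 3\cdot 2^i G$ fails the verification at $i = 0$, where the square-root cross term is proportionally largest, while the crude uniform bound $C_i \leq H$ is far too weak to feed back into the recursion. Once the correct level-dependent form $4\cdot 2^i G$ is fixed, every step reduces to the elementary scalar inequality above, and the hypothesis $a < G/2$ is exactly what makes $\sqrt{8a\cdot 2^i G}$ collapse to the clean $2\cdot 2^{i/2} G$ that the induction can absorb.
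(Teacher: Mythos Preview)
Your proof is correct. The paper does not prove this lemma at all: it is stated as an auxiliary result imported verbatim from \citet{wang2024model} (their Lemma~11), so there is no in-paper argument to compare against. Your backward induction on the level-dependent ansatz $C_i \leq 4\cdot 2^i G$ is clean, the base case via $2^M G \geq H$ is the natural one, and the scalar check $3x^2 - 2x - 1/2 \geq 0$ for $x \geq 1$ closes the step exactly as you claim.
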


\begin{lemma}
\label{lemma:trick1}
For any random variable $X \in [0,1]$, we have $\Var(X^2) \leq 4 \Var(X)$.
\end{lemma}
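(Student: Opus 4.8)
\textbf{Proof plan for Lemma~\ref{lemma:trick1}.}
The plan is to use the standard symmetrization identity for the variance together with an elementary factorization. Let $X'$ be an independent copy of $X$. Then for any square-integrable random variable $Y$ one has $\Var(Y) = \tfrac12\,\mathbb{E}\big[(Y-Y')^2\big]$ with $Y'$ an independent copy of $Y$; applying this to $Y = X^2$ gives
\begin{align}
    \Var(X^2) = \tfrac12\,\mathbb{E}\big[(X^2 - X'^2)^2\big]. \notag
\end{align}

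Next I would factor $X^2 - X'^2 = (X - X')(X + X')$, so that $(X^2 - X'^2)^2 = (X-X')^2 (X+X')^2$. Since $X, X' \in [0,1]$ almost surely, we have $0 \le X + X' \le 2$, hence $(X+X')^2 \le 4$ pointwise. Therefore $(X^2 - X'^2)^2 \le 4 (X - X')^2$, and taking expectations yields
\begin{align}
    \Var(X^2) = \tfrac12\,\mathbb{E}\big[(X^2 - X'^2)^2\big] \le \tfrac12 \cdot 4\,\mathbb{E}\big[(X - X')^2\big] = 4\,\Var(X), \notag
\end{align}
where the last equality is again the symmetrization identity, now applied to $X$. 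This completes the argument.

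There is essentially no obstacle here: the only ingredient beyond the variance identity is the boundedness $X \in [0,1]$, which is exactly what makes $(X+X')^2 \le 4$. One could alternatively avoid independent copies by writing $\Var(X^2) = \mathbb{E}[(X^2 - c)^2]$ for any constant $c$, choosing $c = (\mathbb{E}[X])^2$ or, more conveniently, bounding $\Var(X^2) \le \mathbb{E}[(X^2 - (\mathbb{E}X)^2)^2] = \mathbb{E}[(X-\mathbb{E}X)^2(X+\mathbb{E}X)^2] \le 4\,\mathbb{E}[(X-\mathbb{E}X)^2]$, using that variance is the minimum mean-squared deviation; this gives the same constant $4$ with the same boundedness step. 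I would present the symmetrization version as it is cleanest.
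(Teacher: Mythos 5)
Your argument is correct and is essentially identical to the paper's proof: both use the symmetrization identity $\Var(X^2)=\tfrac12\,\mathbb{E}[(X^2-X'^2)^2]$ with an independent copy, factor $(X^2-X'^2)^2=(X-X')^2(X+X')^2$, bound $(X+X')^2\le 4$ via $X,X'\in[0,1]$, and conclude with $\mathbb{E}[(X-X')^2]=2\Var(X)$. No gaps.
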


\begin{proof}
Let $Y$ be an independent copy of $X$. Then,
\begin{align*}
    \Var(X^2) 
    &= \tfrac{1}{2} \, \mathbb{E}[(X^2 - Y^2)^2] 
    = \tfrac{1}{2} \, \mathbb{E}[(X - Y)^2 (X + Y)^2] \le \tfrac{1}{2} \cdot 4 \, \mathbb{E}[(X - Y)^2] 
    = 2\, \mathbb{E}[(X - Y)^2],
\end{align*}
where we used $(X + Y)^2 \le 4$ since $X, Y \in [0,1]$.

Next, we observe:
\begin{align*}
    \mathbb{E}[(X - Y)^2] 
    &= \mathbb{E}[X^2] + \mathbb{E}[Y^2] - 2\, \mathbb{E}[X]\, \mathbb{E}[Y] 
    = 2\, \mathbb{E}[X^2] - 2\, \mathbb{E}[X]^2 
    = 2\, \Var(X),
\end{align*}
where the last equality follows from $\mathbb{E}[X] = \mathbb{E}[Y]$ and $\mathbb{E}[X^2] = \mathbb{E}[Y^2]$. Combining both steps, we get
\begin{align*}
    \Var(X^2) \le 2 \cdot 2\, \Var(X) = 4\, \Var(X),
\end{align*}
which concludes the proof.
\end{proof}

\subsection{Lemmas on confidence sets}

In this section we prove several lemmas about confidence sets established in Algorithm \ref{alg:alg1}. We first introduce several technical lemmas.

\begin{lemma}[Lemma E.2, \citealt{wang2023benefits}]\label{lemma:help1}
Let $p_1 : \mathcal{Y} \to \Delta(\mathcal{X})$ and $p_2 : \mathcal{Y} \times \mathcal{X} \to \mathbb{R}_{+}$ satisfying $\sup_{y \in \mathcal{Y}} \int_{x} p_2(y,x)\, dx \le s$, then for any distribution $\mathcal{D} \in \Delta(\mathcal{Y})$, we have
$\mathbb{E}_{y \sim \mathcal{D}}\!\left[ H^{2}(p_1(y) \,\|\, p_2(y,\cdot)) \right]
\le (s-1) - 2 \log \mathbb{E}_{y \sim \mathcal{D},\, x \sim p_1(y)} \exp\!\left( -\tfrac{1}{2} \log \big( p_1(y,x)/p_2(y,x) \big) \right)$.

\end{lemma}

\begin{lemma}[Lemma E.3, \citealt{wang2023benefits}]\label{lemma:help2}
Let $\Upsilon$ be a class of conditional distributions.  
Consider a dataset $D=\{y_i, x_i\}_{i=1}^n$ generated as follows:  
each $y_i \sim \mathcal{D}_i$, where $\mathcal{D}_i$ may depend on the past history $(y_{1:i-1}, x_{1:i-1})$, and each $x_i$ is drawn according to the ground-truth conditional distribution $p^{\star}(y_i,\cdot)$. Fix $\delta \in (0,1)$. Then, with probability at least $1-\delta$, for every $p \in \Upsilon$ we have
\begin{align}
    \sum_{i=1}^{n} 
    \mathbb{E}_{y \sim \mathcal{D}_i}
    \!\left[
        \HH^2\!\left(p(y,\cdot)\, \|\, p^{\star}(y,\cdot)\right)
    \right]
    &\le 
    6n\epsilon
    \;+\;
    2\sum_{i=1}^{n}
        \log\!\left(
            \frac{p^{\star}(y_i, x_i)}{p(y_i, x_i)}
        \right)
    \,+\,
    8\log\!\left(
        \frac{
            \cN_{[]}\!\left(\epsilon,\Upsilon,\|\cdot\|\right)
        }{
            \delta
        }
    \right). \notag
\end{align}
Here, $\cN_{[]}\!\left(\epsilon,\Upsilon,\|\cdot\|\right)$ denotes the $\epsilon$-bracketing number defined in Definition~\ref{def:bracketing}.  

Moreover, rearranging the above inequality yields
\[
\sum_{i=1}^{n} 
\log\!\left(
    \frac{p(y_i, x_i)}{p^{\star}(y_i, x_i)}
\right)
\;\le\;
3n\epsilon
\;+\;
4\log\!\left(
    \frac{
        N_{[]}\!\left(\epsilon,\Upsilon,\|\cdot\|\right)
    }{
        \delta
    }
\right).
\]
\end{lemma}

\begin{proof}
First, let $\tilde{\Upsilon}$ denote an $\epsilon$-bracketing of $\Upsilon$.  
Applying Lemma~24 of \citet{agarwal2020flambe} to the function class $\tilde{\Upsilon}$ and using the Chernoff method, we obtain that, with probability at least $1-\delta$, for all $\tilde{p} \in \tilde{\Upsilon}$,
\begin{align}
    \underbrace{
        -\log \mathbb{E}_{D'} 
        \exp\!\left(L(\tilde{p}(D), D')\right)
    }_{(i)}
    \;\le\;
    \underbrace{
        -L(\tilde{p}(D), D)
        + 2 \log\!\left( \cN_{[]}\!\left(\epsilon,\Upsilon,\|\cdot\|\right) / \delta \right)
    }_{(ii)} .
    \notag
\end{align}

Next, fix any $p \in \Upsilon$ and choose $\tilde p \in \tilde{\Upsilon}$ to be its upper bracket (i.e., $p \le \tilde p$).  
Set
\[
L(p,D)
= \sum_{i=1}^{n} 
    -\tfrac{1}{2}\log\!\left(
        p^{\star}(y_i,x_i) / p(y_i,x_i)
    \right).
\notag
\]
Then the right-hand side of (ii) becomes
\begin{align}
(ii)
&= \tfrac{1}{2}
   \sum_{i=1}^{n}
   \log\!\left(
        p^{\star}(y_i,x_i) / \tilde{p}(y_i,x_i)
   \right)
   + 2 \log\!\left(\cN_{[]}\!\left(\epsilon,\Upsilon,\|\cdot\|\right)/\delta\right) 
   \notag \\
&\le 
   \tfrac{1}{2}
   \sum_{i=1}^{n}
   \log\!\left(
        p^{\star}(y_i,x_i) / p(y_i,x_i)
   \right)
   + 2 \log\!\left(\cN_{[]}\!\left(\epsilon,\Upsilon,\|\cdot\|\right)/\delta\right). 
   \notag
\end{align}

Since $\HH$ is a metric, 
\begin{align}
\sum_{i=1}^{n} 
    \mathbb{E}_{y \sim \mathcal{D}_i}
    \HH^2\!\left(
        p(y,\cdot),\, p^{\star}(y,\cdot)
    \right)
&\le
\sum_{i=1}^{n} 
\mathbb{E}_{y \sim \mathcal{D}_i}
\Big(
    \HH(p(y,\cdot), \tilde{p}(y,\cdot))
    +
    \HH(\tilde{p}(y,\cdot), p^{\star}(y,\cdot))
\Big)^2
\notag \\
&\le
2 \underbrace{
    \sum_{i=1}^{n} 
    \mathbb{E}_{y \sim \mathcal{D}_i}
    \HH^2\!\left(
        p(y,\cdot), \tilde p(y,\cdot)
    \right)
}_{(iii)}
+ 
2 \underbrace{
    \sum_{i=1}^{n} 
    \mathbb{E}_{y \sim \mathcal{D}_i}
    \HH^2\!\left(
        \tilde p(y,\cdot), p^{\star}(y,\cdot)
    \right)
}_{(iv)}.
\notag
\end{align}

By Definition~\ref{def:bracketing},  
\(\int_x |\tilde p(y,x) - p(y,x)| \le \epsilon\) for all \(y\). Hence,
\begin{align}
(iii)
&=
\sum_{i=1}^{n}
\mathbb{E}_{y \sim \mathcal{D}_i}
\HH^2\!\left(p(y,\cdot), \tilde p(y,\cdot)\right)
\notag \\
&\le
\sum_{i=1}^{n}
\mathbb{E}_{y \sim \mathcal{D}_i}
2 \int_x \big|p(y,x) - \tilde p(y,x)\big|\, dx
\notag \\
&\le
2n\epsilon.
\notag
\end{align}

Apply Lemma~\ref{lemma:help1} with \(p_1 = p^\star\) and \(p_2 = \tilde{p}\).  
Note that
\begin{align}
\sup_{y \in \cY}
\int_x \tilde p(y,x)
\le
\sup_{y \in \cY} \int_x p(y,x)
+
\sup_{y \in \cY}
\int_x |p(y,x) - \tilde p(y,x)|
\le
1 + \epsilon.
\notag
\end{align}
Setting \(s = 1 + \epsilon\), we obtain
\begin{align}
(iv)
&=
n\epsilon
- 2\sum_{i=1}^{n}
\log
\mathbb{E}_{y,x \sim p^{\star}(y,\cdot)}
\exp\!\left(
    -\tfrac{1}{2}
    \log\!\left(
        p^{\star}(y,x) / \tilde p(y,x)
    \right)
\right)
\notag \\
&=
n\epsilon
- 2\sum_{i=1}^{n}
\log
\mathbb{E}_{y \sim \mathcal{D}_i}
\exp\!\left(
    -\tfrac{1}{2}
    \log\!\left(
        p^{\star}(y,x_i)/\tilde p(y,x_i)
    \right)
\right)
\notag \\
&=
n\epsilon
- 2\log
\mathbb{E}_{y,x \sim D'}
\Bigg[
    \exp\!\left(
        \sum_{i=1}^{n}
        -\tfrac{1}{2}
        \log\!\left(
            p^{\star}(y_i,x_i)/\tilde p(y_i,x_i)
        \right)
    \right)
    \,\Big|\,D
\Bigg]
\notag \\
&=
n\epsilon + 2(i).
\notag
\end{align}

Combining (iii) and (iv),
\begin{align}
\sum_{i=1}^{n}
\mathbb{E}_{y \sim \mathcal{D}_i}
\HH^2\!\left(
    p(y,\cdot), p^\star(y,\cdot)
\right)
\le
6n\epsilon + 4(i).
\notag
\end{align}

Since $(i) \le (ii)$, substituting (ii) gives
\[
\sum_{i=1}^{n}
\mathbb{E}_{y \sim \mathcal{D}_i}
\HH^2\!\left(
    p(y,\cdot), p^\star(y,\cdot)
\right)
\;\le\;
6n\epsilon
+ 
4\left[
  -L(\tilde p(D),D)
  + 2\log\!\left(\cN_{[]}\!\left(\epsilon,\Upsilon,\|\cdot\|\right)/\delta\right)
\right].
\]
\end{proof}

Then based on Lemma \ref{lemma:help2}, we now introduce lemmas about confidence set $\cP_n$ that are instrumental for proving Theorems \ref{thm:2}.

\begin{lemma}\label{lemma:confidence}
With probability at least $1 - \delta$, the following holds for all $n \in [N]$: $(f^*, g^*) \in \cP_n$, and
\begin{align}
    \sum_{i=1}^{n-1} \sum_{k=0}^{m_i - 1} \mathbb{H}^2\left(p_{f_n, g_n}(u_i, x_i(t_i^k), \Delta_{i,k}) \,\|\, p_{f^*, g^*}(u_i, x_i(t_i^k), \Delta_{i,k})\right) \leq 4\beta, \label{help:111}
\end{align}
where $\beta = 5 \log\!\left(N\cdot \cC_{3 \bbm_N} / \delta \right)$. Here $\cC_{1/\epsilon}$ denotes the shorthand notation defined in Definition \ref{def:bracketing}. 
\end{lemma}

\begin{proof}
We apply Lemma \ref{lemma:help2} to the function class $\cF \times \cG$, using delta distributions $D_{i,k}$ centered at $(u_i, x_i(t_i^k), \Delta_{i,k})$, $p = p_{f_n, g_n}$, $\epsilon = 1/(3\bbm_N)$. This guarantees that $(f^*, g^*) \in \cP_n$. For \eqref{help:111}, recall that $f_n, g_n \in \cP_n$, which guarantees
\begin{align}
    \sup_{(f,g) \in \cF \times \cG}\sum_{i=1}^{n-1} \sum_{k=0}^{m_i - 1} \log \left(p_{f,g}(x_i(t_i^{k+1})\mid u_i, x_i(t_i^k), \Delta_{i,k}) / p_{f_n, g_n}(x_i(t_i^{k+1})\mid u_i, x_i(t_i^k), \Delta_{i,k})\right) \leq \beta.\notag
\end{align}
Therefore, by Lemma \ref{lemma:help2}, we have 
\begin{align}
    &\sum_{i=1}^{n-1} \sum_{k=0}^{m_i-1} 
    \mathbb{H}^2\!\left(p_{f_n,g_n}(u_i, x_i(t_i^k), \Delta_{i,k}) \,\big\|\, p_{f^*,g^*}(u_i, x_i(t_i^k), \Delta_{i,k})\right)\notag \\
    &
    \le 6\bbm_N\epsilon + 2\beta + 8\log\!\left(\frac{\cN_{[]}\!\left(\epsilon,\Upsilon,\|\cdot\|\right)}{\delta}\right)
    \le 4\beta.
\end{align}
\end{proof}

Next, we present a key lemma that uses the eluder dimension to bound the accumulated Hellinger distances.

\begin{lemma}\label{lemma:eluder}
Let $\cE_{\ref{lemma:confidence}}$ denote the event described in Lemma \ref{lemma:confidence}. Then, under event $\cE_{\ref{lemma:confidence}}$, there exists a subset $\cN \subseteq [N]$ such that:
\begin{itemize}[leftmargin=*]
    \item $|\cN| \leq 13 \log^2(4\beta \bbm_N) \cdot d_{8\beta \bbm_N}$;
    \item For each $n \in [N]$, the indicator $n \in \cN$ corresponds to a stopping time;
    \item The cumulative Hellinger distance outside $\cN$ is bounded:
    \begin{align}
        \sum_{i \in [N] \setminus \cN} \sum_{k=0}^{m_i - 1} 
        \mathbb{H}^2\left(p_{f_i, g_i}(u_i, x_i(t_i^k), \Delta_{i,k}) \,\|\, p_{f^*, g^*}(u_i, x_i(t_i^k), \Delta_{i,k})\right)
        \leq 3 d_{\bbm_N} + 7 d_{\bbm_N} \beta \log(\bbm_N). \notag
    \end{align}
\end{itemize}
\end{lemma}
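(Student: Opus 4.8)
The plan is to establish Lemma~\ref{lemma:eluder} by a generic-chaining / potential argument that converts a union bound on per-episode Hellinger mass (from Lemma~\ref{lemma:confidence}) into an eluder-dimension-based bound, following the template of Lemma~11 in \citet{wang2024model}. The key observation is that Lemma~\ref{lemma:confidence} gives, for every $n\in[N]$, a cumulative bound $\sum_{i<n}\sum_k \psi_{f_n,g_n}(u_i,x_i(t_i^k),\Delta_{i,k})\le\beta$, but what we actually want to control is the \emph{self-normalized} sum $\sum_i\sum_k \psi_{f_i,g_i}(u_i,x_i(t_i^k),\Delta_{i,k})$ in which the model changes every round. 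A direct summation fails because $\beta$-mass per episode summed over $N$ episodes gives only $N\beta$. The resolution is to peel off the "large" rounds into the exceptional set $\cN$, where largeness is measured on a dyadic scale.

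First I would fix a dyadic decomposition of the scale: for $j\ge 0$ let $\epsilon_j = 2^{-j}$ and call episode $i$ \emph{level-$j$ heavy} if the average per-measurement Hellinger mass $\tfrac{1}{m_i}\sum_k \psi_{f_i,g_i}(\cdot)$ lies in $(\epsilon_{j+1},\epsilon_j]$ (or, more carefully, work with the total mass $\sum_k\psi$ directly and a threshold scaled by the measurement budget, since the eluder dimension is defined over $\cY=\Pi\times\cX\times[0,T]$ with no measurement structure). For each level $j$, I would argue that the number of level-$j$ heavy rounds is at most $O(d_{\epsilon_j}\cdot\mathrm{polylog})$: if there were more, then by the pigeonhole/eluder argument (Lemma behind Proposition~6 of \citet{russo2013eluder}) one could extract a long $\epsilon_j$-eluder sequence among the triples $(u_i,x_i(t_i^k),\Delta_{i,k})$, using the fact that for each heavy round $i$ the model $(f_i,g_i)$ satisfies $\sum_{i'<i}\sum_k \psi_{f_i,g_i}\le\beta$ (small past mass) yet contributes mass $>\epsilon_j$ on its own round — exactly the eluder condition. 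This is where the stopping-time claim comes from: membership in $\cN$ is declared online, the first time a round's mass crosses the threshold given the current eluder-sequence count, so $\ind\{n\in\cN\}$ depends only on $\cF_{n}$. Summing $|\cN_j|\le 13\log^2(\cdot)\,d_{8\beta\bbm_N}$ over the $O(\log(\bbm_N))$ relevant levels (levels below $1/\bbm_N$ contribute negligibly since total mass is $\le\bbm_N$) yields $|\cN|\le 13\log^2(4\beta\bbm_N)\,d_{8\beta\bbm_N}$ after bounding the number of levels and using monotonicity $d_{1/\epsilon}\le d_{8\beta\bbm_N}$.

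Next, for the rounds \emph{outside} $\cN$, I would bound the residual cumulative mass. By construction, for $i\notin\cN$ the contribution on round $i$ is at most the current threshold scale, and the eluder bound tells us that across all of $[N]\setminus\cN$ the number of times the mass can exceed $\epsilon$ is $\le d_\epsilon$; integrating the tail $\sum_i\min\{\text{mass}_i, \cdot\}$ over dyadic levels as in the standard eluder-potential lemma (cf. Lemma~2 of \citet{russo2013eluder} / the "elliptical potential"-style argument in \citet{wang2023benefits}) gives $\sum_{i\notin\cN}\sum_k\psi_{f_i,g_i}(\cdot)\lesssim d_{\bbm_N} + d_{\bbm_N}\beta\log(\bbm_N)$, where the $\beta\log(\bbm_N)$ factor accounts for the largest scale ($\beta$ per round, from Lemma~\ref{lemma:confidence}) times the number of dyadic levels. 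Matching constants to get exactly $3d_{\bbm_N}+7d_{\bbm_N}\beta\log(\bbm_N)$ is bookkeeping.

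The main obstacle I anticipate is the mismatch between the eluder dimension's domain and the data structure: $d_{1/\epsilon}$ is defined over single triples $(u,x,t)\in\Pi\times\cX\times[0,T]$, whereas each episode produces $m_i$ triples $\{(u_i,x_i(t_i^k),\Delta_{i,k})\}_{k}$ sharing the same model $(f_i,g_i)$. The argument must treat each measurement $(i,k)$ as a separate point in the eluder sequence while declaring \emph{whole episodes} into $\cN$, so the "round" of the eluder argument is really an episode but the "points" are measurements; reconciling the two — i.e., showing that heaviness of an episode forces many heavy measurements that form an eluder sequence, and that the past-mass condition $\sum_{i'<i}\sum_k\psi_{f_i,g_i}\le\beta$ survives restricting to the subsequence — requires care, and is precisely why the total measurement count $\bbm_N$ (not $N$) appears in the eluder scale $d_{8\beta\bbm_N}$ and in the logarithmic factors. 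A secondary subtlety is verifying the stopping-time property rigorously: $\cN$ must be definable from $(f_i,g_i,u_i)$ and the observed states up through episode $i$ only, which holds because the confidence set $\cP_i$ and hence $(f_i,g_i,u_i)$ are $\cF_{i-1}$-measurable and the round-$i$ Hellinger mass is a deterministic function of $(f_i,g_i,u_i)$ and $\{x_i(t_i^k)\}_k$.
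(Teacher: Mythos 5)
The paper does not prove this lemma from first principles: its entire proof is a one-line invocation of Lemma~6 of \citet{wang2024model} (not Lemma~11, which the paper uses elsewhere for a scalar recursion), applied with the distribution class $p_{f,g}$, input space $\Pi\times\cX\times[0,T]$, and function class $\Psi$; the episode-versus-measurement bookkeeping, the stopping-time property of $\cN$, and the exact constants $13\log^2(4\beta\bbm_N)\,d_{8\beta\bbm_N}$ and $3d_{\bbm_N}+7d_{\bbm_N}\beta\log(\bbm_N)$ are all imported as a black box. Your proposal instead tries to reconstruct that machinery, and its skeleton --- dyadic peeling of heavy episodes into $\cN$, online declaration of membership so that $\ind\{n\in\cN\}$ is a stopping time, and an eluder-potential integration of the residual mass over dyadic scales --- is the right family of argument and matches what the cited lemma actually does.

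However, one step as written does not go through. You assert that a heavy round $i$, which has past mass $\sum_{i'<i}\sum_k\psi_{f_i,g_i}\le\beta$ (from Lemma~\ref{lemma:confidence}) and current mass $>\epsilon_j$, satisfies ``exactly the eluder condition.'' It does not: Definition~\ref{def:eluder} requires the past mass along the extracted sequence to be at most $\epsilon$, i.e.\ at the \emph{same} scale as the threshold, not merely at most $\beta\gg\epsilon_j$. Bridging this mismatch needs the standard pigeonhole/chunking step (a sequence whose cumulative $\psi$-mass is at most $\beta$ splits into $O(1+\beta/\epsilon)$ blocks each of mass at most $\epsilon$, so the number of $\epsilon$-large points is $O\bigl((1+\beta/\epsilon)\,d_{1/\epsilon}\bigr)$), and it is precisely this step --- not primarily the episode/measurement mismatch you flag --- that produces the $\beta$-dependent scale $d_{8\beta\bbm_N}$ in the bound on $|\cN|$ and the $d_{\bbm_N}\beta\log(\bbm_N)$ term in the residual. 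With that correction and the attendant constant-tracking, your outline would amount to a proof of the cited Lemma~6 rather than an appeal to it; as it stands, the extraction of a long $\epsilon_j$-eluder sequence is claimed from hypotheses that do not meet the eluder definition, so the counting of $|\cN|$ and the residual bound are not yet justified.
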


\begin{proof}
We apply Lemma 6 from \citet{wang2024model}, using the distribution class $p_{f,g}$, the input space $\Pi \times \cX \times [T]$, and the function class $\Psi$.
\end{proof}

\subsection{Lemmas about regret decomposition}

The following lemma provides a decomposition of the regret into four interpretable components based on differences between the learned and ground-truth dynamics.

\begin{lemma}[Simulation Lemma, \citealt{agarwal2019reinforcement}]
\label{lemma:simulation}
At episode $n$, the following decomposition holds:
\begin{align}
    V_n(\xini, 0) - V_n^*(\xini, 0)
    = I_{0,n} + \sum_{k=0}^{m_n - 1} \left(I_{1,n}^k + I_{2,n}^k + I_{3,n}^k + I_{4,n}^k\right), \notag
\end{align}
where the individual terms are defined as follows:
\begin{align*}
    I_{0,n} &:= \int_0^T b(x_n(t), u_n(t)) \, dt - V_n^*(\xini, 0), \\
    I_{1,n}^k &:= \mathbb{E}_{x \sim p^*_n(x_n(t_n^k), \Delta_{n,k})} V_n(x, t_n^{k+1}) - V_n(x_n(t_n^{k+1}), t_n^{k+1}), \\
    I_{2,n}^k &:= \mathbb{E}_{x(\cdot) \sim p^*_n(x_n(t_n^k))} \left[ \int_0^{\Delta_{n,k}} b(x(t), u_n(t)) \, dt \right] 
    - \int_{t_n^k}^{t_n^{k+1}} b(x_n(t), u_n(t)) \, dt, \\
    I_{3,n}^k &:= \mathbb{E}_{x \sim p_n(x_n(t_n^k), \Delta_{n,k})} V_n(x, t_n^{k+1}) 
    - \mathbb{E}_{x \sim p^*_n(x_n(t_n^k), \Delta_{n,k})} V_n(x, t_n^{k+1}), \\
    I_{4,n}^k &:= \mathbb{E}_{x(\cdot) \sim p_n(x_n(t_n^k))} \left[ \int_0^{\Delta_{n,k}} b(x(t), u_n(t)) \, dt \right]
    - \mathbb{E}_{x(\cdot) \sim p^*_n(x_n(t_n^k))} \left[ \int_0^{\Delta_{n,k}} b(x(t), u_n(t)) \, dt \right].
\end{align*}
\end{lemma}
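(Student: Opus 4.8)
The plan is to prove the identity by telescoping the \emph{learned} value function $V_n$ along the realized trajectory $x_n(\cdot)$ and repeatedly invoking the continuous-time Bellman recursion \eqref{def:bellman} for the learned model $(f_n,g_n)$. First I would peel off $I_{0,n}$: writing $G_n := \int_0^T b(x_n(t),u_n(t))\,dt$ for the realized cumulative reward, I split
\[
V_n(\xini,0)-V_n^*(\xini,0) = \big(V_n(\xini,0)-G_n\big) + \big(G_n - V_n^*(\xini,0)\big),
\]
and observe that the second bracket is exactly $I_{0,n}$ by definition. This isolates $V_n^*$ entirely inside $I_{0,n}$, so the remaining task is to show $V_n(\xini,0)-G_n = \sum_{k=0}^{m_n-1}(I_{1,n}^k+I_{2,n}^k+I_{3,n}^k+I_{4,n}^k)$, which now involves only the learned value function.

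Next I would telescope. Using $t_n^0=0$, $x_n(0)=\xini$, and the terminal condition $V_n(x_n(T),T)=0$ (the reward integral from $T$ to $T$ vanishes), I write $V_n(\xini,0)=\sum_{k=0}^{m_n-1}\big(V_n(x_n(t_n^k),t_n^k)-V_n(x_n(t_n^{k+1}),t_n^{k+1})\big)$ and $G_n=\sum_{k=0}^{m_n-1}\int_{t_n^k}^{t_n^{k+1}} b(x_n(t),u_n(t))\,dt$. It then remains to handle each interval term $V_n(x_n(t_n^k),t_n^k)-V_n(x_n(t_n^{k+1}),t_n^{k+1})-\int_{t_n^k}^{t_n^{k+1}} b\,dt$. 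Applying \eqref{def:bellman} to $V_n$ at state $x_n(t_n^k)$, time $t_n^k$, and gap $\Delta_{n,k}$ rewrites $V_n(x_n(t_n^k),t_n^k)$ as the learned-model one-step expectation $\mathbb{E}_{x\sim p_n(x_n(t_n^k),\Delta_{n,k})}V_n(x,t_n^{k+1})$ plus the learned-model expected segment reward $\mathbb{E}_{x(\cdot)\sim p_n(x_n(t_n^k))}[\int_0^{\Delta_{n,k}} b\,dt]$.

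The final step is an add-and-subtract bookkeeping: into each interval term I insert and remove the two true-model reference quantities $\mathbb{E}_{x\sim p^*_n(x_n(t_n^k),\Delta_{n,k})}V_n(x,t_n^{k+1})$ and $\mathbb{E}_{x(\cdot)\sim p^*_n(x_n(t_n^k))}[\int_0^{\Delta_{n,k}} b\,dt]$. Regrouping yields exactly the four labeled pieces: pairing the true-model one-step expectation against $V_n(x_n(t_n^{k+1}),t_n^{k+1})$ gives $I_{1,n}^k$; pairing the true-model segment reward against the realized $\int_{t_n^k}^{t_n^{k+1}} b\,dt$ gives $I_{2,n}^k$; the learned-minus-true one-step expectations give $I_{3,n}^k$; and the learned-minus-true segment rewards give $I_{4,n}^k$. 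Summing over $k$ and recombining with $I_{0,n}$ completes the identity.

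I expect the argument to be essentially routine, so the only points needing care are (i) justifying that \eqref{def:bellman} applies at each realized state $x_n(t_n^k)$ — which follows from the Markov property of the It\^o process and holds for an arbitrary conditioning state and gap — and (ii) the terminal condition $V_n(\cdot,T)=0$ together with correct index alignment in the telescope, so that no boundary term is dropped and the \emph{realized} reward, not an expectation, appears in $I_{2,n}^k$. The add-and-subtract must also be arranged so that $V_n^*$ never reappears outside $I_{0,n}$; the decomposition is self-consistent precisely because the telescope uses only $V_n$ and its own Bellman recursion.
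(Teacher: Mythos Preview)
Your proposal is correct and follows essentially the same approach as the paper: both telescope the learned value $V_n$ along the realized trajectory, apply the learned-model Bellman recursion \eqref{def:bellman} at each measurement point, and add-and-subtract the true-model quantities to extract $I_{1,n}^k,\dots,I_{4,n}^k$, with the terminal condition $V_n(\cdot,T)=0$ and the Markov property of the It\^o process doing the work. The only cosmetic difference is that the paper unrolls the telescope step by step (first one interval, then inductively up to $m^\dagger$, then sets $m^\dagger=m_n-1$), whereas you write the telescope as a single sum from the outset; the content is identical.
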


\begin{proof}
We apply a telescoping argument over the discretization grid $\{t_n^k\}_{k=0}^{m_n}$. From the definition of the value function, we have
\begin{align}
    V_n(\xini, 0)
    &= \mathbb{E}_{x(\cdot) \sim p_n(\xini)} \left[ \int_0^T b(x(t), u_n(t)) \, dt \right] \notag \\
    &= \mathbb{E}_{x(\cdot) \sim p_n(\xini)} \left[ \int_0^{t_n^1} b(x(t), u_n(t)) \, dt \right] 
    + \mathbb{E}_{x \sim p_n(\xini, \Delta_{n,0})} V_n(x, t_n^1). \notag 
\end{align}

Subtracting the realized cumulative reward yields
\begin{align}
    &V_n(\xini, 0) - \int_0^T b(x_n(t), u_n(t)) \, dt \notag \\
    &= \underbrace{\mathbb{E}_{x(\cdot) \sim p_n(\xini)} \left[ \int_0^{t_n^1} b(x(t), u_n(t)) \, dt \right] 
    - \int_0^{t_n^1} b(x_n(t), u_n(t)) \, dt}_{I_{2,n}^0 + I_{4,n}^0} \notag \\
    &\quad + \underbrace{\mathbb{E}_{x \sim p_n(\xini, \Delta_{n,0})} V_n(x, t_n^1) 
    - \mathbb{E}_{x \sim p^*_n(\xini, \Delta_{n,0})} V_n(x, t_n^1)}_{I_{3,n}^0} \notag \\
    &\quad + \underbrace{\mathbb{E}_{x \sim p^*_n(\xini, \Delta_{n,0})} V_n(x, t_n^1) 
    - V_n(x_n(t_n^1), t_n^1)}_{I_{1,n}^0} \notag \\
    &\quad + V_n(x_n(t_n^1), t_n^1) - \int_{t_n^1}^T b(x_n(t), u_n(t)) \, dt. \label{eq:telescope 1}
\end{align}

By the Markov property of the Itô SDE, we have
\begin{align*}
    \mathbb{E}_{x(\cdot) \sim p_{f,g}(u,x)}\left[ \int_{t_n^k}^{t_n^{k+1}} b(x(t), u(t)) \, dt \right]
    = \mathbb{E}_{x(\cdot) \sim p_{f,g}(u,x)}\left[ \int_0^{\Delta_{n,k}} b(x(t), u(t)) \, dt \right].
\end{align*}

Using this identity recursively up to some $0 \le m^\dagger \le m_n$ leads to the expression
\begin{align}
    &V_n(\xini, 0) - \int_0^T b(x_n(t), u_n(t)) \, dt \notag \\
    &= \sum_{k=0}^{m^\dagger - 1} \left(I_{1,n}^k + I_{2,n}^k + I_{3,n}^k + I_{4,n}^k\right) \notag \\
    &\quad + \underbrace{\mathbb{E}_{x(\cdot) \sim p_n(x_n(t_n^{m^\dagger}))} 
        \left[ \int_{t_n^{m^\dagger}}^{t_n^{m^\dagger+1}} b(x(t), u(t)) \, dt \right]
        - \int_{t_n^{m^\dagger}}^{t_n^{m^\dagger+1}} b(x_n(t), u_n(t)) \, dt}_{I_{2,n}^{m^\dagger} + I_{4,n}^{m^\dagger}} \notag \\
    &\quad + \underbrace{\mathbb{E}_{x \sim p_n(x_n(t_n^{m^\dagger}), \Delta_{n, m^\dagger})} V_n(x, t_n^{m^\dagger + 1})
        - \mathbb{E}_{x \sim p^*_n(x_n(t_n^{m^\dagger}), \Delta_{n, m^\dagger})} V_n(x, t_n^{m^\dagger + 1})}_{I_{3,n}^{m^\dagger}} \notag \\
    &\quad + \underbrace{\mathbb{E}_{x \sim p^*_n(x_n(t_n^{m^\dagger}), \Delta_{n, m^\dagger})} V_n(x, t_n^{m^\dagger + 1})
        - V_n(x_n(t_n^{m^\dagger + 1}), t_n^{m^\dagger + 1})}_{I_{1,n}^{m^\dagger}} \notag \\
    &\quad + V_n(x_n(t_n^{m^\dagger + 1}), t_n^{m^\dagger + 1}) - \int_{t_n^{m^\dagger + 1}}^T b(x_n(t), u_n(t)) \, dt. 
    \label{eq:telescope mid}
\end{align}

Applying \eqref{eq:telescope mid} with $m^\dagger = m_n - 1$ and noting that $t_n^{m_n} = T$ and $V_n(\cdot, T) = 0$, we obtain
\begin{align}
    V_n(\xini, 0) - \int_0^T b(x_n(t), u_n(t)) \, dt 
    = \sum_{k=0}^{m_n - 1} \left(I_{1,n}^k + I_{2,n}^k + I_{3,n}^k + I_{4,n}^k\right) \notag,
\end{align}
which completes the proof.
\end{proof}

\begin{lemma}\label{lemma:mds}
    Let $I_{0,n}, I_{1,n}^k,\dots, I_{3,n}^k$ be terms introduced in Lemma \ref{lemma:simulation}. Let $\tilde\cN\subseteq[N]$ be an episode index set satisfying $\tilde\cN \subseteq [N]\setminus \cN$ and satisfying $n\in\tilde\cN$ is a stopping time. Then under event $\cE_{\ref{lemma:confidence}}$, with probability at least $1 - 4\delta$, the following bounds hold:
    \begin{align*}
    \sum_{n \in \tilde\cN}I_{0,n}&\lesssim \sqrt{\log(1/\delta)\sum_{n=1}^N \var^{u_n}_{f^*,g^*}} + \log(1/\delta),\notag \\
       \sum_{n\in \tilde\cN} \sum_{k=0}^{m_n-1} I_{1,n}^k &\lesssim \sqrt{\sum_{n\in \tilde\cN} \sum_{k=0}^{m_n-1} \mathbb{V}_{x \sim p^*_n(x_n(t_n^k), \Delta_{n,k})} \left[ V_n(x, t_n^{k+1}) \right] \cdot \log(1/\delta)} + \log(1/\delta), \\
       \sum_{n\in \tilde\cN} \sum_{k=0}^{m_n-1} I_{2,n}^k &\lesssim \sqrt{ \sum_{n=1}^N\bDelta_n^2\log(1/\delta) },\notag \\
       \sum_{n\in\tilde\cN}\sum_{k=0}^{m-1} |I_{3,n}^k| 
    &\lesssim \sqrt{d_{\bbm_N}\beta \log(\bbm_N)\sum_{n\in\tilde\cN}\sum_{k=0}^{m_n-1} \left[\VV_{x\sim p^*_n(x_n(t_n^k),\Delta_{n,k})} V_n(x,t_n^{k+1})\right]} + d_{\bbm_N}\beta \log(\bbm_N).
    \end{align*}
\end{lemma}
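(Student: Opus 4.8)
The plan is to split the four sums into two groups: $I_{0}$, $I_{1}$, and $I_{2}$ are martingale fluctuations, handled by the unbounded Freedman inequality (Lemma~\ref{lemma:concen_var}), while $I_{3}$ is an estimation bias controlled through the confidence set and the eluder dimension. Throughout I work on the event $\cE_{\ref{lemma:confidence}}$, and the overall failure probability $4\delta$ comes from a union bound allocating $\delta$ to each of the four estimates.

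For $\sum_{n\in\tilde\cN}I_{0,n}$: by definition $V_n^*(\xini,0)=\EE_{x(\cdot)\sim p_{f^*,g^*}(u_n,\xini)}[\int_0^T b(x(t),u_n(x(t)))\,dt]$, and since $u_n$ is measurable with respect to the episode-level filtration $\mathcal G_{n-1}$, we get $\EE[I_{0,n}\mid\mathcal G_{n-1}]=0$ with conditional second moment $\var^{u_n}_{f^*,g^*}$. The event $\{n\in\tilde\cN\}$ is $\mathcal G_{n-1}$-measurable (a stopping time), so $\ind\{n\in\tilde\cN\}I_{0,n}$ remains a martingale difference sequence with conditional second moment at most $\var^{u_n}_{f^*,g^*}$; moreover $|I_{0,n}|\le 1$ since both $\int_0^T b$ and $V_n^*$ lie in $[0,1]$. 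Applying Lemma~\ref{lemma:concen_var} with truncation threshold $y=1$ (so the indicator term vanishes) yields the stated $\sqrt{\log(1/\delta)\sum_{n=1}^N\var^{u_n}_{f^*,g^*}}+\log(1/\delta)$. The terms $I_{1,n}^k$ and $I_{2,n}^k$ are treated analogously with the within-episode filtration $\{\mathcal F_{t_n^k}\}$: the Markov property of the It\^o SDE gives $x_n(t_n^{k+1})\sim p_n^*(x_n(t_n^k),\Delta_{n,k})$ and $\EE[\int_{t_n^k}^{t_n^{k+1}}b\mid\mathcal F_{t_n^k}]=\EE_{x(\cdot)\sim p_n^*(x_n(t_n^k))}[\int_0^{\Delta_{n,k}}b]$, so both are martingale differences (again $\ind\{n\in\tilde\cN\}$ is $\mathcal F_{t_n^k}$-measurable) with conditional second moments $\mathbb{V}_{x\sim p_n^*(x_n(t_n^k),\Delta_{n,k})}[V_n(x,t_n^{k+1})]$ and $\le\Delta_{n,k}^2$ (using $0\le\int_{t_n^k}^{t_n^{k+1}}b\le\Delta_{n,k}$). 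Lemma~\ref{lemma:concen_var} then gives the $I_1$ bound (using boundedness of $V_n$, with Lemma~\ref{lemma:concen_expect} available to absorb any tail contribution into an $O(\log(1/\delta))$ remainder), and the $I_2$ bound after noting $\sum_{n,k}\Delta_{n,k}^2=\sum_n\bDelta_n^2$ and choosing the truncation threshold to balance, so the additive remainder has the same order $\sqrt{\sum_n\bDelta_n^2\log(1/\delta)}$ as the main term.

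For $\sum_{n\in\tilde\cN}\sum_k|I_{3,n}^k|$ I would first obtain a pointwise bound. Writing $\HH^2_{n,k}:=\HH^2\big(p_{f_n,g_n}(u_n,x_n(t_n^k),\Delta_{n,k})\,\|\,p_n^*(x_n(t_n^k),\Delta_{n,k})\big)$, apply Lemma~\ref{lemma:mean_to_variance} to the laws of $V_n(\cdot,t_n^{k+1})$ pushed forward under $p_n(x_n(t_n^k),\Delta_{n,k})$ and under $p_n^*(x_n(t_n^k),\Delta_{n,k})$ (rescaling $V_n$ into $[0,1]$), using that the Hellinger distance does not increase under the pushforward, to get $|I_{3,n}^k|\lesssim\sqrt{\mathbb{V}_{x\sim p_n^*(x_n(t_n^k),\Delta_{n,k})}[V_n(x,t_n^{k+1})]\cdot\HH^2_{n,k}}+\HH^2_{n,k}$. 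Summing over $(n,k)$ with $n\in\tilde\cN$ and applying Cauchy--Schwarz turns the first piece into $\sqrt{\big(\sum_{n\in\tilde\cN,k}\mathbb{V}[\cdots]\big)\big(\sum_{n\in\tilde\cN,k}\HH^2_{n,k}\big)}$; since $\tilde\cN\subseteq[N]\setminus\cN$ and we are on $\cE_{\ref{lemma:confidence}}$, Lemma~\ref{lemma:eluder} bounds $\sum_{n\in\tilde\cN}\sum_k\HH^2_{n,k}\le 3d_{\bbm_N}+7d_{\bbm_N}\beta\log(\bbm_N)\lesssim d_{\bbm_N}\beta\log(\bbm_N)$, which yields exactly the claimed $I_3$ estimate.

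I expect the difficulties to be organizational rather than deep. The main points to get right are: (i) the two-level filtration setup and the verification that conditioning on the stopping-time subset $\tilde\cN$ preserves the martingale structure and the conditional-variance identities; (ii) keeping $V_n$ and the per-gap rewards in a range compatible with the concentration bounds so that truncation and additive remainders are genuinely lower order; and (iii) in the $I_3$ step, correctly invoking the data-processing inequality for Hellinger and ensuring the surviving variance is that of $p_n^*$ (not $p_n$), since this is the form that later gets tied back to $\sum_n\var^{u_n}$ via a law-of-total-variance argument in the main proof. Step (iii) is conceptually the most delicate, being the only place where the MLE confidence set (Lemma~\ref{lemma:confidence}) and the eluder-dimension reduction (Lemma~\ref{lemma:eluder}) enter.
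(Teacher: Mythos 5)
Your proposal is correct and follows essentially the same route as the paper: Bernstein/Hoeffding-type martingale concentration (with the stopping-time indicator absorbed into the filtration) for $I_{0}$, $I_{1}$, $I_{2}$, and Lemma~\ref{lemma:mean_to_variance} pointwise plus Cauchy--Schwarz plus the eluder bound of Lemma~\ref{lemma:eluder} for $I_{3}$. The only cosmetic difference is that the paper invokes Azuma--Bernstein/Azuma--Hoeffding directly rather than the truncated Freedman inequality (which, for $I_2$, is also the cleaner way to avoid any leftover additive term), and it makes the pushforward/data-processing step for Hellinger implicit where you spell it out.
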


\begin{proof}
First, by Azuma-Bernstein inequality, with probability at least $1-\delta$, we have
\begin{align}
     \sum_{n \in \tilde\cN}I_{0,n}& = \sum_{n =1}^N\ind(n \in \tilde\cN)\bigg(\int_{t=0}^T b(x_n(t), u_n(t)) dt - \EE_{x(\cdot)\sim p_n^*(\xini)}\bigg[\int_{t=0}^T b(x(t), u(t)) dt\bigg]\bigg)\notag \\
     &\lesssim \sqrt{\log(1/\delta)\sum_{n=1}^N \var^{u_n}_{f^*,g^*}} + \log(1/\delta).\notag
\end{align}
Next, by definition,
\begin{align}
I_{1,n}^k := \mathbb{E}_{x \sim p_n^*(x_n(t_n^k), \Delta_{n,k})} \left[ V_n(x, t_n^{k+1}) \right] - V_n(x_n(t_n^{k+1}), t_n^{k+1}). \notag
\end{align}
Each \( I_{1,n}^k \) is a zero-mean random variable whose variance is:
\[
\mathbb{V}_{x \sim p_n^*(x_n(t_n^k), \Delta_{n,k})} \left[ V_n(x, t_n^{k+1}) \right] \leq 1,
\]
since \( V_n \in [0, 1] \).

We apply Bernstein's inequality for zero-mean, bounded (\( \leq 1 \)) random variables. We have with probability at least $1-\delta$, 
\begin{align}
\sum_{n\in \tilde\cN} \sum_{k=0}^{m_n-1} I_{1,n}^k 
& = \sum_{n =1}^N\ind(n \in \tilde\cN) \sum_{k=0}^{m_n-1} I_{1,n}^k\notag \\
&\lesssim \sqrt{ \sum_{n =1}^N\ind(n \in \tilde\cN) \sum_{k=0}^{m_n-1}\mathbb{V}_{x \sim p_n^*(x_n(t_n^k), \Delta_{n,k})} \left[ V_n(x, t_n^{k+1}) \right] \cdot \log(1/\delta)} + \log(1/\delta).\notag\\
&=\sqrt{ \sum_{n \in\tilde\cN} \sum_{k=0}^{m_n-1}\mathbb{V}_{x \sim p_n^*(x_n(t_n^k), \Delta_{n,k})} \left[ V_n(x, t_n^{k+1}) \right] \cdot \log(1/\delta)} + \log(1/\delta).\notag
\end{align}

Next, we recall the definition:
\begin{align}
I_{2,n}^k := \mathbb{E}_{x(t) \sim p_n^*(x_n(t_n^k))} \left[ \int_0^{\Delta_{n,k}} b(x(t), u_n(t)) dt \right] - \int_{t_n^k}^{t_n^{k+1}} b(x_n(t), u_n(t)) dt. \notag
\end{align}
Each \( I_{2,n}^k \) is a martingale difference and satisfies \( |I_{2,n}^k| \leq 2\Delta_{n,k} \), since \( b(x, u) \leq 1 \) by Assumption~\ref{ass:reward}.

We apply the Azuma-Hoeffding inequality for bounded martingale differences. With probability at least $1 - \delta$:
\begin{align}
\sum_{n\in \tilde\cN} \sum_{k=0}^{m_n-1} I_{2,n}^k 
  &= \sum_{n =1}^N\ind(n \in \tilde\cN)\sum_{k=0}^{m_n-1} I_{2,n}^k \notag \\
  &\lesssim \sqrt{ \sum_{n =1}^N\ind(n \in \tilde\cN)\sum_{k=0}^{m_n-1} \Delta_{n,k}^2\log(1/\delta) } \notag \\
  &\leq \sqrt{ \sum_{n=1}^N\bDelta_n^2\log(1/\delta) }. \notag
\end{align}

Finally, by Assumption~\ref{ass:reward}, the stage-wise reward is bounded in $[0,1]$. Leveraging Lemma \ref{lemma:mean_to_variance}, we can bound 
$$|\EE_{x\sim p_n(x_n(t_n^k),\Delta_{n,k})}\bigl[ V_n(x, t_n^{k+1}) \bigr] - \EE_{x\sim p^*_n(x_n(t_n^k),\Delta_{n,k})}\bigl[V_n(x,t_n^{k+1})\bigr]|
$$ 
in terms of the corresponding variance and squared Hellinger distance:
\begin{align}
    & \sum_{n\in\tilde\cN}\sum_{k=0}^{m_n-1}|I_{3,n}^k| \notag \\
&= \sum_{n =1}^N\ind(n \in \tilde\cN)\sum_{k=0}^{m_n-1}|\EE_{x\sim p_n(x_n(t_n^k),\Delta_{n,k})}\bigl[ V_n(x, t_n^{k+1}) \bigr] - \EE_{x\sim p^*_n(x_n(t_n^k),\Delta_{n,k})}\bigl[V_n(x,t_n^{k+1})\bigr]| \notag \\
    &\lesssim \sum_{n =1}^N\ind(n \in \tilde\cN)\sum_{k=0}^{m_n-1} \left[\sqrt{\VV_{x\sim p^*_n(x_n(t_n^k),\Delta_{n,k})} V_n(x,t_n^{k+1}) \cdot \HH^2\Big(p^*_n(x_n(t_n^k),\Delta_{n,k}) \|  p_n(x_n(t_n^k),\Delta_{n,k})\big)}\right]\notag\\
    &\quad+ \sum_{n =1}^N\ind(n \in \tilde\cN)\sum_{k=0}^{m_n-1}\left[\HH^2\Big(p^*_n(x_n(t_n^k),\Delta_{n,k}) \|  p_n(x_n(t_n^k),\Delta_{n,k})\big)\right].\label{help:301}
\end{align}

Applying the Cauchy--Schwarz inequality to \eqref{help:301} yields
\begin{align}
&\sum_{n\in\tilde\cN}\sum_{k=0}^{m_n-1}|I_{3,n}^k|\notag \\
&\leq \sqrt{\sum_{n =1}^N\ind(n \in \tilde\cN)\sum_{k=0}^{m_n-1} \left[\VV_{x\sim p^*_n(x_n(t_n^k),\Delta_{n,k})} V_n(x,t_n^{k+1})\right]}\notag \\
&\quad 
\cdot
\sqrt{\sum_{n =1}^N\ind(n \in \tilde\cN)\sum_{k=0}^{m_n-1}  \left[\HH^2\Big(p^*_n(x_n(t_n^k),\Delta_{n,k}) \|  p_n(x_n(t_n^k),\Delta_{n,k})\big)\right]}\notag\\
&\quad+\sum_{n =1}^N\ind(n \in \tilde\cN)\sum_{k=0}^{m_n-1}  \left[\HH^2\Big(p^*_n(x_n(t_n^k),\Delta_{n,k}) \|  p_n(x_n(t_n^k),\Delta_{n,k})\big)\right]\notag \\
&\leq \sqrt{d_{\bbm_N}\beta \log(\bbm_N)\sum_{n =1}^N\ind(n \in \tilde\cN)\sum_{k=0}^{m_n-1} \left[\VV_{x\sim p^*_n(x_n(t_n^k),\Delta_{n,k})} V_n(x,t_n^{k+1})\right]} + d_{\bbm_N}\beta \log(\bbm_N)\notag \\
&=\sqrt{d_{\bbm_N}\beta \log(\bbm_N)\sum_{n\in\tilde\cN}\sum_{k=0}^{m_n-1} \left[\VV_{x\sim p^*_n(x_n(t_n^k),\Delta_{n,k})} V_n(x,t_n^{k+1})\right]} + d_{\bbm_N}\beta \log(\bbm_N),\notag 
\end{align}

where the final inequality follows directly from Lemma~\ref{lemma:eluder}. \end{proof}

\subsection{Lemmas to control total variance}

The following lemma provides an upper bound on the cumulative variance of the value function estimates $V_n(x, t_n^{k+1})$ in terms of the variances of their reference optimal values $V_n^*(x, t_n^{k+1})$, an eluder-dimension-dependent complexity term, and an additional error term.

\begin{lemma}\label{lemma:var3}
Let $\tilde\cN \subseteq [N]$ be the episode index set defined in Lemma \ref{lemma:mds}. Under the event $\cE_{\ref{lemma:mds}}$, with probability at least $1 - \delta$, we have
\begin{align}
    &\sum_{n\in\tilde\cN} \sum_{k=0}^{m_n-1} \VV_{x \sim p^*_n(x_n(t_n^k), \Delta_{n,k})} V_n(x, t_n^{k+1}) \notag \\
    &\lesssim \sum_{n\in\tilde\cN} \sum_{k=0}^{m_n-1} \VV_{x \sim p^*_n(x_n(t_n^k), \Delta_{n,k})} V_n^*(x, t_n^{k+1}) + d_{\bbm_N} \beta \log(\bbm_N) + \sum_{n\in\tilde\cN} \sum_{k=0}^{m_n-1} |I_{4,n}^k|. \notag
\end{align}
\end{lemma}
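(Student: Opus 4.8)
The plan is to reduce the claim to a bound on an aggregate value-gap quantity and then close a self-referential inequality using the eluder-dimension control already in hand.

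\emph{Step 1 (reduction).} Fix $n\in\tilde\cN$ and $0\le k<m_n$ and abbreviate $P:=p^*_n(x_n(t_n^k),\Delta_{n,k})$. Writing $\Var_P[V_n(\cdot,t_n^{k+1})]-\Var_P[V_n^*(\cdot,t_n^{k+1})]=\EE_P[(V_n-V_n^*)(V_n+V_n^*)]-(\EE_P[V_n]-\EE_P[V_n^*])(\EE_P[V_n]+\EE_P[V_n^*])$ and using $0\le V_n,V_n^*\le 1$ to bound $V_n+V_n^*\le 2$ in each factor gives $\Var_P[V_n(\cdot,t_n^{k+1})]\le \Var_P[V_n^*(\cdot,t_n^{k+1})]+4\,\EE_P[\,|V_n(\cdot,t_n^{k+1})-V_n^*(\cdot,t_n^{k+1})|\,]$. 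Summing over $n\in\tilde\cN$ and $k$, it therefore suffices to bound $S:=\sum_{n\in\tilde\cN}\sum_{k=0}^{m_n-1}\EE_{p^*_n(x_n(t_n^k),\Delta_{n,k})}[\,|V_n(\cdot,t_n^{k+1})-V_n^*(\cdot,t_n^{k+1})|\,]$ by a constant multiple of $d_{\bbm_N}\beta\log(\bbm_N)+\sum_{n\in\tilde\cN}\sum_{k=0}^{m_n-1}|I_{4,n}^k|$, up to terms that can be reabsorbed into $X:=\sum_{n\in\tilde\cN}\sum_{k=0}^{m_n-1}\Var_{p^*_n(x_n(t_n^k),\Delta_{n,k})}[V_n(\cdot,t_n^{k+1})]$ by AM--GM.

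\emph{Step 2 (expanding $V_n-V_n^*$).} Applying the continuous-time Bellman identity \eqref{def:bellman} to both $(f_n,g_n)$ and $(f^*,g^*)$ gives, for every $x$, a one-step decomposition of $V_n(x,t_n^{k+1})-V_n^*(x,t_n^{k+1})$ into a transition-model error term, a reward-integral-model error term, and the remainder $\EE_{p^*_n(x,\Delta_{n,k+1})}[(V_n-V_n^*)(\cdot,t_n^{k+2})]$. Taking $\EE_{p^*_n(x_n(t_n^k),\Delta_{n,k})}[\,|\cdot|\,]$ and using the Markov property, the first two error terms evaluated at the realized state $x_n(t_n^{k+1})$ are precisely $|I_{3,n}^{k+1}|$ and $|I_{4,n}^{k+1}|$ (up to replacing realized values by their conditional expectations), and Lemma~\ref{lemma:mean_to_variance} further bounds $|I_{3,n}^{k+1}|\lesssim \sqrt{\Var_{p^*_n(x_n(t_n^{k+1}),\Delta_{n,k+1})}[V_n(\cdot,t_n^{k+2})]\cdot\HH^2_{n,k+1}}+\HH^2_{n,k+1}$, where $\HH^2_{n,j}:=\HH^2\!\bigl(p^*_n(x_n(t_n^j),\Delta_{n,j})\,\|\,p_n(x_n(t_n^j),\Delta_{n,j})\bigr)$. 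Aggregating this recursion over $k$ and $n$, and controlling the martingale-difference fluctuations produced when conditional expectations are replaced by realized trajectory values via the unbounded Freedman inequality (Lemma~\ref{lemma:concen_var}), one obtains a bound of the form $S\lesssim \sum_{n\in\tilde\cN}\sum_{k}|I_{3,n}^k|+\sum_{n\in\tilde\cN}\sum_{k}|I_{4,n}^k|+(\text{absorbable fluctuations})$. By Lemma~\ref{lemma:mds} applied with the stopping-time set $\tilde\cN$, $\sum_{n\in\tilde\cN}\sum_{k}|I_{3,n}^k|\lesssim \sqrt{d_{\bbm_N}\beta\log(\bbm_N)\cdot X}+d_{\bbm_N}\beta\log(\bbm_N)$, where Lemma~\ref{lemma:eluder} is what turns the accumulated $\sum_{n\in\tilde\cN}\sum_j\HH^2_{n,j}$ into $d_{\bbm_N}\beta\log(\bbm_N)$.

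\emph{Step 3 (closing the loop).} Combining Steps 1 and 2 yields $X\le \sum_{n\in\tilde\cN}\sum_k\Var_{p^*_n(x_n(t_n^k),\Delta_{n,k})}[V_n^*(\cdot,t_n^{k+1})]+c_1\sqrt{d_{\bbm_N}\beta\log(\bbm_N)\cdot X}+c_2\,d_{\bbm_N}\beta\log(\bbm_N)+c_3\sum_{n\in\tilde\cN}\sum_k|I_{4,n}^k|$ for absolute constants $c_1,c_2,c_3$. Since $c_1\sqrt{d_{\bbm_N}\beta\log(\bbm_N)\cdot X}\le \tfrac12 X+\tfrac{c_1^2}{2}d_{\bbm_N}\beta\log(\bbm_N)$, absorbing $\tfrac12 X$ on the left gives $X\lesssim \sum_{n\in\tilde\cN}\sum_k\Var_{p^*_n(x_n(t_n^k),\Delta_{n,k})}[V_n^*(\cdot,t_n^{k+1})]+d_{\bbm_N}\beta\log(\bbm_N)+\sum_{n\in\tilde\cN}\sum_k|I_{4,n}^k|$, which is the asserted inequality.

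\emph{Anticipated main obstacle.} The delicate step is the aggregation in Step 2: the crude route---unrolling $V_n-V_n^*$ all the way to the terminal time $T$ by the triangle inequality---writes the interval-$k$ gap as a sum of one-step model errors over all subsequent intervals $j>k$, so that summing over $k$ overcounts each $|I_{3,n}^j|$ by as much as $m_n$ and reintroduces a polynomial horizon dependence (a factor $\max_n m_n$), which is incompatible with the horizon-free target. Removing this factor should follow from exploiting that the level-$k$ conditional expectation of a level-$j$ ($j>k$) quantity is a martingale in $k$, combined with the $\sqrt{\,\cdot\,\HH^2}$ shape of the error terms; I expect the cleanest implementation is a dyadic decomposition over the magnitude of the value gap $|V_n-V_n^*|$ with a contracting backward recursion within each band, in the spirit of Lemma~\ref{lemma:wang}.
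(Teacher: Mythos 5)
There is a genuine gap, and it sits exactly where you flagged it: Step 2's aggregation is not just delicate, it is false as stated, and the Step 1 reduction that forces you into it is the wrong reduction. Your Step 1 bounds the variance gap by the \emph{first absolute moment} $S:=\sum_{n\in\tilde\cN}\sum_{k}\EE_{x\sim p^*_n(x_n(t_n^k),\Delta_{n,k})}\bigl[\,|V_n(x,t_n^{k+1})-V_n^*(x,t_n^{k+1})|\,\bigr]$, but $S$ itself need not be bounded by the lemma's right-hand side. The value gap $\hat V_n:=V_n-V_n^*$ at time $t_n^{k+1}$ aggregates one-step model errors over \emph{all} remaining intervals, so summing over $k$ overcounts each error by up to $m_n$; this is not a looseness of the triangle inequality that martingale fluctuations or a dyadic band decomposition can repair. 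Concretely, take $(f_n,g_n)$ agreeing with $(f^*,g^*)$ on every transition marginal at the measurement times (so every $I_{3,n}^k=0$ and all relevant Hellinger distances vanish, consistent with the confidence sets), but differing in the within-interval trajectory law on the final interval so that the expected reward integral there shifts by $v$. Then $\hat V_n(\cdot,t_n^k)\approx v$ (nearly constant in $x$) for all $k<m_n$, $\sum_k(|I_{3,n}^k|+|I_{4,n}^k|)\approx v$, yet $S\approx m_n v$; your claimed bound $S\lesssim\sum|I_{3,n}^k|+\sum|I_{4,n}^k|+(\text{mean-zero fluctuations})$ fails by a factor $m_n$, which is exactly the horizon dependence the lemma must avoid. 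Note the lemma itself survives this example because a near-constant $\hat V_n$ contributes nothing to $\VV_{x\sim p^*_n}[\hat V_n]$ — which signals that the reduction must stay at the level of second moments, not first absolute moments.

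The paper's proof does precisely that. It writes $\sum\VV[V_n]\le 2\sum\VV[V_n^*]+2\sum\VV[\hat V_n]$ and controls $C_0:=\sum_{n\in\tilde\cN}\sum_k\VV_{x\sim p^*_n(x_n(t_n^k),\Delta_{n,k})}[\hat V_n(x,t_n^{k+1})]$ through the higher-moment sequence $C_i:=\sum\VV[\hat V_n^{2^i}]$: each $C_i$ is split into a martingale part (Azuma--Bernstein, giving $\sqrt{aC_{i+1}}+a$) and a cross term which, after shifting the time index by one (using $\hat V_n(\cdot,t_n^{m_n})=0$) and factoring $\hat V_n^{2^{i+1}}-[\EE\hat V_n^{2^i}]^2$, is bounded by $2^{i+1}$ times the \emph{one-step} quantity $|\hat V_n(x_n(t_n^k),t_n^k)-\EE_{x\sim p^*_n}\hat V_n(x,t_n^{k+1})|$, which by the Bellman identity equals $|I_{3,n}^k+I_{4,n}^k|$ exactly — no unrolling to the horizon ever occurs, so there is no overcounting to remove. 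The resulting recursion $C_i\le 2^iG+\sqrt{aC_{i+1}}+a$ is closed by Lemma~\ref{lemma:wang}, and then the same self-bounding step as your Step 3 finishes. Your proposed remedy (dyadic decomposition over the magnitude of $|\hat V_n|$ with a contracting backward recursion) does not address the problem, because the quantity $S$ you are trying to bound is genuinely too large; the fix is to abandon the reduction to $S$ and run the moment recursion on variances of $\hat V_n$ as above.
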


\begin{proof}
    First we have
    \begin{align}
        &\sum_{n\in\tilde\cN} \sum_{k=0}^{m_n-1} \VV_{x \sim p^*_n(x_n(t_n^k), \Delta_{n,k})} V_n(x, t_n^{k+1}) \notag \\
        &\leq 2 \sum_{n\in\tilde\cN} \sum_{k=0}^{m_n-1} \VV_{x \sim p^*_n(x_n(t_n^k), \Delta_{n,k})} V_n^*(x, t_n^{k+1})  \notag \\ 
        &\quad+ 2 \sum_{n\in\tilde\cN}\sum_{k=0}^{m_n-1} \VV_{x \sim p^*_n(x_n(t_n^k), \Delta_{n,k})} \hat V_n(x, t_n^{k+1}),\label{help:550} 
    \end{align}
    where $\hat V_n(x, t):= V_n(x, t) - V_n^*(x, t)$. 
    Next we focus on bound the second term. We introduce a more general high order momentum $C_i, i = 0,\dots, \log(\bbm_N)$, where
    \begin{align}
        C_i:=\sum_{n\in\tilde\cN}\sum_{k=0}^{m_n-1}\VV_{x \sim p^*_n(x_n(t_n^k), \Delta_{n,k})} \hat V_n^{2^i}(x, t_n^{k+1}).\notag
    \end{align}
    Then we have
    \begin{align}
        C_i
        & = \sum_{n\in\tilde\cN}\sum_{k=0}^{m_n-1}\EE_{x \sim p^*_n(x_n(t_n^k), \Delta_{n,k})} \hat V_n^{2^{i+1}}(x, t_n^{k+1}) - [\EE_{x \sim p^*_n(x_n(t_n^k), \Delta_{n,k})} \hat V_n^{2^{i}}(x, t_n^{k+1})]^2\notag \\
        & = \sum_{n\in\tilde\cN}\sum_{k=0}^{m_n-1}\EE_{x \sim p^*_n(x_n(t_n^k), \Delta_{n,k})} \hat V_n^{2^{i+1}}(x, t_n^{k+1})  -\hat V_n^{2^{i+1}}(x_n(t_n^{k+1}), t_n^{k+1}) \notag \\
        &\quad - [\EE_{x \sim p^*_n(x_n(t_n^k), \Delta_{n,k})} \hat V_n^{2^{i}}(x, t_n^{k+1})]^2 + \hat V_n^{2^{i+1}}(x_n(t_n^{k+1}), t_n^{k+1})\notag \\
        & \leq  \sum_{n\in\tilde\cN}\sum_{k=0}^{m_n-1}\underbrace{\EE_{x \sim p^*_n(x_n(t_n^k), \Delta_{n,k})} \hat V_n^{2^{i+1}}(x, t_n^{k+1})  -\hat V_n^{2^{i+1}}(x_n(t_n^{k+1}), t_n^{k+1})}_{J_{1,n,i}^k} \notag \\
        &\quad \underbrace{- [\EE_{x \sim p^*_n(x_n(t_n^k), \Delta_{n,k})} \hat V_n^{2^{i}}(x, t_n^{k+1})]^2 + \hat V_n^{2^{i+1}}(x_n(t_n^{k}), t_n^{k})}_{J_{2,n,i}^k},\label{help:880}
    \end{align}
    where the last line holds since we move the index one step earlier and we use the fact $\hat V_n(x, t_n^m) = 0$.

    Next, for $J_{1,n, i}^k$, by Azuma-Bernsetin inequality, we have with probability at least $1-\delta$ for all $i = 0,\dots, \log(\bbm_N)$,
    \begin{align}
        &\sum_{n\in\tilde\cN}\sum_{k=0}^{m_n-1}J_{1,n,i}^k \notag \\
        &= \sum_{n =1}^N\ind(n \in \tilde\cN)\sum_{k=0}^{m_n-1}J_{1,n,i}^k\notag \\
        &\lesssim \sqrt{\sum_{n\in\tilde\cN}\sum_{k=0}^{m_n-1}\VV_{x \sim p^*_n(x_n(t_n^k), \Delta_{n,k})} \hat V_n^{2^{i+1}}(x, t_n^{k+1})\log(\log(\bbm_N)/\delta)} + \log(\log(\bbm_N)/\delta)\notag \\
        &\leq \sqrt{C_{i+1}\log(\log(\bbm_N)/\delta)} + \log(\log(\bbm_N)/\delta).\label{help:881}
    \end{align}

     For $J_{2,n,i}^k$, we have
    \begin{align}
        &J_{2,n,i}^k\notag \\
        & = [\hat V_n^{2^{i}}(x_n(t_n^{k}), t_n^{k}) - \EE_{x \sim p^*_n(x_n(t_n^k), \Delta_{n,k})} \hat V_n^{2^{i}}(x, t_n^{k+1})][\hat V_n^{2^{i}}(x_n(t_n^{k}), t_n^{k}) +\EE_{x \sim p^*_n(x_n(t_n^k), \Delta_{n,k})} \hat V_n^{2^{i}}(x, t_n^{k+1}) ]\notag \\
        & \leq [\hat V_n^{2^{i}}(x_n(t_n^{k}), t_n^{k}) - [\EE_{x \sim p^*_n(x_n(t_n^k), \Delta_{n,k})} \hat V_n^{2^{i-1}}(x, t_n^{k+1})]^2][\hat V_n^{2^{i}}(x_n(t_n^{k}), t_n^{k}) +\EE_{x \sim p^*_n(x_n(t_n^k), \Delta_{n,k})} \hat V_n^{2^{i}}(x, t_n^{k+1}) ]\notag \\
        & \leq \prod_{j=0}^i[\hat V_n^{2^{i}}(x_n(t_n^{k}), t_n^{k}) +\EE_{x \sim p^*_n(x_n(t_n^k), \Delta_{n,k})} \hat V_n(x, t_n^{k+1}) ]\cdot |\hat V_n(x_n(t_n^{k}), t_n^{k}) - \EE_{x \sim p^*_n(x_n(t_n^k), \Delta_{n,k})} \hat V_n(x, t_n^{k+1})|\notag \\
        & \leq 2^{i+1}|\hat V_n(x_n(t_n^{k}), t_n^{k}) - \EE_{x \sim p^*_n(x_n(t_n^k), \Delta_{n,k})} \hat V_n(x, t_n^{k+1})|,\label{help:888}
    \end{align}
    where we use the fact that $\EE X^2 \geq [\EE X]^2$. Then taking summation of \eqref{help:888} over $n \in \tilde \cN$ and $k$, we have
     \begin{align}
     &2^{-(i+1)}\cdot \sum_{n\in\tilde\cN}\sum_{k=0}^{m_n-1}J_{2,n,i}^k\notag \\
        &\leq \sum_{n\in\tilde\cN}\sum_{k=0}^{m_n-1}|\hat V_n(x_n(t_n^{k}), t_n^{k}) - \EE_{x \sim p^*_n(x_n(t_n^k), \Delta_{n,k})} \hat V_n(x, t_n^{k+1})|\notag \\
        &=  \sum_{n\in\tilde\cN}\sum_{k=0}^{m_n-1}|V_n(x_n(t_n^{k}), t_n^{k}) - V_n^*(x_n(t_n^{k}), t_n^{k}) - \EE_{x \sim p^*_n(x_n(t_n^k), \Delta_{n,k})} V_n(x, t_n^{k+1}) \notag \\
        & \quad + \EE_{x \sim p^*_n(x_n(t_n^k), \Delta_{n,k})} V_n^*(x, t_n^{k+1})|\notag \\
        &=  \sum_{n\in\tilde\cN}\sum_{k=0}^{m_n-1}\bigg|\underbrace{\EE_{x\sim p_n(x_n(t_n^k), \Delta_{n,k})}V_n(x, t_n^{k+1}) -  \EE_{x \sim p^*_n(x_n(t_n^k), \Delta_{n,k})} V_n(x, t_n^{k+1})}_{I_{3,n}^k}\notag \\
&\quad +\underbrace{\EE_{x(t)\sim p_n(x_n(t_n^k))}\bigg[\int_{t=0}^{\Delta_{n,k}} b(x(t), u_n(t))dt\bigg]  - \EE_{x(t)\sim p_n^*(x_n(t_n^k))}\bigg[\int_{t=0}^{\Delta_{n,k}} b(x(t), u_n(t))dt\bigg]}_{I_{4,n}^k}\bigg|\notag \\
& \leq \sqrt{d_{\bbm_N}\beta \log(\bbm_N)\bigg(\sum_{n\in\tilde\cN}\sum_{k=0}^{m_n-1} \left[\VV_{x\sim p^*_n(x_n(t_n^k),\Delta_{n,k})} V_n(x,t_n^{k+1})\right]\bigg)} \notag \\
&\quad + d_{\bbm_N}\beta \log(\bbm_N) + \sum_{n\in\tilde\cN}\sum_{k=0}^{m_n-1} |I_{4,n}^k|,
\label{help:889}
    \end{align}   
where the last inequality holds due to the upper bounds of $|I_{3,n}^k|$ obtained in Lemma \ref{lemma:mds}. Combining \eqref{help:880}, \eqref{help:881} and \eqref{help:889}, we have $a<G/2$, $C_i \leq 2^i G + \sqrt{a C_{i+1}} + a$ and $C_i \leq H = \bbm_N$, where
    \begin{align}
        G&:=  \sqrt{d_{\bbm_N}\beta \log(\bbm_N)\bigg(\sum_{n\in\tilde\cN}\sum_{k=0}^{m_n-1} \left[\VV_{x\sim p^*_n(x_n(t_n^k),\Delta_{n,k})} V_n(x,t_n^{k+1})\right]\bigg)}\notag \\
        &\quad + d_{\bbm_N}\beta \log(\bbm_N)+ \sum_{n\in\tilde\cN}\sum_{k=0}^{m_n-1} |I_{4,n}^k|,\notag \\
        a&:=\log(\log(\bbm_N)/\delta).\notag
    \end{align}
Therefore, by Lemma \ref{lemma:wang}, we have 
\begin{align}
    C_0 
    &\lesssim G.\label{help:549}
\end{align}

Finally, substituting \eqref{help:549} back to \eqref{help:550}, we have
    \begin{align}
        &\sum_{n\in\tilde\cN}\sum_{k=0}^{m_n-1} \VV_{x \sim p^*_n(x_n(t_n^k), \Delta_{n,k})} V_n(x, t_n^{k+1}) \notag \\
        &\lesssim  \sum_{n\in\tilde\cN}\sum_{k=0}^{m_n-1} \VV_{x \sim p^*_n(x_n(t_n^k), \Delta_{n,k})} V_n^*(x, t_n^{k+1}) + d_{\bbm_N}\beta \log(\bbm_N)+ \sum_{n\in\tilde\cN}\sum_{k=0}^{m_n-1} |I_{4,n}^k|\notag \\
        &\quad + \sqrt{d_{\bbm_N}\beta \log(\bbm_N)\bigg(\sum_{n\in\tilde\cN}\sum_{k=0}^{m_n-1} \left[\VV_{x\sim p^*_n(x_n(t_n^k),\Delta_{n,k})} V_n(x,t_n^{k+1})\right]\bigg)} . \notag
    \end{align}
Using the fact that $x \lesssim \sqrt{ax} +b \Rightarrow x \lesssim a+b$, we obtain our final bound. 
\end{proof}

The following lemma bounds the cumulative variance of the optimal value function $V_n^*$ by the measurement gaps.

\begin{lemma}\label{lemma:var1}
With probability at least $1 - \delta$, for all $n \in [N]$, we have
\begin{align}
\sum_{k=0}^{m_n-1} \VV_{x \sim p^*_n(x_n(t_n^k), \Delta_{n,k})} V_n^*(x, t_n^{k+1}) \lesssim \log(N/\delta) + \sqrt{\log(N/\delta) \max_{1 \leq n \leq N} \bDelta_n^2}. \label{help:2}
\end{align}
\end{lemma}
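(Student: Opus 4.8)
The plan is to fix an episode $n$, recognize the left-hand side as the predictable quadratic variation of a bounded martingale built from the true value function and the accumulated reward, control that variation by a self-bounding concentration argument, and then pay a small discretization cost to pass from the martingale increments back to $\VV_{x\sim p^*_n(\cdot)}V^*_n$. Concretely, write $\mathcal{F}_k$ for the $\sigma$-field generated by $x_n(t_n^0),\dots,x_n(t_n^k)$, put $R_{n,k}:=\int_{t_n^k}^{t_n^{k+1}}b(x_n(t),u_n(x_n(t)))\,dt\in[0,\Delta_{n,k}]$, and define
\[
M_k := V^*_n\big(x_n(t_n^k),t_n^k\big) + \sum_{j=0}^{k-1}R_{n,j}.
\]
Using the continuous-time Bellman identity \eqref{def:bellman} with $(f,g)=(f^*,g^*)$ and $u=u_n$, together with the Markov property of the It\^o process, I would first check that $(M_k)_{k=0}^{m_n}$ is a martingale, i.e.\ $\EE[M_{k+1}\mid\mathcal{F}_k]=M_k$. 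Assumption~\ref{ass:reward} gives $V^*_n\in[0,1]$ and $\sum_j R_{n,j}\le\int_0^T b\le 1$, so $M_k\in[0,2]$, $|M_{k+1}-M_k|\le 2$, and, since $V^*_n(\cdot,T)=0$, $(M_{m_n}-M_0)^2=\big(\int_0^T b\,dt - V^*_n(\xini,0)\big)^2\le 1$. Because $V^*_n(x_n(t_n^k),t_n^k)$ and $\sum_{j<k}R_{n,j}$ are $\mathcal{F}_k$-measurable, the per-step conditional variance satisfies $\VV[M_{k+1}-M_k\mid\mathcal{F}_k]=\VV\!\big[V^*_n(x_n(t_n^{k+1}),t_n^{k+1})+R_{n,k}\mid\mathcal{F}_k\big]$.

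The core step is to bound $S_n:=\sum_{k=0}^{m_n-1}\VV[M_{k+1}-M_k\mid\mathcal{F}_k]$. Expanding $(M_{m_n}-M_0)^2=\big(\sum_k(M_{k+1}-M_k)\big)^2$ and then replacing realized squares by conditional expectations yields
\[
S_n = (M_{m_n}-M_0)^2 - 2\sum_k (M_k-M_0)(M_{k+1}-M_k) - \sum_k\Big((M_{k+1}-M_k)^2-\EE[(M_{k+1}-M_k)^2\mid\mathcal{F}_k]\Big).
\]
Each of the two remaining sums is a martingale-difference sum with increments bounded by an absolute constant (depending only polynomially on $T$) and with conditional second moments bounded by a constant multiple of $S_n$, using $|M_k-M_0|\le 2$ and $|M_{k+1}-M_k|\le 2$. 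I would apply the unbounded Freedman inequality (Lemma~\ref{lemma:concen_var}) to each, with a union bound over $n\in[N]$ to supply the $\log(N/\delta)$ factor; combined with $(M_{m_n}-M_0)^2\le 1$ this produces the self-bounding inequality $S_n\lesssim 1 + \sqrt{S_n\log(N/\delta)} + \log(N/\delta)$, which resolves to $S_n\lesssim\log(N/\delta)$.

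To finish, I would pass from $S_n$ to the target $T_n:=\sum_k\VV_{x\sim p^*_n(x_n(t_n^k),\Delta_{n,k})}V^*_n(x,t_n^{k+1})=\sum_k\VV[V^*_n(x_n(t_n^{k+1}),t_n^{k+1})\mid\mathcal{F}_k]$. Since $R_{n,k}\in[0,\Delta_{n,k}]$ we have $\VV[R_{n,k}\mid\mathcal{F}_k]\le\Delta_{n,k}^2$, and the elementary inequality $\VV[A]\le\VV[A+B]+2\sqrt{\VV[A]\VV[B]}$, applied termwise with $A=V^*_n(x_n(t_n^{k+1}),t_n^{k+1})$ and $B=R_{n,k}$, then summed and followed by Cauchy–Schwarz, gives $T_n\le S_n + 2\,\bDelta_n\,\sqrt{T_n}$; solving this quadratic in $\sqrt{T_n}$ yields $T_n\lesssim S_n+\bDelta_n^2$. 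Combining with $S_n\lesssim\log(N/\delta)$, bounding $\bDelta_n\le\max_{1\le m\le N}\bDelta_m$, and an elementary rearrangement then gives the stated bound \eqref{help:2}.

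The hard part is the self-bounding step: the two martingale sums must be controlled through their own conditional variances, which are themselves $O(S_n)$, so Lemma~\ref{lemma:concen_var} has to be used in its Bernstein form and the resulting quadratic inequality for $S_n$ resolved — and obtaining control uniform over $n$ with only a $\log(N/\delta)$ overhead, rather than a factor growing with the number of measurements $m_n$, is precisely where the boundedness of the total reward ($\int_0^T b\le 1$, hence $(M_{m_n}-M_0)^2\le 1$) is essential; this is the continuous-time analogue of the horizon-free total-variance argument in discrete-time RL.
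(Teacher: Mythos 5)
Your core machinery is sound and, for the ``self-bounding'' half, is essentially the same mechanism the paper uses: $M_k$ is indeed a martingale by \eqref{def:bellman}, Assumption~\ref{ass:reward} gives $(M_{m_n}-M_0)^2\le 1$, and the Freedman-based self-bounding argument correctly yields $S_n\lesssim\log(N/\delta)$ uniformly over $n$ (the paper does the analogous step for its $J_{1,n}^k$ terms via $\Var(X^2)\le 4\Var(X)$ and Azuma--Bernstein). The gap is in your final conversion from $S_n$ to the target $T_n$. Bounding $\VV[R_{n,k}\mid\cF_k]\le\Delta_{n,k}^2$ and absorbing the cross term gives $T_n\le S_n+2\bDelta_n\sqrt{T_n}$, hence $T_n\lesssim S_n+\bDelta_n^2\lesssim\log(N/\delta)+\bDelta_n^2$. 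No ``elementary rearrangement'' turns this into \eqref{help:2}: the statement has $\sqrt{\log(N/\delta)\max_n\bDelta_n^2}$, and $\log(N/\delta)+\bDelta_n^2$ fails to imply it whenever $\bDelta_n^2$ exceeds $\log(N/\delta)$ --- exactly the sparse-measurement regime (e.g. $\Delta_{n,k}=T$, so $\bDelta_n^2=T^2$) that the paper's corollary later exploits. The loss comes from handling the reward increment $R_{n,k}$ at second order: the bound $\VV[R_{n,k}\mid\cF_k]\le\Delta_{n,k}^2$ sums to $\bDelta_n^2$ and discards the crucial fact that the increments themselves sum to the total reward, which is at most $1$.

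The paper's proof avoids this by treating the reward contribution at first order. It decomposes the per-step variance into $J_{1,n}^k$ (handled exactly as your self-bounding step) and $J_{2,n}^k$, factors $J_{2,n}^k$ as a product, and uses \eqref{def:bellman} so that the relevant factor equals $\EE_{x(\cdot)}\int_0^{\Delta_{n,k}}b\,dt$; summed over $k$, these conditional means telescope into the realized cumulative reward ($\le 1$) plus an Azuma--Hoeffding deviation with increments bounded by $\Delta_{n,k}$, i.e. an additive $\sqrt{\bDelta_n^2\log(N/\delta)}$. To prove the lemma as stated you would need to replace your variance bound on $R_{n,k}$ by such a mean-based, telescoping treatment; within your covariance-splitting scheme the $\bDelta_n^2$ (or worse) term does not disappear. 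As a side remark, your weaker bound $\log(N/\delta)+\max_n\bDelta_n^2$ would in fact still be adequate as the truncation level $y$ in the proof of Lemma~\ref{lemma:var2} (since $\max_n\bDelta_n^2\le\sum_n\bDelta_n^2$), but it does not establish the lemma in the form claimed.
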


\begin{proof}
Fix any $n \in [N]$. For simplicity, define $J_n := \sum_{k=0}^{m_n-1} \VV_{x \sim p^*_n(x_n(t_n^k), \Delta_{n,k})} V_n^*(x, t_n^{k+1})$. We begin by expanding the variance:
\begin{align}
    J_n &= \sum_{k=0}^{m_n-1} \left[\EE_{x \sim p^*_n(x_n(t_n^k), \Delta_{n,k})}V_n^*(x, t_n^{k+1})^2 - \left(\EE_{x \sim p^*_n(x_n(t_n^k), \Delta_{n,k})}V_n^*(x, t_n^{k+1})\right)^2\right] \notag \\
    & \leq \sum_{k=0}^{m_n-1}\biggl\{ \underbrace{\EE_{x \sim p^*_n(x_n(t_n^k), \Delta_{n,k})}V_n^*(x, t_n^{k+1})^2 - V_n^*(x_n(t_n^{k+1}), t_n^{k+1})^2}_{J_{1,n}^k} \notag \\
    &\quad + \underbrace{V_n^*(x_n(t_n^{k}), t_n^{k})^2 - \left(\EE_{x \sim p^*_n(x_n(t_n^k), \Delta_{n,k})}V_n^*(x, t_n^{k+1})\right)^2}_{J_{2,n}^k} \biggr\}, \label{eq:var jn}
\end{align}
where the inequality uses the monotonicity $V_n^*(x_n(t_n^{k+1}), t_n^{k+1}) \le V_n^*(x_n(t_n^k), t_n^k)$.

By the Azuma–Bernstein inequality, with probability at least $1 - \delta/N$,
\begin{align}
    \sum_{k=0}^{m_n-1} J_{1,n}^k &\lesssim \sqrt{\sum_{k=0}^{m_n-1} \VV_{x \sim p^*_n(x_n(t_n^k), \Delta_{n,k})} V_n^*(x, t_n^{k+1})^2 \cdot \log(N/\delta)} + \log(N/\delta) \notag \\
    &\leq 2\sqrt{\sum_{k=0}^{m_n-1} \VV_{x \sim p^*_n(x_n(t_n^k), \Delta_{n,k})} V_n^*(x, t_n^{k+1}) \cdot \log(N/\delta)} + \log(N/\delta) \notag \\
    &= 2\sqrt{J_n \log(N/\delta)} + \log(N/\delta), \label{eq:var j1n}
\end{align}
where the second inequality follows from Lemma \ref{lemma:trick1}.

For $J_{2,n}^k$, using \eqref{def:bellman} and the Markov property of Itô's SDE, we write
\begin{align}
    J_{2,n}^k &= \left[V_n^*(x_n(t_n^{k}), t_n^{k}) - \EE_{x \sim p^*_n(x_n(t_n^k), \Delta_{n,k})}V_n^*(x, t_n^{k+1})\right] \notag \\
    &\quad \cdot \left[V_n^*(x_n(t_n^{k}), t_n^{k}) + \EE_{x \sim p^*_n(x_n(t_n^k), \Delta_{n,k})}V_n^*(x, t_n^{k+1})\right] \notag \\
    &= \left[\EE_{x(\cdot) \sim p^*_n(x_n(t_n^k))}\int_{t=0}^{\Delta_{n,k}} b(x(t), u(t)) dt\right] \notag \\
    &\quad \cdot \left[V_n^*(x_n(t_n^{k}), t_n^{k}) + \EE_{x \sim p^*_n(x_n(t_n^k), \Delta_{n,k})}V_n^*(x, t_n^{k+1})\right] \notag \\
    &\lesssim \EE_{x(\cdot) \sim p^*_n(x_n(t_n^k))} \int_{t=0}^{\Delta_{n,k}} b(x(t), u(t)) dt,
\end{align}
since $V_n^* \le 1$. Hence, with probability at least $1 - \delta/N$, we have
\begin{align}
    \sum_{k=0}^{m_n-1} J_{2,n}^k &\lesssim \sum_{k=0}^{m_n-1} \biggl\{ \int_{t_n^k}^{t_n^{k+1}} b(x_n(t), u_n(t)) dt \notag \\
    &\quad + \left[ \EE_{x(\cdot) \sim p^*_n(x_n(t_n^k))} \int_0^{\Delta_{n,k}} b(x(t), u(t)) dt - \int_{t_n^k}^{t_n^{k+1}} b(x_n(t), u_n(t)) dt \right] \biggr\} \notag \\
    &\leq 1 + \sum_{k=0}^{m_n-1} \left[ \EE_{x(\cdot) \sim p^*_n(x_n(t_n^k))} \int_0^{\Delta_{n,k}} b(x(t), u(t)) dt - \int_{t_n^k}^{t_n^{k+1}} b(x_n(t), u_n(t)) dt \right] \notag \\
    &\lesssim 1 + \sqrt{\bDelta_n^2 \log(N/\delta)}, \label{eq:var j2n}
\end{align}
where the second inequality follows from Assumption \ref{ass:reward}, and the third comes from Azuma-Hoeffding inequality using the bound $\int_{t_n^k}^{t_n^{k+1}} b(x(t), u(t)) dt \leq \Delta_{n,k}$.

Substituting \eqref{eq:var j1n} and \eqref{eq:var j2n} into \eqref{eq:var jn}, and replacing each individual confidence level $1 - \delta/N$ in \eqref{eq:var j1n} and \eqref{eq:var j2n} with $1 - \delta/(2N)$ (which does not affect the order of the bounds), we can apply a union bound to obtain an overall high-probability guarantee of $1 - \delta$. Consequently, with probability at least $1 - \delta$, for all $n \in [N]$,
\begin{align}
    &J_n \lesssim \sqrt{J_n \log(N/\delta)} + 1 + \sqrt{\bDelta_n^2 \log(N/\delta)} \notag \\
    &\Rightarrow J_n \lesssim \log(N/\delta) + \sqrt{\bDelta_n^2 \log(N/\delta)} \leq \log(N/\delta) + \sqrt{\log(N/\delta) \max_{1 \leq n \leq N} \bDelta_n^2}. \notag
\end{align}
\end{proof}

The following lemma provides a global bound on the cumulative variance of the optimal value functions $V_n^*$ over all episodes. It shows that this quantity is controlled by the total variance and measurement gaps.

\begin{lemma}\label{lemma:var2}
    Under event $\cE_{\ref{lemma:var1}}$, with probability at least $1 - \delta$, we have
    \begin{align}
        &\sum_{n=1}^N \sum_{k=0}^{m_n-1} \VV_{x \sim p^*_n(x_n(t_n^k), \Delta_{n,k})} V_n^*(x, t_n^{k+1}) \lesssim \log^2(N/\delta)\bigg(1 + \sum_{n=1}^N \var^{u_n} + \sum_{n=1}^N\bDelta_n^2\bigg). \notag
    \end{align}
\end{lemma}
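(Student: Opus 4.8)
The plan is to recognize, for each episode $n$, the per-step sum in the statement as (essentially) the predictable quadratic variation of a martingale whose terminal value is the realized cumulative reward; the law of total variance then identifies its conditional expectation with $\var^{u_n}$, so that summing over episodes and concentrating yields the $\sum_n \var^{u_n}$ term, while the coupling between the reward integral over each gap and the terminal state of that gap contributes the $\sum_n \bDelta_n^2$ term.

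Concretely, fix an episode $n$ and set $Y_k := \int_0^{t_n^k} b(x_n(t),u_n(t))\,dt + V_n^*(x_n(t_n^k),t_n^k)$ for $0\le k\le m_n$, adapted to the within-episode filtration $\mathcal{F}_{t_n^k}$ (which contains the pre-episode history $\mathcal{H}_{n-1}$). The continuous-time Bellman identity~\eqref{def:bellman} together with the Markov property of the It\^o SDE shows that $(Y_k)_k$ is a martingale, with $Y_0 = V_n^*(\xini,0) = R_{f^*,g^*}(u_n)$ deterministic given $\mathcal{H}_{n-1}$, and $Y_{m_n} = \int_0^T b(x_n(t),u_n(t))\,dt$ since $V_n^*(\cdot,T)\equiv 0$. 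Writing $W_n := \sum_{k=0}^{m_n-1}\VV[Y_{k+1}-Y_k\mid\mathcal{F}_{t_n^k}]$ and $J_n := \sum_{k=0}^{m_n-1}\VV_{x\sim p^*_n(x_n(t_n^k),\Delta_{n,k})}V_n^*(x,t_n^{k+1})$, orthogonality of martingale increments gives the law of total variance
\begin{align}
    \EE[W_n\mid\mathcal{H}_{n-1}] = \VV\!\left[\textstyle\int_0^T b(x_n(t),u_n(t))\,dt \,\middle|\, \mathcal{H}_{n-1}\right] = \var^{u_n}. \notag
\end{align}
Moreover $Y_{k+1}-Y_k$ equals $V_n^*(x_n(t_n^{k+1}),t_n^{k+1})$ minus an $\mathcal{F}_{t_n^k}$-measurable quantity plus $\int_{t_n^k}^{t_n^{k+1}}b\,dt\in[0,\Delta_{n,k}]$, so the elementary bound $\VV(A+B)\le 2\VV(A)+2\VV(B)$ applied in both directions, summed over $k$, yields $\tfrac{1}{2}W_n-\bDelta_n^2 \le J_n \le 2W_n+2\bDelta_n^2$.

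It remains to control $\sum_n W_n$. Working under the event $\cE_{\ref{lemma:var1}}$ postulated in the statement, Lemma~\ref{lemma:var1} gives $J_n \lesssim \log(N/\delta)+\sqrt{\log(N/\delta)\max_m\bDelta_m^2}$ for all $n$, hence $W_n \le 2J_n+2\bDelta_n^2 \le \bar W := C\big(\log(N/\delta)+\max_m\bDelta_m^2\big)$ for all $n$. Passing to the truncation $\tilde W_n:=\min\{W_n,\bar W\}$ (which coincides with $W_n$ on $\cE_{\ref{lemma:var1}}$), the differences $\tilde D_n:=\tilde W_n-\EE[\tilde W_n\mid\mathcal{H}_{n-1}]$ form a martingale-difference sequence bounded by $\bar W$ with the self-bounding estimate $\EE[\tilde D_n^2\mid\mathcal{H}_{n-1}]\le\EE[\tilde W_n^2\mid\mathcal{H}_{n-1}]\le\bar W\,\EE[\tilde W_n\mid\mathcal{H}_{n-1}]\le\bar W\,\var^{u_n}$. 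Applying the Freedman-type inequality of Lemma~\ref{lemma:concen_var} with truncation level $y=\bar W$ gives, with probability at least $1-\delta$,
\begin{align}
    \sum_{n=1}^N W_n \;\lesssim\; \sum_{n=1}^N\var^{u_n} + \sqrt{\bar W\log(N/\delta)\sum_{n=1}^N\var^{u_n}} + \bar W\log(N/\delta) \;\lesssim\; \sum_{n=1}^N\var^{u_n} + \bar W\log(N/\delta). \notag
\end{align}
Substituting into $J_n\le 2W_n+2\bDelta_n^2$, summing over $n$, and using $\bar W\lesssim\log(N/\delta)+\sum_m\bDelta_m^2$ gives $\sum_n J_n \lesssim \sum_n\var^{u_n}+\log^2(N/\delta)+\log(N/\delta)\sum_n\bDelta_n^2 \lesssim \log^2(N/\delta)\big(1+\sum_n\var^{u_n}+\sum_n\bDelta_n^2\big)$, which is the claim. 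The main obstacle is precisely this last step: $W_n$ admits no benign deterministic bound (a naive one scales with the number of measurements $m_n$), so one must bootstrap the a-priori control from Lemma~\ref{lemma:var1} and then use the truncation-plus-self-bounding device $\EE[\tilde W_n^2\mid\cdot]\le\bar W\,\EE[\tilde W_n\mid\cdot]=\bar W\,\var^{u_n}$ to force Bernstein-type concentration to produce the instance-dependent $\sum_n\var^{u_n}$ rather than a vacuous $\sqrt{N}\,\bar W$; a secondary point is establishing the martingale and law-of-total-variance structure of Step~1 purely from~\eqref{def:bellman} and the It\^o--Markov property, without any pathwise continuity of $b$.
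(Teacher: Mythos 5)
Your proof is correct and follows essentially the same route as the paper's: use the per-episode high-probability bound from Lemma~\ref{lemma:var1} as a truncation level, relate the conditional expectation of the per-episode variance sum to $\var^{u_n} + \bDelta_n^2$ via the Bellman-based martingale decomposition of the return and the law of total variance, and then concentrate the sum over episodes. The only cosmetic difference is that you concentrate the truncated predictable quadratic variation $W_n$ with the Freedman-type inequality (Lemma~\ref{lemma:concen_var}) and a self-bounding variance estimate, whereas the paper bounds $\EE[J_n\mid \text{past}]\lesssim \var^{u_n}+\bDelta_n^2$ directly and invokes Lemma~\ref{lemma:concen_expect}; both give the same final bound.
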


\begin{proof}
We define $J_n := \sum_{k=0}^{m_n-1} \VV_{x \sim p^*_n(x_n(t_n^k), \Delta_{n,k})} V_n^*(x, t_n^{k+1})$ following Lemma \ref{lemma:var1}. Then by \eqref{help:2} we have 
\begin{align}
    J_n \lesssim \log(N/\delta) + \sqrt{\log(N/\delta) \max_{1 \leq n \leq N} \bDelta_n^2}.\notag
\end{align}
Next we prove that the conditional expectation of $J_n$ can be bounded. First, following \eqref{def:bellman},
\begin{align}
    V_n^*(x,t) &= \EE_{x(\cdot)\sim p_n^*(x)}\biggl[\int_{t}^T b( x(t),  u(t)) dt \biggr] \notag \\
    &= \EE_{x(\cdot)\sim p_n^*(x)}\biggl[ \int_t^{t+\Delta} b(x(t),u(t))dt\biggr] + \EE_{x'\sim p_n^*(x,\Delta)}V_n^*(x',t+\Delta) \notag \\
    &= \EE_{x(\cdot)\sim p_n^*(x)}\biggl[ \int_0^{\Delta} b(x(t),u(t))dt\biggr] + \EE_{x'\sim p_n^*(x,\Delta)}V_n^*(x',t+\Delta).
    \label{eq:bellman var}
\end{align}

Then, we have 
\begin{align}
\Var^{u_n} &=  \EE_{x(\cdot)\sim p_n^*(\xini)}\left[\sum_{k=0}^{m_n-1}\int_{t_n^k}^{t_n^{k+1}}b(x(t), u(t))dt - V_n^*(\xini, 0)\right]^2\notag \\
    &= \EE_{x(\cdot)\sim p^*_n(\xini)}\biggl[\sum_{k=0}^{m_n-1}\underbrace{\int_{t_n^k}^{t_n^{k+1}}b(x(t), u(t))dt + V_n^*(x_n(t_n^{k+1}), t_n^{k+1})- V_n^*(x_n(t_n^{k}), t_n^{k})}_{J_{n,k}}\biggr]^2 \notag \\ 
&= \VV_{x(\cdot)\sim p^*_n(\xini)}\left[\sum_{k=0}^{m_n-1}J_{n,k}\right] \notag \\
&= \sum_{k=0}^{m_n-1} \VV_{x(\cdot)\sim p_n^*(\xini)}\left[J_{n,k}\right] .\label{eq:var decomposition 1}
\end{align}

The first equality follows immediately from the definition of variance, and the second comes from \eqref{eq:bellman var}.  Next, on each subinterval $[t_n^k, t_n^{k+1}]$ we introduce the \emph{temporal increment} $J_{n,k}$, for which, by construction, $\EE[J_{n,k}] = \EE[J_{n,k}| x_n(t_n^{k})]=0$, yielding the third equality. Then, $\{J_{n,k}\}_{k=0}^{m_n-1}$ is a martingale‐difference sequence with respect to the natural filtration 
\(\mathcal{F}_k = \sigma\bigl(x_n(t_n^0),\dots,x_n(t_n^k)\bigr)\), so orthogonality implies
$$\VV_{x(\cdot)\sim p_n^*(\xini)}\left[\sum_{k=0}^{m_n-1} J_{n,k}\right] = \sum_{k=0}^{m_n-1}\VV_{x(\cdot)\sim p_n^*(\xini)}[J_{n,k}].$$

Moreover, by the law of total variance, together with $\EE[J_{n,k} | x_n(t_n^k)]=0$, we have
\begin{align}
     &\VV_{x(\cdot)\sim p_n^*(\xini)}\left[J_{n,k}\right]\notag \\
    &= \EE_{x\sim p_n^*(\xini,t_n^k)} \bigl[ \VV_{x(\cdot)\sim p_n^*(x)}[J_{n,k} | x]\bigr] + \VV_{x(\cdot)\sim p_n^*(x)}\bigl[\EE_{x\sim p_n^*(\xini,t_n^k)}[J_{n,k} | x]\bigr] \notag\\ 
&=  \EE_{x\sim p_n^*(\xini,t_n^k)} \bigl[ \VV_{x(\cdot)\sim p_n^*(x)}[J_{n,k} | x]\bigr] \notag \\
    &= \EE_{x\sim p_n^*(\xini,t_n^k)} \left[ \VV_{\substack{x(\cdot)\sim p_n^*(x), \\ x'\sim p_n^*(x,\Delta_{n,k})}} \left[ \int_{0}^{\Delta_{n,k}} b(x(t), u(t)) dt + V_n^*(x', t_n^{k+1}) \right] \right] .\label{eq:totVar} 
\end{align}

Furthermore, by Assumption \ref{ass:reward} we have $\int_{0}^{\Delta_{n,k}}b(x_n(t), u_n(t))dt \leq \Delta_{n,k}$ for each $\Delta_{n,k}$. Thus,
\begin{align}
    &\EE_{x\sim p_n^*(\xini,t_n^k)} \left[ \VV_{\substack{x(\cdot)\sim p_n^*(x), \\ x'\sim p_n^*(x,\Delta_{n,k})}} \left[ \int_{0}^{\Delta_{n,k}} b(x(t), u(t)) dt + V_n^*(x', t_n^{k+1}) \right] \right] \notag \\
&\leq 2\EE_{x\sim p_n^*(\xini,t_n^k)} \left[ \VV_{x'\sim p_n^*(x,\Delta_{n,k})} \left[ V_n^*(x', t_n^{k+1}) \right] \right]  \notag \\ 
&\quad +2\EE_{x\sim p_n^*(\xini,t_n^k)} \left[ \VV_{x(\cdot)\sim p_n^*(x)} \left[\int_{0}^{\Delta_{n,k}}b(x(t), u(t))dt\right]\right]\label{eq:help var1} \\
& \leq 2\EE_{x\sim p_n^*(\xini,t_n^k)} \left[ \VV_{x'\sim p_n^*(x,\Delta_{n,k})} \left[ V_n^*(x', t_n^{k+1}) \right] \right] + 2\Delta_{n,k}^2.\label{eq:help var2}
\end{align}

Here, \eqref{eq:help var1} follows from the fact that $\Var(a+b) = \Var(a) + \Var(b) + 2\text{Cov}(a,b)\leq \Var(a) + \Var(b) + 2\sqrt{\Var(a)\cdot \Var(b)} \leq 2\Var(a) + 2\Var(b)$.
Summing \eqref{eq:help var2} over $k=0,\dots,m_n-1$.

    Thus we have, for each $n$, 
\begin{align}
    \EE[J_n|J_{n-1},\dots, J_1] \lesssim \var^{u_n} +  \bDelta_n^2. \label{help:1}
\end{align}

Applying Lemma \ref{lemma:concen_expect} to  \eqref{help:1} then yields
\begin{align}
    \sum_{n=1}^N \min\{J_n, y\} &\lesssim y\log(1/\delta) + \log(1/\delta)\sum_{n=1}^N\EE[J_n|J_{n-1},\dots, J_1] \notag \\
    &\lesssim y\log(1/\delta) +\log(1/\delta)\sum_{n=1}^N \var^{u_n} + \log(1/\delta)\sum_{n=1}^N\bDelta_n^2 .\label{help:3}
\end{align}
Finally, we plug $y$ as the upper bound of $J_n$ in \eqref{help:2} in \eqref{help:3}, leading to
\begin{align}
    \sum_{n=1}^N J_n
    &\lesssim \log^2(N/\delta)\bigg(1+\sqrt{\max_{1\leq n \leq N} \bDelta_n^2} +\sum_{n=1}^N \var^{u_n} + \sum_{n=1}^N\bDelta_n^2\bigg)\notag \\
    &\lesssim \log^2(N/\delta)\bigg(1 +\sum_{n=1}^N \var^{u_n} + \sum_{n=1}^N\bDelta_n^2\bigg),\notag
\end{align}
where for the second inequality we use the fact $\sqrt{x} \leq 1+x$, thus completing the proof.
\end{proof}

The following lemma gives the final high-probability upper bound on the cumulative regret in terms of decomposition results established in previous lemmas.

\begin{lemma}\label{lemma:finaldecompose}
Let $\tilde\cN \subseteq [N]$ be the episode index set defined in Lemma \ref{lemma:mds}. Under events $\cE_{\ref{lemma:confidence}}, \cE_{\ref{lemma:mds}}, \cE_{\ref{lemma:var3}}, \cE_{\ref{lemma:var1}}, \cE_{\ref{lemma:var2}}$, we have
\begin{align}
    \text{Regret}(N) &\lesssim \log(N/\delta)\bigg(\sqrt{d_{\bbm_N}\beta \log(\bbm_N)\bigg(\sum_{n=1}^N \var^{u_n}_{f^*,g^*} + \sum_{n=1}^N\bDelta_n^2\bigg)} \notag \\
    &\quad + N - |\tilde\cN| + d_{\bbm_N}\beta \log(\bbm_N) + \sum_{n \in \tilde\cN} \sum_{k=0}^{m_n-1} |I_{4,n}^k| \bigg). \notag
\end{align}
\end{lemma}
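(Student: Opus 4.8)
The plan is to assemble the pieces already in place: optimism, the Simulation Lemma, and the variance-control lemmas. \emph{First}, I invoke optimism. By the confidence-set guarantees (Lemma~\ref{lemma:confidence} and its counterpart for $\hat\cP_n$) the true pair $(f^*,g^*)$ is feasible in the maximization defining $(u_n,f_n,g_n)$, so $R_{f_n,g_n}(u_n)\ge R_{f^*,g^*}(u^*)$ and hence, for every $n$, $R_{f^*,g^*}(u^*)-R_{f^*,g^*}(u_n)\le V_n(\xini,0)-V_n^*(\xini,0)$. \emph{Second}, I split the episodes according to $\tilde\cN$. For $n\in[N]\setminus\tilde\cN$ I bound the per-episode regret crudely by $R_{f^*,g^*}(u^*)\le 1$ (Assumption~\ref{ass:reward} together with $R_{f^*,g^*}(u_n)\ge 0$), contributing at most $N-|\tilde\cN|$; for $n\in\tilde\cN$ I apply Lemma~\ref{lemma:simulation} and sum, reducing the remaining regret to $\sum_{n\in\tilde\cN}I_{0,n}+\sum_{n\in\tilde\cN}\sum_{k=0}^{m_n-1}\bigl(I_{1,n}^k+I_{2,n}^k+I_{3,n}^k+I_{4,n}^k\bigr)$.

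\emph{Third}, I bound the first three blocks with Lemma~\ref{lemma:mds}. The $I_{0,n}$ and $I_{2,n}^k$ blocks already have the target shape up to additive factors $\log(1/\delta)$, which are harmless since $\log(1/\delta)\le\beta\le d_{\bbm_N}\beta\log(\bbm_N)$. The $I_{1,n}^k$ and $I_{3,n}^k$ blocks both carry $V:=\sum_{n\in\tilde\cN}\sum_{k}\VV_{x\sim p_n^*(x_n(t_n^k),\Delta_{n,k})}V_n(x,t_n^{k+1})$, which I control by chaining Lemma~\ref{lemma:var3} followed by Lemma~\ref{lemma:var2} (using nonnegativity to pass from $\sum_{n\in\tilde\cN}$ to $\sum_{n=1}^N$), obtaining $V\lesssim\log^2(N/\delta)\bigl(1+\sum_{n}\var^{u_n}+\sum_{n}\bDelta_n^2\bigr)+d_{\bbm_N}\beta\log(\bbm_N)+\sum_{n\in\tilde\cN}\sum_{k}|I_{4,n}^k|$.

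\emph{Fourth}, I substitute this bound on $V$ into the $I_1$ and $I_3$ estimates, split the cross terms via $\sqrt{a(b+c+d)}\le\sqrt{ab}+\sqrt{ac}+\sqrt{ad}$, and apply AM--GM ($\sqrt{ab}\le\tfrac{1}{2}(\eta a+\eta^{-1}b)$) term by term: the piece paired with $\log^2(N/\delta)(1+\cdots)$ reproduces $\log(N/\delta)\sqrt{d_{\bbm_N}\beta\log(\bbm_N)\bigl(\sum_n\var^{u_n}+\sum_n\bDelta_n^2\bigr)}$ once the residual ``$1$'' under the root is noted to be $\lesssim d_{\bbm_N}\beta\log(\bbm_N)$; the piece paired with $d_{\bbm_N}\beta\log(\bbm_N)$ reproduces $d_{\bbm_N}\beta\log(\bbm_N)$; and the piece paired with $\sum|I_{4,n}^k|$ is split with $\eta$ chosen so that $\sum|I_{4,n}^k|$ keeps an $O(1)$ coefficient while the remainder is absorbed into $d_{\bbm_N}\beta\log(\bbm_N)$. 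Collecting the leftover additive logarithms ($\log(1/\delta)$, $\log(\log(\bbm_N)/\delta)$, $\log(N/\delta)$) into a single multiplicative $\log(N/\delta)$ yields the claimed inequality.

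The substantive difficulty is contained in the auxiliary lemmas; the only delicate point inside this proof is the Step-four bookkeeping — one must keep the coefficient of $\sum_{n\in\tilde\cN}\sum_k|I_{4,n}^k|$ at $O(1)$, so that it can be handled separately through the randomized-measurement confidence set $\hat\cP_n$, without inflating the constant multiplying $\sqrt{d_{\bbm_N}\beta\log(\bbm_N)(\sum_n\var^{u_n}+\sum_n\bDelta_n^2)}$, which is precisely what the AM--GM absorptions together with the order relations $\log(1/\delta)\le\beta$ and $d_{\bbm_N}\beta\log(\bbm_N)\ge1$ guarantee.
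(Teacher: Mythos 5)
Your proposal is correct and follows essentially the same route as the paper's proof: optimism via the confidence sets, splitting episodes by $\tilde\cN$ with the crude bound $N-|\tilde\cN|$ outside, the Simulation Lemma plus Lemma~\ref{lemma:mds} inside, and then chaining Lemmas~\ref{lemma:var3} and~\ref{lemma:var2} to replace the cumulative variance of $V_n$, with the same final absorption of cross terms. Your Step-four bookkeeping (subadditivity of the square root plus AM--GM, and noting $(f^*,g^*)$ must be feasible in $\cP_n\cap\hat\cP_n$) is in fact spelled out more explicitly than in the paper, which simply states the substitution.
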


\begin{proof}[Proof]
By Lemma \ref{lemma:confidence}, we have $R_{f^*, g^*}(u_n) \leq R_{f_n, g_n}(u_n)$. For any $n \in \tilde\cN$, by Lemma \ref{lemma:simulation}, we have
\begin{align}
    R_{f^*, g^*}(u_n) - R_{f^*, g^*}(u_n)&\leq R_{f_n, g_n}(u_n) - R_{f^*, g^*}(u_n) \notag \\
    &\leq \min\bigg\{1, \sum_{k=0}^{m-1}(I_{1,n}^k + I_{2,n}^k+I_{3,n}^k+I_{4,n}^k) + I_{0,n}\bigg\}.\notag
\end{align}

Then we can bound the regret as
\begin{align}
    \text{Regret}(N)&\lesssim N-|\tilde\cN| +  \sum_{n \in \tilde\cN}\sum_{k=0}^{m-1}(I_{1,n}^k + I_{2,n}^k+I_{3,n}^k+I_{4,n}^k) + I_{0,n}.\label{help:main0}
\end{align}

From Lemma \ref{lemma:mds}, we have
    \begin{align}
       &\sum_{n\in \tilde\cN} \bigg(I_{0,n} + \sum_{k=0}^{m-1} I_{1,n}^k+I_{2,n}^k+I_{3,n}^k + I_{4,n}^k \bigg) \notag \\
       &\lesssim d_{\bbm_N}\beta \log(\bbm_N)+\sqrt{d_{\bbm_N}\beta \log(\bbm_N)\sum_{n\in\tilde\cN} \sum_{k=0}^{m-1} \mathbb{V}_{x \sim p^*_n(x_n(t_n^k), \Delta)} \left[ V_n(x, t_n^{k+1}) \right] }\notag \\
       &\quad +\log(1/\delta)\bigg(\sqrt{\sum_{n=1}^N \var^{u_n}_{f^*,g^*}}+ \sqrt{ \sum_{n=1}^N\bDelta_n^2 }\bigg)+\sum_{n\in\tilde\cN}\sum_{k=0}^{m_n-1} |I_{4,n}^k|.\label{help:main3}
    \end{align}

From Lemma \ref{lemma:var3}, we have
\begin{align}
    &\sum_{n\in\tilde\cN} \sum_{k=0}^{m_n-1} \VV_{x \sim p^*_n(x_n(t_n^k), \Delta_{n,k})} V_n(x, t_n^{k+1})\notag \\
    &\lesssim \sum_{n\in\tilde\cN} \sum_{k=0}^{m_n-1} \VV_{x \sim p^*_n(x_n(t_n^k), \Delta_{n,k})} V_n^*(x, t_n^{k+1}) + d_{\bbm_N}\beta \log(\bbm_N)  + \sum_{n\in\tilde\cN}\sum_{k=0}^{m_n-1} |I_{4,n}^k| \notag\\
    &\lesssim \log^2(N/\delta)\bigg(1 +\sum_{n=1}^N \var^{u_n}_{f^*,g^*} + \sum_{n=1}^N\bDelta_n^2\bigg)+d_{\bbm_N}\beta \log(\bbm_N) +\sum_{n\in\tilde\cN}\sum_{k=0}^{m_n-1} |I_{4,n}^k|,\label{help:main1}
\end{align}
where the second inequality holds due to Lemma \ref{lemma:var2}. Substituting \eqref{help:main1} into \eqref{help:main3}, then substituting them into \eqref{help:main0}, we have our final regret bound. 
\end{proof}

\subsection{Proof of Theorem \ref{thm:2}}

We first have our concentration lemma. 
\begin{lemma}\label{lemma:confidence2}
    With probability at least $1-\delta$, we have for all $n \in [N]$, $(f^*, g^*) \in \cP_n \cap \hat \cP_n$, and
    \begin{align}
        &\sum_{i=1}^{n-1}\sum_{k=0}^{m_i - 1} \mathbb{H}^2(p_{f_n, g_n}( u_i, x_i(t_i^k), \hat\Delta_{i,k}) \| p_{f^*, g^*}(u_i, x_i(t_i^k), \hat\Delta_{i,k})) \leq \beta,\label{help:334}
    \end{align}
    where $\beta = 5 \log\!\left(N\cdot \cC_{1/\epsilon} / \delta \right)$. 
\end{lemma}

\begin{proof}
    By Lemma \ref{lemma:confidence}, we already have $(f^*, g^*) \in \cP_n$ with probability at least $1-\delta/2$. Then we apply Lemma \ref{help:2} again with $D_{i,k}$ being the delta distribution at $(u_i, x_i(t_i^k), \hat\Delta_{i,k})$ guarantees $(f^*, g^*) \in \hat\cP_n$ and \eqref{help:334} holds with probability at least $1-\delta/2$. Taking a union bound over the two events, we conclude that with probability at least $1-\delta$, both statements hold simultaneously.
\end{proof}

Next we have the following lemma. 

\begin{lemma}\label{lemma:eluder2}
    Let the event $\cE_{\ref{lemma:confidence2}}$ be the event of Lemma \ref{lemma:confidence2}. Then under event $\cE_{\ref{lemma:confidence2}}$, there exists a set $\cN_1 \subseteq [N]$ such that
\begin{itemize}[leftmargin = *]
    \item We have $|\cN_1| \leq 13 \log^2(4\beta \bbm_N) \cdot d_{8\beta \bbm_N}$.
    \item For any $n\in [N]$, $n \in \cN_1$ is a stopping time.
    \item We have
\begin{align}
    &\sum_{i \in [N]\setminus\cN_1} \sum_{k=0}^{m_i-1}  \mathbb{H}^2\left(p_{f_i, g_i}(u_i, x_i(t_i^k), \hat\Delta_{i,k})\|p_{f^*, g^*}(u_i, x_i(t_i^k),\hat\Delta_{i,k})\right)\leq 3d_{\bbm_N} + 7d_{\bbm_N}\beta \log(\bbm_N).\notag
\end{align}

\end{itemize}

\end{lemma}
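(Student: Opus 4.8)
The plan is to reproduce, almost verbatim, the argument behind Lemma~\ref{lemma:eluder}, with the deterministic gaps $\Delta_{i,k}$ replaced everywhere by the randomized gaps $\hat\Delta_{i,k}$ and the confidence set $\cP_n$ replaced by $\hat\cP_n$. First I would record that the model chosen in Line~5 of Algorithm~\ref{alg:alg1} satisfies $(f_n,g_n)\in\cP_n\cap\hat\cP_n$, so in particular $(f_n,g_n)\in\hat\cP_n$. Combining this with Lemma~\ref{lemma:confidence2} (the randomized-grid analogue of the event $\cE_{\ref{lemma:confidence}}$, proved by applying Theorem~E.4 of \citet{wang2023benefits} to the delta distributions centered at $(u_i,x_i(t_i^k),\hat\Delta_{i,k})$), on the corresponding high-probability event we obtain, for every $n\in[N]$,
\[
\sum_{i=1}^{n-1}\sum_{k=0}^{m_i-1}\HH^2\!\left(p_{f_n,g_n}(u_i,x_i(t_i^k),\hat\Delta_{i,k})\,\big\|\,p_{f^*,g^*}(u_i,x_i(t_i^k),\hat\Delta_{i,k})\right)\le\beta ,
\]
i.e.\ the chosen densities have cumulative squared-Hellinger error at most $\beta$ on the past randomized observation points.

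Second, I would identify the relevant data as the sequence $y_{i,k}:=(u_i,x_i(t_i^k),\hat\Delta_{i,k})$, ordered lexicographically by $(i,k)$. The key point is that $\hat\Delta_{i,k}\sim\mathrm{Unif}(0,\Delta_{i,k})$ with $\Delta_{i,k}\le T$, so every $y_{i,k}$ lies in exactly the domain $\cY=\Pi\times\cX\times[0,T]$ of Definition~\ref{def:eluder}. Hence the function class $\Psi=\{\psi_{f,g}\}$ and all of its $\epsilon$-eluder dimensions $d_{1/\epsilon}$ are the same objects as in the unhatted analysis, and no new complexity quantity is introduced.

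Third, I would feed this into Lemma~6 of \citet{wang2024model}, instantiated with the distribution class $p_{f,g}$, input space $\cY$, function class $\Psi$, confidence sets $\hat\cP_n$, and the per-round Hellinger bound above. That lemma returns a subset $\cN_1\subseteq[N]$ with $|\cN_1|\le 13\log^2(4\beta\bbm_N)\,d_{8\beta\bbm_N}$, such that $\{n\in\cN_1\}$ is a stopping time for each $n$, and such that the cumulative squared Hellinger distance of $(f_n,g_n)$ over the episodes $n\in[N]\setminus\cN_1$ is at most $3d_{\bbm_N}+7d_{\bbm_N}\beta\log(\bbm_N)$, which is exactly the claimed bound. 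As in Lemma~\ref{lemma:eluder}, this is a black-box application with no computation of its own.

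The only place where something genuinely needs checking, and the step I would treat as the main obstacle, is the adaptedness and measurability bookkeeping that legitimizes both the concentration inequality and the eluder lemma. One must confirm that $\hat\Delta_{i,k}$, drawn in Step~2 of Algorithm~\ref{alg:alg3} before episode $i$ is executed, is measurable with respect to the $\sigma$-field generated by the first $i-1$ episodes (it is, being an independent uniform draw given $\Delta_{i,k}$), and that the subsequent observation $x_i(t_i^k+\hat\Delta_{i,k})$ is a genuine sample from the conditional law $p_{f^*,g^*}(\cdot\mid u_i,x_i(t_i^k),\hat\Delta_{i,k})$; the latter follows from the Markov property of the Itô SDE together with the inclusion $t_i^k+\hat\Delta_{i,k}\le t_i^{k+1}$, so the randomized point stays inside the $k$-th measurement interval. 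With this filtration in place the two cited lemmas apply unchanged and the statement follows.
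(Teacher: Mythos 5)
Your proposal matches the paper's argument, which is simply a black-box invocation of Lemma~6 of \citet{wang2024model} with the distribution class $p_{f,g}$, input space $\Pi\times\cX\times[0,T]$, and function class $\Psi$, now applied to the randomized points $(u_i,x_i(t_i^k),\hat\Delta_{i,k})$ together with the concentration event of Lemma~\ref{lemma:confidence2}. Your extra bookkeeping — checking that $\hat\Delta_{i,k}\in[0,T]$ keeps the inputs in the same domain $\cY$ (so no new eluder dimension arises), and that $x_i(t_i^k+\hat\Delta_{i,k})$ is a genuine sample from $p_{f^*,g^*}(\cdot\mid u_i,x_i(t_i^k),\hat\Delta_{i,k})$ by the Markov property — is consistent with, and somewhat more explicit than, the paper's one-line proof.
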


\begin{proof}
We apply Lemma 6 in \citet{wang2024model} here with the distribution class $p_{f,g}$, input space $\Pi \times \cX \times [T]$ and function class $\Psi$.    
\end{proof}

Next we bound $\sum_{n\in\tilde\cN}\sum_{k=0}^{m-1} |I_{4,n}^k|$ with the help of Lemma \ref{lemma:confidence2} and Lemma \ref{lemma:eluder2}. 

\begin{lemma}\label{lemma:probdiff2}
Let $\tilde\cN\subseteq[N]$ be an episode index set satisfying $\tilde\cN \subseteq [N]\setminus \cN_1$. Under event $\cE_{\ref{lemma:confidence2}}$, with probability at least $1-\delta$, the quantities $I_{4,n}^k$ introduced in introduced in Lemma \ref{lemma:simulation} satisfy 
\begin{align}
    &\sum_{n\in\tilde\cN}\sum_{k=0}^{m-1} |I_{4,n}^k|\lesssim \sqrt{d_{\bbm_N}\beta \log(\bbm_N) \sum_{n=1}^N \bDelta_n^2}.\notag
\end{align}

\end{lemma}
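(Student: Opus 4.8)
The plan is to use the Monte Carlo measurement of Algorithm~\ref{alg:alg3} to turn each $I_{4,n}^k$ into a scaled difference of expectations of the bounded reward $b$ under the \emph{marginal} laws $p_n$ and $p^*_n$ evaluated at a uniformly random time, bound this difference by a squared Hellinger distance, and then aggregate over $(n,k)$ via Cauchy--Schwarz together with the eluder-dimension estimate of Lemma~\ref{lemma:eluder2}. First I would record the randomized-measurement identity: fixing $n$ and $0\le k<m_n$, writing $x(\cdot)$ for a trajectory started at $x(0)=x_n(t_n^k)$ and using $\EE_{\hat\Delta_{n,k}\sim\mathrm{Unif}(0,\Delta_{n,k})}[h(\hat\Delta_{n,k})]=\Delta_{n,k}^{-1}\int_0^{\Delta_{n,k}}h(s)\,ds$ together with Fubini's theorem, one gets, for $p\in\{p_n,p^*_n\}$,
\begin{align}
    \EE_{x(\cdot)\sim p(x_n(t_n^k))}\left[\int_0^{\Delta_{n,k}} b(x(t),u_n(t))\,dt\right]
    = \Delta_{n,k}\,\EE_{\hat\Delta_{n,k}}\,\EE_{x'\sim p(x_n(t_n^k),\hat\Delta_{n,k})}\big[b(x',u_n(x'))\big],\notag
\end{align}
hence $I_{4,n}^k=\Delta_{n,k}\,\EE_{\hat\Delta_{n,k}}\big[\EE_{x'\sim p_n(x_n(t_n^k),\hat\Delta_{n,k})}b-\EE_{x'\sim p^*_n(x_n(t_n^k),\hat\Delta_{n,k})}b\big]$. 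This trade --- replacing the trajectory-segment laws on $[0,\Delta_{n,k}]$, which the MLE confidence sets do not constrain, by the marginal laws, which they do --- is exactly what the extra randomized observation is for.

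Next, since $0\le b\le 1$, the inner difference is at most the total-variation distance, hence at most $\sqrt{2}$ times the Hellinger distance; combining with Jensen's inequality (concavity of $\sqrt{\cdot}$) yields
\begin{align}
    |I_{4,n}^k|\le\sqrt{2}\,\Delta_{n,k}\,\sqrt{\bar Y_{n,k}},\qquad
    \bar Y_{n,k}:=\EE_{\hat\Delta_{n,k}}\!\left[\mathbb{H}^2\!\big(p_n(x_n(t_n^k),\hat\Delta_{n,k})\,\big\|\,p^*_n(x_n(t_n^k),\hat\Delta_{n,k})\big)\right].\notag
\end{align}
Summing over $n\in\tilde\cN$ and $0\le k<m_n$, applying Cauchy--Schwarz, and using $\sum_{n\in\tilde\cN}\sum_k\Delta_{n,k}^2\le\sum_{n=1}^N\bDelta_n^2$, the claim reduces to showing $\sum_{n\in\tilde\cN}\sum_k\bar Y_{n,k}\lesssim d_{\bbm_N}\beta\log(\bbm_N)$.

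For that I would pass from $\bar Y_{n,k}$ to the realized Hellinger value $Y_{n,k}:=\mathbb{H}^2(p_n(x_n(t_n^k),\hat\Delta_{n,k})\,\|\,p^*_n(x_n(t_n^k),\hat\Delta_{n,k}))$. Because $\hat\Delta_{n,k}$ is drawn independently of the history that fixes $u_n,f_n,g_n$ and of $x_n(t_n^k)$, the $Y_{n,k}$ are unbiased for $\bar Y_{n,k}$, and $\{\ind(n\in\tilde\cN)(\bar Y_{n,k}-Y_{n,k})\}$ is a martingale-difference sequence with increments in $[-1,1]$ and conditional second moment at most $\bar Y_{n,k}$ (since $Y_{n,k}\in[0,1]$). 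Applying Lemma~\ref{lemma:concen_var} in the same manner as in the proof of Lemma~\ref{lemma:mds} gives, with probability at least $1-\delta$,
\begin{align}
    \sum_{n\in\tilde\cN}\sum_{k=0}^{m_n-1}\bar Y_{n,k}
    \lesssim\sum_{n\in\tilde\cN}\sum_{k=0}^{m_n-1}Y_{n,k}
    +\sqrt{\log(1/\delta)\sum_{n\in\tilde\cN}\sum_{k=0}^{m_n-1}\bar Y_{n,k}}+\log(1/\delta),\notag
\end{align}
so that $\sum_{n\in\tilde\cN}\sum_k\bar Y_{n,k}\lesssim\sum_{n\in\tilde\cN}\sum_k Y_{n,k}+\log(1/\delta)$ by the implication $x\lesssim\sqrt{ax}+b\Rightarrow x\lesssim a+b$. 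Since $\tilde\cN\subseteq[N]\setminus\cN_1$, $Y_{n,k}\ge0$, and $(f_n,g_n)\in\hat\cP_n$ by the selection rule of Algorithm~\ref{alg:alg1}, Lemma~\ref{lemma:eluder2} bounds $\sum_{n\in\tilde\cN}\sum_k Y_{n,k}\le 3d_{\bbm_N}+7d_{\bbm_N}\beta\log(\bbm_N)$; absorbing $\log(1/\delta)\lesssim\beta$ then yields $\sum_{n\in\tilde\cN}\sum_k\bar Y_{n,k}\lesssim d_{\bbm_N}\beta\log(\bbm_N)$, and combining with the Cauchy--Schwarz bound above finishes the proof.

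I expect the main obstacle to be the measurability bookkeeping in the martingale step: one needs $\ind(n\in\tilde\cN)$ to be adapted to the filtration in which the $\hat\Delta_{n,k}$ are revealed so that $\ind(n\in\tilde\cN)(\bar Y_{n,k}-Y_{n,k})$ is genuinely a martingale difference. This is precisely where the ``stopping time'' property of the exceptional set $\cN_1$ from Lemma~\ref{lemma:eluder2} is used, handled exactly as the analogous $\ind(n\in\tilde\cN)$ terms in the proof of Lemma~\ref{lemma:mds}; all the other ingredients (Fubini, the total-variation/Hellinger comparison, Jensen, Cauchy--Schwarz) are routine.
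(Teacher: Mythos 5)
Your proposal is correct and follows essentially the same route as the paper: both exploit the uniform randomization of $\hat\Delta_{n,k}$ to relate the trajectory-integral difference to marginal Hellinger distances, bound the realized Hellinger terms at the sampled times via Lemma~\ref{lemma:eluder2}, handle the gap between the time-average and the single sampled time by a martingale concentration, and finish with Cauchy--Schwarz against $\Delta_{n,k}$. The only (harmless) difference is the ordering: you apply Jensen/Cauchy--Schwarz first and concentrate the squared Hellinger via Freedman, whereas the paper concentrates the first-power Hellinger integrand $J_{2,n}^k$ via Azuma--Hoeffding (range $2\Delta_{n,k}$) and then applies Cauchy--Schwarz to the realized term $J_{1,n}^k$; your explicit attention to the adaptedness of $\ind(n\in\tilde\cN)$ matches what the paper's Azuma step implicitly requires.
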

\begin{proof}

Fix $n\in\tilde\cN$ and $0\le k<m_n$. We have
\begin{align}
    & |I_{4,n}^k| \notag \\
    &=\bigg|\EE_{x(\cdot)\sim p_n(x_n(t_n^k))}\bigg[\int_{t=0}^{\Delta_{n,k}} b(x(t), u(t))dt\bigg]  - \EE_{x(\cdot)\sim p_n^*(x_n(t_n^k))} \bigg[\int_{t=0}^{\Delta_{n,k}} b(x(t), u(t)) dt\bigg] \bigg|\notag \\
    & \leq \int_{t=0}^{\Delta_{n,k}} \bigg| \EE_{x\sim p_n^*(x_n(t_n^k),t)} b(x,u) -  \EE_{x\sim p_n(x_n(t_n^k),t)} b(x,u) \bigg| dt \notag \\
    &\lesssim 
    \int_{t=0}^{\Delta_{n,k}}\sqrt{\VV_{x\sim p_n^*(x_n(t_n^k),t)}b(x,u) \HH^2(p_n^*(x_n(t_n^k),t)\|p_n(x_n(t_n^k),t))} \notag \\
    &\quad+ \HH^2(p_n^*(x_n(t_n^k),t)\|p_n(x_n(t_n^k),t)) dt \notag \\
    &\lesssim \int_{t=0}^{\Delta_{n,k}}\HH(p_n^*(x_n(t_n^k),t)\|p_n(x_n(t_n^k),t))dt \notag \\
&= \underbrace{\Delta_{n,k}\cdot\HH(p_n^*(x_n(t_n^k),\hat\Delta_{n,k})\|p_n(x_n(t_n^k),\hat\Delta_{n,k}))}_{J_{1,n}^k}\notag \\
&\quad + \underbrace{\int_{t=0}^{\Delta_{n,k}}\HH(p_n^*(x_n(t_n^k),t)\|p_n(x_n(t_n^k),t))dt - \Delta_{n,k}\HH(p_n^*(x_n(t_n^k),\hat\Delta_{n,k})\|p_n(x_n(t_n^k),\hat\Delta_{n,k}))}_{J_{2,n}^k},\notag
\end{align}
    where we use the fact that $b \leq 1$ and $\HH \leq 1$. 
For $J_{1,n}^k$, we have:
\begin{align}
    \sum_{n\in\tilde\cN}\sum_{k=0}^{m_n-1} J_{1,n}^k
    &\le\sqrt{\sum_{n\in\tilde\cN}\sum_{k=0}^{m_n-1} \Delta_{n,k}^2}\cdot \sqrt{\sum_{n\in\tilde\cN}\sum_{k=0}^{m_n-1} \HH^2(p_n^*(x_n(t_n^k),\hat\Delta_{n,k})\|p_n(x_n(t_n^k),\hat\Delta_{n,k}))}\notag \\
    & \leq \sqrt{d_{\bbm_N}\beta \log(\bbm_N) \sum_{n=1}^N \bDelta_n^2} ,\label{help:124}
\end{align}
where the first inequality is by Cauchy-Schrawz inequality and the last one holds due to Lemma \ref{lemma:eluder}. 

For $\{J_{2,n}^k\}_{n,k}$, because $\hat\Delta_{n,k}$ is sampled uniformly from $[0, \Delta_{n,k}]$,
the sequence $\{J_{2,n}^k\}_{n,k}$ forms a martingale difference sequence (MDS).
Noting $|J_{2,n}^k|\le 2\Delta_n^k$ we can apply Azuma-Hoeffding inequality to $J_{2,n}^k$, 
which infers that with probability at least $1-\delta$,
\begin{align}
    \sum_{n\in\tilde\cN}\sum_{k=0}^{m_n-1}J_{2,n}^k = \sum_{n=1}^N \ind(n \in \tilde\cN)\sum_{k=0}^{m_n-1}J_{2,n}^k
    \lesssim \sqrt{\sum_{n\in\tilde\cN}\sum_{k=0}^{m_n-1} \Delta_{n,k}^2 \log(1/\delta)} \leq \sqrt{\sum_{n=1}^N \bDelta_n^2 \log(1/\delta)} .\label{help:54}
\end{align}

Therefore, from \eqref{help:124} and \eqref{help:54}, we obtain our bound. \end{proof}

Then we have our final proof of Theorem \ref{thm:2}. 
\begin{proof}[Proof of Theorem \ref{thm:2}]
   We set $\tilde\cN = [N]\setminus(\cN\cup \cN_1)$. Since both $n \in\cN, n\in\cN_1$ are stopping time, then $\tilde\cN$ is also a stopping time. Clearly we have $\tilde\cN \subseteq [N]\setminus \cN$ and $\tilde\cN \subseteq [N]\setminus \cN_1$, thus we can apply both Lemma \ref{lemma:finaldecompose} and \ref{lemma:probdiff2}. Then substituting the bound of $\sum_{n\in\tilde\cN}\sum_{k=0}^{m_n-1} |I_{4,n}^k|$ from Lemma \ref{lemma:probdiff2} into Lemma \ref{lemma:finaldecompose} and using the fact that
   \begin{align}
       N-|\tilde\cN| \leq |\cN| + |\cN_1| \leq 26 \log^2(4\beta \bbm_N) \cdot d_{8\beta \bbm_N}\notag
   \end{align}
   concludes our proof. Here, the second inequality holds due to the bounds of $|\cN|$ in Lemma \ref{lemma:eluder} and $|\cN_1|$ in Lemma \ref{lemma:eluder2}. 
\end{proof}

\newpage

\section{Numerical Experiments}
\label{app:experiment}

\begin{algorithm*}[t!]
\caption{Lagrangian CT-MLE}
\label{alg:imp-friendly}
\begin{algorithmic}[1]
\REQUIRE Episode number $N$, policy class $\Pi$, initial state $\xini$, drift class $\cF$, diffusion class $\cG$, reward function $b$, planning horizon $T$, parameter $\eta$.
\STATE For each $n \in [N]$, determine a fixed measurement time sequence $0=t_n^0<\dots< t_n^{m_n} = T$. For any $0\leq k<m_n$, denote measurement gaps $\Delta_{n,k} := t_n^{k+1}-t_n^k$, randomized measurement gap $\hat\Delta_{n,k}\sim\text{Unif}(0, \Delta_{n,k})$.
\FOR{episode $n = 1,\dots, N$}
\STATE Solve $(f_n, g_n)$ via 
\begin{align*}
    f_n, g_n = \argmax_{(f,g) \in \mathcal{F} \times \mathcal{G}} \bigg\{& R_{f,g}(u_{n-1}) + \eta_n \cdot \bigg( \sum_{i=1}^{n-1} \sum_{k=0}^{m_i-1} \log p_{f,g}(x_i(t_i^{k+1})|u_i,x_i(t_i^{k}), \Delta_{i,k}) \\ 
   & + \sum_{i=1}^{n-1} \sum_{k=0}^{m_i-1} \log p_{f,g}(x_i(t_i^k + \hat\Delta_{i,k})|u_i,x_i(t_i^{k}), \hat\Delta_{i,k}) \bigg)  \bigg\},
\end{align*}
\STATE Set policy $u_n$ as $u_n = \argmax_{u\in \Pi}R_{f_n, g_n}(u)$.
\STATE Execute the $n$-th episode and observe $x_n(t_n^0), x_n( t_n^0 + \hat\Delta_{n,0})\dots, x_n(t_n^{m_n-1} + \hat\Delta_{n,m_n-1}), x_n(t_n^{m_n})$.
\ENDFOR
\RETURN Randomly pick an $n\in[N]$ uniformly and output $\hat u$ as $u_n$. 
\end{algorithmic}
\end{algorithm*}

Algorithm~\ref{alg:alg1} (CT-MLE) is theoretically clean and analysis-friendly, but its direct use is computationally prohibitive. The core difficulty is that it optimizes a reward \(R_{f,g}(u)\) over parameters \((f,g)\) subject to \emph{two} confidence constraints, i.e., membership in the intersection \(\mathcal{P}_n \cap \hat{\mathcal{P}}_n\). This yields a constrained program with set intersections defined by likelihood inequalities, which is generally intractable at scale.

Let 
\begin{align}
\cL_{f,g}^{(n)}&:=\sum_{i=1}^{n-1} \sum_{k=0}^{m_i-1} \log p_{f,g}(x_i(t_i^k + \Delta_{i,k})|u_i,x_i(t_i^{k}), \Delta_{i,k}) \\
\hat{\cL}_{f,g}^{(n)}&:=\sum_{i=1}^{n-1} \sum_{k=0}^{m_i-1} \log p_{f,g}(x_i(t_i^k + \hat\Delta_{i,k})|u_i,x_i(t_i^{k}), \hat\Delta_{i,k}).
\end{align}

The CT-MLE solves
\begin{align}
\max_{(f,g)\in \mathcal{F}\times \mathcal{G}} \quad & R_{f,g}(u_{n-1}) \\
\text{s.t.}\quad 
& \mathcal{L}_{f,g}^{(n)} \;\ge\; \max_{(f',g')\in \mathcal{F}\times\mathcal{G}} \mathcal{L}_{f',g'}^{(n)} - \beta, \\
& \hat{\mathcal{L}}_{f,g}^{(n)} \;\ge\; \max_{(f',g')\in \mathcal{F}\times\mathcal{G}} \hat{\mathcal{L}}_{f',g'}^{(n)} - \beta,
\end{align}
i.e., \((f,g)\) must lie in the \(\beta\)-near-optimal regions of both likelihoods.

To make the problem implementable, we replace the hard constraints by penalties via standard Lagrangian relaxation. Introducing multipliers \(\eta_n,\hat{\eta}_n \ge 0\), we obtain the unconstrained surrogate
\begin{align}
\max_{(f,g)\in \mathcal{F}\times \mathcal{G}}
\Bigl\{
R_{f,g}(u_{n-1})
+ \eta_n \bigl(\mathcal{L}_{f,g}^{(n)} - \max \mathcal{L}^{(n)} + \beta \bigr)
+ \hat{\eta}_n \bigl(\hat{\mathcal{L}}_{f,g}^{(n)} - \max \hat{\mathcal{L}}^{(n)} + \beta \bigr)
\Bigr\}.
\end{align}
Since \(\max \mathcal{L}^{(n)}\), \(\max \hat{\mathcal{L}}^{(n)}\), and \(\beta\) are constants with respect to \((f,g)\), they do not affect the maximizer and can be dropped. For simplicity we tie the multipliers, \(\eta_n=\hat{\eta}_n\), yielding the implementation-friendly objective used in Algorithm~\ref{alg:imp-friendly}:
\begin{align}
\max_{(f,g)\in \mathcal{F}\times \mathcal{G}}
\Bigl\{ R_{f,g}(u_{n-1}) \;+\; \eta_n \bigl(\mathcal{L}_{f,g}^{(n)} + \hat{\mathcal{L}}_{f,g}^{(n)}\bigr) \Bigr\}.
\end{align}

The coefficient \(\eta_n\) governs the trade-off between the task reward and adherence to high-likelihood regions defined by both data fidelities (\(\mathcal{L}^{(n)}\) and \(\hat{\mathcal{L}}^{(n)}\)). In effect, the relaxation converts the intractable set intersection into a soft regularizer that is straightforward to optimize with standard gradient-based methods over parameterized \((f,g)\). This surrogate serves as the entry point to our experiments, enabling a scalable approximation to CT-MLE while preserving the original constraints.

\subsection{Implementation Details}
\label{sec:implementation_details}

We address several practical implementation challenges for Algorithm~\ref{alg:imp-friendly}. The primary challenge is computing the conditional probability density function $p_{f,g}(x_i(t_i^{k+1}) \mid u_i, x_i(t_i^k), \Delta_{i,k})$, where $\Delta_{i,k} = t_{i}^{k+1} - t_{i}^{k}$. Since direct maximization of the conditional log-likelihood is infeasible due to the unknown normalizing constant of the SDE transition density, we employ continuous-time score matching \citep{hyvarinen2005estimation}. This approach eliminates the intractable normalization term by minimizing the Fisher divergence between the model score and the data score, providing a tractable and computationally efficient surrogate for MLE \citep{pabbaraju2023provable}. Following \citet{song2020sliced}, we adopt the sliced formulation to obtain unbiased and computationally efficient estimators for the drift and diffusion parameters $(f_{\theta}, g_{\theta})$ used in Algorithm~\ref{alg:imp-friendly}.

The second challenge involves determining the optimal policy $u_n$ given the estimated drift $f$ and diffusion $g$ terms. Using the learned SDE, we generate model rollouts and implement a continuous-time actor-critic update: the critic $V_{\xi}$ minimizes the mean-squared temporal difference error, while the actor $u_{\phi}$ maximizes discounted $n$-step returns through stochastic gradient ascent. Our implementation follows deterministic policy gradients \citep{silver2014deterministic,lillicrap2015continuous} but obtains exact gradients by backpropagating through the ODE, similar to neural ODE policy evaluation in continuous time \citep{chen2018neural,yildiz2021continuous}.

We build upon the continuous-time model-based RL framework of \citet{yildiz2021continuous}, augmenting it with additive Gaussian noise to formulate the environment dynamics as an SDE rather than an ODE. Crucially, we replace the original dynamics learning objective with a continuous-time sliced score matching (SSM) loss \citep{song2020sliced}. Over each of the $N_{\mathrm{dyn}}$ gradient updates, we perform the following steps to minimize the model loss:
\[
  \mathcal{L}(\theta) = \mathcal{J}_{\mathrm{SSM}}(\theta) - \eta' \mathbb{E}[V_{f_\theta,g_\theta}^{u_\psi}(x)],
\]
where $\mathcal{J}_{\mathrm{SSM}}$ is the sliced score-matching objective, the second term biases model learning toward higher policy value, and $\eta' = \tfrac{1}{\eta_n \kappa}$ with $\kappa>0$ as scale factor aligning the numerical scales of the SSM loss and the (negative) planning objective.

\begin{enumerate}[leftmargin=*]
  \item \textbf{Data Sampling:} Draw a batch of $B_{\mathrm{dyn}}$ subsequences of length $H_{\mathrm{dyn}}$ from the training dataset $\mathcal{D}$:
    \[
      \bigl\{\bigl(x_i(t_0), u_i(t_0)\bigr), \ldots, \bigl(x_i(t_{H_{\mathrm{dyn}}}), u_i(t_{H_{\mathrm{dyn}}})\bigr)\bigr\}_{i=1}^{B_{\mathrm{dyn}}} \sim \mathcal{D},
    \]
    where $x_i(t_k)$ denotes the state at measurement time $t_k$ and $u_i(t_k) = u\bigl(x_i(t_k)\bigr)$ is the corresponding control input under policy $u$.

  \item \textbf{Score Matching Computation:} For each subsequence $i$ and time step $k \in \{0, \ldots, H_{\mathrm{dyn}}-1\}$:

  \begin{enumerate}[label=(\alph*)]
      \item Compute the interval length: $\Delta t_i^k = t_i^{k+1} - t_i^{k}$.
      
      \item Compute the conditional mean via ODE integration:
      \[
        \mu_\theta^{(i,k)} = \mathrm{ODEInt}\bigl(f_\theta(\cdot, u_i(t_k)), x_i(t_k), [0, \Delta t_i^k]\bigr),
      \]
      where $\mathrm{ODEInt}(\cdot)$ denotes a numerical ODE solver (we use the Dormand-Prince RK45 integrator), $f_\theta(\cdot, u_i(t_k))$ is the learned drift network with control input $u_i(t_k)$, and $[0, \Delta t_i^k]$ is the integration interval.
      
      \item Evaluate the interval covariance:
      \[
        \Sigma_\theta^{(i,k)} = \bigl(g_\theta(x_i(t_k), u_i(t_k))\bigr)^2 \Delta t_i^k,
      \]
      where we square the instantaneous noise scale element-wise and multiply by the interval length to obtain the diagonal covariance matrix.
      
      \item Compute the model score at the interval endpoint $x_i(t_{k+1})$:
      \[
        s_\theta^{(i,k)} = -\bigl(\Sigma_\theta^{(i,k)}\bigr)^{-1} \bigl(x_i(t_{k+1}) - \mu_\theta^{(i,k)}\bigr).
      \]
      
      \item Estimate the sliced score matching loss using $M_{\mathrm{proj}}$ random projections. For each Rademacher vector $v_{i,k,m} \in \{\pm 1\}^d$, compute:
      \[
        \ell_{i,k,m} = \frac{1}{2}\|s_\theta^{(i,k)}\|^2 + v_{i,k,m}^\top \nabla_x\bigl[v_{i,k,m}^\top s_\theta^{(i,k)}\bigr]\bigg|_{x=x_i(t_{k+1})}, \quad m = 1, \ldots, M_{\mathrm{proj}}.
      \]
      This provides an unbiased Monte Carlo estimate of the sliced score matching loss, combining the score energy term with its directional derivative.
      
      \item Aggregate the batched sliced score matching loss:
      \[
        \mathcal{J}_{\mathrm{SSM}}(\theta) = \frac{1}{B_{\mathrm{dyn}} \cdot H_{\mathrm{dyn}} \cdot M_{\mathrm{proj}}} \sum_{i=1}^{B_{\mathrm{dyn}}} \sum_{k=0}^{H_{\mathrm{dyn}}-1} \sum_{m=1}^{M_{\mathrm{proj}}} \ell_{i,k,m}.
      \]
  \end{enumerate}
  
  \item \textbf{Planning Loss Computation:} 
  \begin{enumerate}[label=(\alph*)]
      \item Estimate the advantage $A(x_i(t_k), u_i(t_k))$ for each state-action pair in the batch using the current critic networks:
      \[
        \hat{A}_i = r_i(t_k) + \gamma V'_{\psi}(x_i(t_{k+1})) - Q_{\psi}(x_i(t_k), u_i(t_k)),
      \]
      where $Q_\psi$ is the critic network and $V'_\psi$ is the target value function.
      
      \item Compute the gradient of the log-transition probability with respect to the model parameters. For a Gaussian transition model parameterized by $(\mu_\theta, \Sigma_\theta)$:
      \[
        \nabla_\theta \log P_\theta(x_{k+1}|x_k, u_k) = \nabla_\theta \left[ -\frac{1}{2} \log|\Sigma_\theta| - \frac{1}{2}(x_{k+1}-\mu_\theta)^\top \Sigma_\theta^{-1} (x_{k+1}-\mu_\theta) \right].
      \]
      This gradient is computed efficiently using automatic differentiation on the terms calculated in Step 2(b).
      
      \item Form the Monte Carlo estimate of the planning gradient:
      \[
        \nabla_\theta \mathbb{E}[V] \approx \frac{1}{B_{\mathrm{dyn}}} \sum_{i=1}^{B_{\mathrm{dyn}}} \hat{A}_i \cdot \nabla_\theta \log P_\theta(x_i(t_{k+1})|x_i(t_k), u_i(t_k)).
      \]
  \end{enumerate}
  
  \item \textbf{Combined Model Update:} Update the model parameters via gradient descent:
  \[
    \theta \leftarrow \theta - \alpha_{\mathrm{model}} \bigl(\nabla_\theta \mathcal{J}_{\mathrm{SSM}} - \eta' \nabla_\theta \mathbb{E}[V]\bigr),
  \]
  using the AdamW optimizer \citep{kingma2014adam,loshchilov2017decoupled}.
\end{enumerate}

\subsection{Main results.}

We evaluate Algorithm~\ref{alg:imp-friendly} on three classic control tasks from the Gymnasium benchmark \citep{brockman2016openai,towers2024gymnasium}, comparing against two state-of-the-art continuous-time baselines: ENODE \citep{yildiz2021continuous} and SAC-TaCoS \citep{treven2024sense}.

\paragraph{Tasks.} 
We consider three environments of increasing difficulty:
\begin{itemize}[leftmargin = *]
\item \textbf{Pendulum (Easiest):} The inverted pendulum swing-up problem is a fundamental challenge in control theory. The system consists of a pendulum attached at one end to a fixed pivot, with the other end free to move. Starting from a hanging-down position, the goal is to apply torque to swing the pendulum into an upright position, aligning its center of gravity directly above the pivot. The control space represents the torque applied to the free end, while the state space includes the pendulum's x-y coordinates and angular velocity. This environment is considered the simplest due to its continuous control space and relatively straightforward dynamics with a single degree of freedom.

\item \textbf{CartPole (Medium Difficulty):} The CartPole system comprises a pole attached via an unactuated joint to a cart that moves along a frictionless track. Initially, the pole is in an upright position, and the objective is to maintain balance by applying forces to the cart in either the left or right direction. The control space determines the direction of the fixed force applied to the cart, while the state space includes the cart's position and velocity, as well as the pole's angle and angular velocity. This environment presents moderate difficulty due to its discrete action space and the need to balance an inherently unstable system with coupled dynamics.

\item \textbf{Acrobot (Most Difficult):} The Acrobot system consists of two links connected in series, forming a chain with one end fixed. The joint between the two links is actuated, and the goal is to apply torques to this joint to swing the free end above a target height, starting from the initial hanging-down state. We use the fully actuated version of the Acrobot environment, as no method has successfully solved the underactuated balancing problem, consistent with \citet{zhong2020unsupervised}. The control space is discrete and deterministic, representing the torque applied to the actuated joint, while the state space consists of the two rotational joint angles and their angular velocities. This environment is the most challenging due to its complex nonlinear dynamics involving two coupled pendulums, requiring sophisticated control strategies to coordinate the motion of both links.
\end{itemize}

For the stochastic setting, we follow \citet{treven2024sense} and inject Gaussian noise $\mathcal{N}(0,\sigma^2 I)$ at every time step, with $\sigma=2.0$ used across all experiments. The noise is applied to all state components (such as angle $\theta$ and angular velocity $\dot{\theta}$), converting the otherwise deterministic systems into stochastic environments. Performance is evaluated after 5, 15, and 15 training episodes on Pendulum, CartPole, and Acrobot, respectively, consistent with standard evaluation protocols.

\paragraph{Baselines.} 
\textbf{ENODE} learns dynamics using ensemble neural ODEs and optimizes a theoretically consistent continuous-time actor-critic, providing uncertainty-aware control without time discretization. However, it was not specifically designed for stochastic environments. \textbf{SAC-TaCoS} reformulates the continuous-time SDE control problem as an equivalent discrete-time extended MDP, where policies output both actions and their duration. This enables time-adaptive control using standard algorithms like SAC.

Regarding the measurement grid, ENODE adopts fixed, equidistant intervals following \citet{yildiz2021continuous}, while SAC-TaCoS uses adaptive intervals as in \citet{treven2024sense}. For simplicity, our method also uses equidistant intervals. We apply annealed Lagrange multipliers $\eta_n = \eta_{\mathrm{base}}/n$ with $\eta_{\mathrm{base}} = 4$, together with adaptive scaling $\kappa_n \propto \text{SSM scale} / \text{planning scale}$ to maintain a stable 10:1 SSM-to-planning ratio.

Our implementation follows the network architecture of ENODE \citep{yildiz2021continuous}. The dynamics model is an ensemble of 10 neural networks, each with three hidden layers of width 200 and ELU activations; the output layer uses no activation. The policy and critic networks are standard MLPs with architecture $[3,200,200,1]$ (two hidden layers of width 200). The policy uses ReLU activations in the hidden layers and a Tanh output, while the critic uses Tanh activations in the hidden layers and a linear output layer.

\paragraph{Results.} 
\begin{figure}
    \centering
    \includegraphics[width=0.95\textwidth]{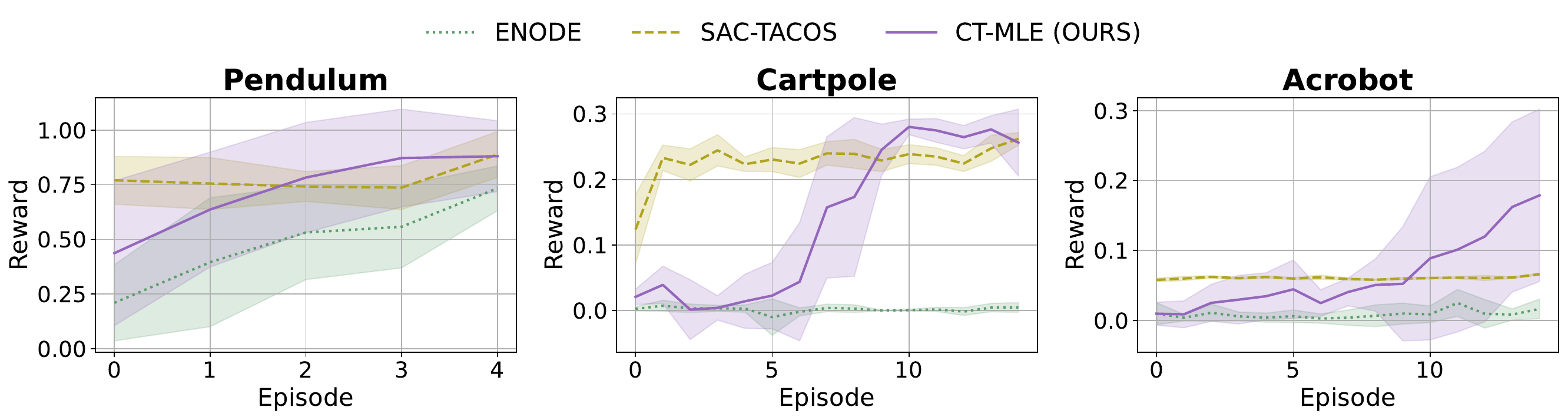}
    \caption{Performance comparison of Algorithm~\ref{alg:imp-friendly}, ENODE \citep{yildiz2021continuous}, and SAC-TaCoS \citep{treven2024sense} across three environments with noise $\sigma=2.0$ ($\pm1$ standard error).}
    \label{fig:comp}
\end{figure}
Figure~\ref{fig:comp} presents our main findings. Our CT-MLE algorithm achieves superior asymptotic performance across all three environments, demonstrating effective adaptation to stochastic dynamics. While SAC-TaCoS exhibits faster initial convergence and lower variance, our method ultimately achieves higher cumulative rewards after sufficient training.

ENODE shows consistently poor performance across all tasks, with minimal learning progress even after extended training. This degradation is expected given that ENODE was not designed for stochastic environments. The failure is evident in CartPole and Acrobot, where ENODE achieves no meaningful reward improvement.

The performance advantage of our method increases with task complexity. In Acrobot, the most challenging environment with complex nonlinear dynamics, the gap between our approach and the baselines is most pronounced. This suggests that our algorithm's ability to model and adapt to noisy dynamics becomes increasingly valuable as learning difficulty increases, making it particularly well-suited for complex stochastic control problems.

\subsection{Ablation Study} 

\begin{figure}
    \centering
    \includegraphics[width=0.95\textwidth]{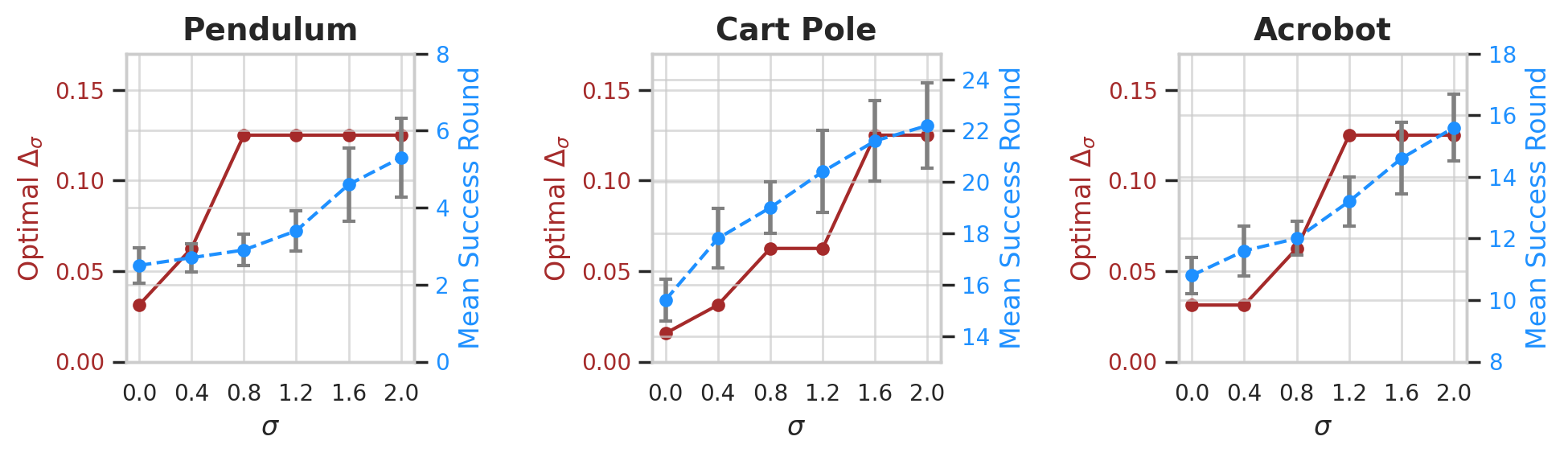}
    \caption{Optimal measurement gap $\Delta_\sigma$ and mean episodes to success ($\pm1$ standard error) under varying environment stochasticity $\sigma$. Results averaged over 10 random seeds for Pendulum and 5 seeds for Cart Pole and Acrobot environments.}
    \label{fig:exp}
\end{figure}
\paragraph{Validation of Theoretical Claims.}
We validate our theoretical claims through systematic numerical experiments. Following \citet{yildiz2021continuous}, we define task success as achieving rewards of at least $0.9$ after a warm-up period ($T_\text{warm-up}=3$ seconds) within a planning horizon of $T=10$ seconds. We set $\eta_n$ to a large value as it does not influence optimal gap selection. For each volatility level $\sigma \in \{0, 0.4, \ldots, 2.0\}$, we evaluate $\algbase$ with equidistant measurement gaps $\Delta = 2^{-i}$ for $i \in \{0, 1, \ldots, 7\}$. The optimal gap $\Delta_\sigma$ is defined as the largest gap achieving the minimum number of episodes to success.

Figure~\ref{fig:exp} demonstrates that the optimal measurement gap $\Delta_\sigma$ increases monotonically with environment volatility $\sigma$, directly validating our theoretical analysis. This empirical observation confirms Remark~\ref{remark:1}, which establishes that the optimal gap for minimizing episode complexity scales proportionally with the total variance: $\Delta \propto \text{Var}^\Pi$. Higher volatility induces larger variance, necessitating wider measurement gaps for optimal performance. Notably, in low-stochasticity regimes ($\sigma \in \{0, 0.4\}$), the optimal gap is not the finest resolution tested ($2^{-7}$), confirming our theoretical prediction that excessive measurement precision yields diminishing returns.

The results further validate our algorithm's instance-dependent complexity guarantees. As shown in Figure~\ref{fig:exp}, the number of episodes required for success increases with $\sigma$, confirming that our algorithm correctly identifies harder instances (higher $\sigma$) and adaptively allocates more samples. This behavior aligns with our theoretical framework, where episode complexity directly reflects the total interaction data required for convergence.

\paragraph{Ablation on Neural Network Structure}

\begin{figure}[t]
\centering
\begin{minipage}{0.48\linewidth}
    \centering
    \includegraphics[width=\linewidth]{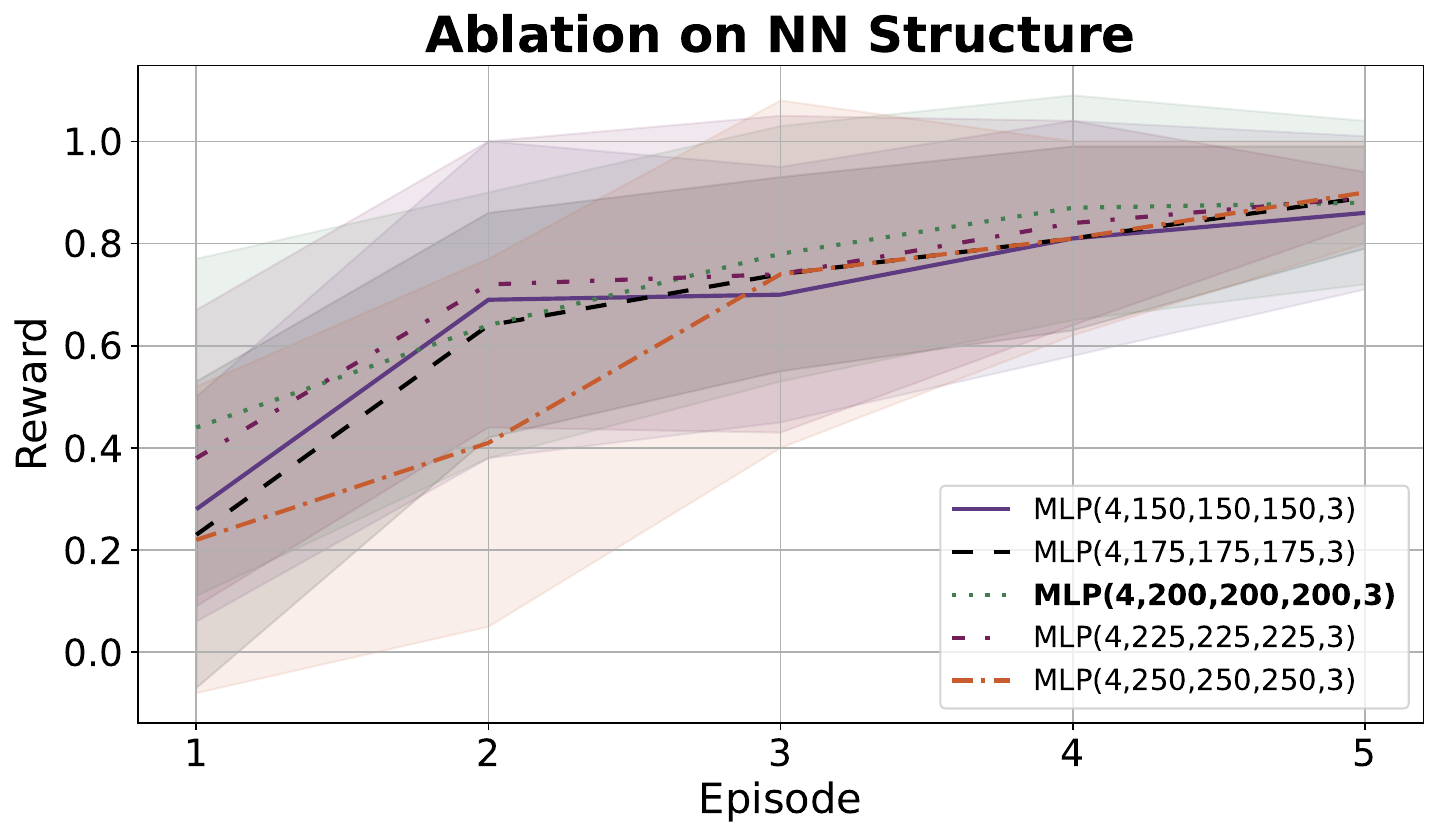}
    \caption{Ablation on Neural Network Width in the Dynamics Model}
    \label{fig:abl-nn}
\end{minipage}
\hfill
\begin{minipage}{0.48\linewidth}
    \centering
    \includegraphics[width=\linewidth]{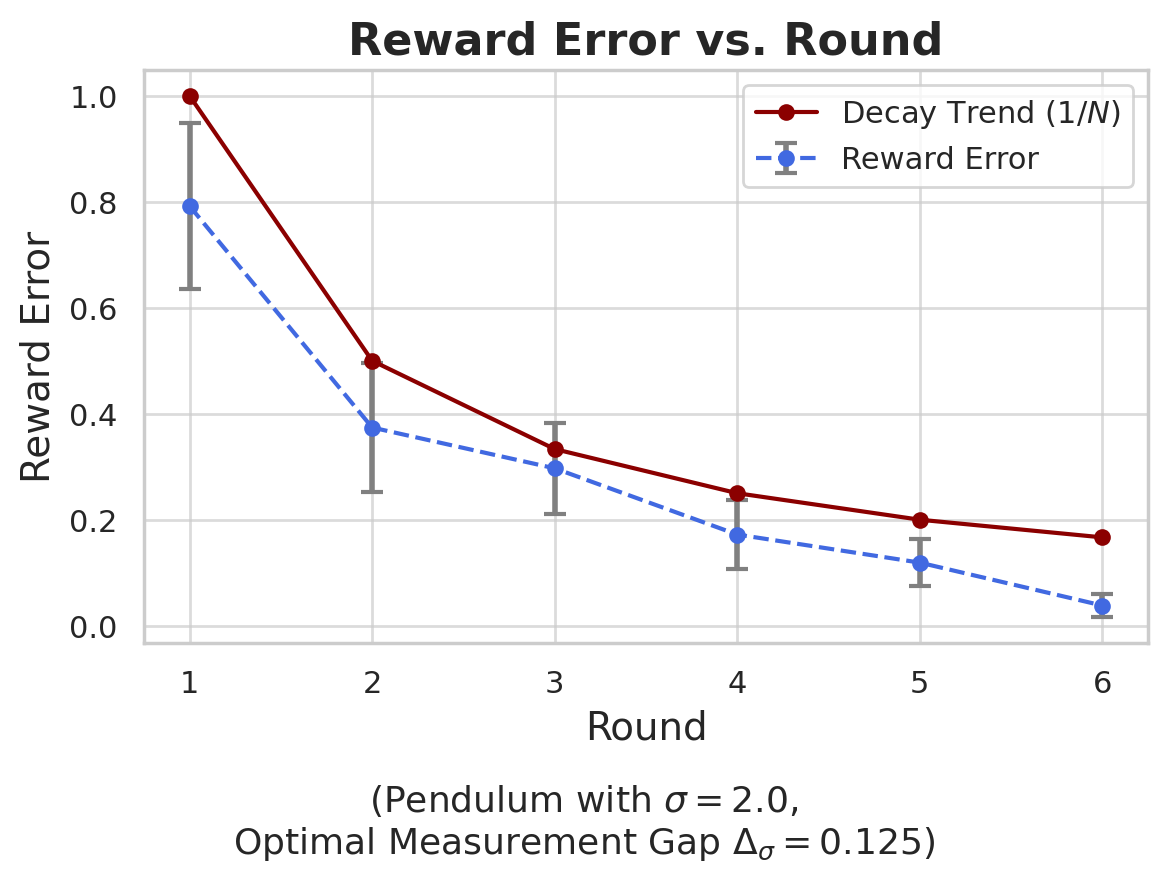}
    \caption{Reward error convergence follows the theoretical $1/\sqrt{N}$ rate (Pendulum, $\sigma=2.0$).}
    \label{fig:exp_con}
\end{minipage}
\end{figure}

To evaluate how sensitive CT-MLE is to the choice of function approximator, we conducted an ablation study on the Pendulum environment with noise level $\sigma = 2.0$, varying the network width while keeping all other components fixed. The tested architecture ranged from relatively small models with 150 hidden units per layer to wider models with up to 250 units. Across 5 random seeds, all five architectures exhibit nearly indistinguishable learning curves and achieve similar episode returns after two episodes of training (see figure \ref{fig:abl-nn}). The smallest network performs on par with larger ones, and scaling the width beyond 200 units does not produce meaningful improvements, suggesting that $\algbase$ exhibits reasonable robustness to architectural choices.

\paragraph{Numerical Convergence Rate.}
We also report the reward error (mean $\pm1$ standard error over 10 seeds) across training episodes for the Pendulum environment with $\sigma=2.0$, using the corresponding optimal gap $\Delta_\sigma=0.125$, as shown in Figure~\ref{fig:exp_con}. The reward error decreases with the number of episodes $N$, and the decay trend closely follows an approximate $1/\sqrt{N}$ convergence rate, which aligns well with our theoretical predictions.

\subsection{Additional Details}
\label{app:exp_res}
All experiments were conducted on a single NVIDIA A6000 GPU. Each $15$-episode Pendulum swing-up task required approximately $5$ hours to complete; each $30$-episode Cart Pole task required approximately $15$ hours to complete; and  each $25$-episode Acrobot task required approximately $12$ hours to complete. The peak GPU memory utilization per run ranges from $4$GB to $20$GB approximately.
We summarize all key hyperparameters used in our experiments in Table~\ref{tab:exp_hyp}, and report the neural network architecture in Table~\ref{tab:Network Architecture}.

\begin{table}[ht]
\centering
\caption{Hyper-parameters in numerical experiment}
\label{tab:exp_hyp}
\begin{tabular}{lll}
\toprule
\textbf{Hyperparameter} & \textbf{Default} & \textbf{Description} \\
\midrule 
$\eta_{\text{base}}$ & 4 & Base value for Lagrangian Multiplier \\
$N_0$ & 3 & Number of trajectories at observation time points in initial data set \\
$H$ & 50 & Trajectory length (in seconds) in the data set \\
\multirow{2}{*}{$N_\mathrm{inc}$} & \multirow{2}{*}{1} & Number of trajectories at observation time points added to the data\\
& &set after each episode \\
\midrule
$B_\mathrm{dyn}$ & 5 & Batch size of the dynamic learning \\
$N_{\mathrm{dyn}}$      & 500       & Number of dynamic learning update iterations in each episode \\
$H_{\mathrm{dyn}}$                      & 5        & Length of each subsequence (horizon) in dynamic learning \\
$M_{\mathrm{proj}}$      & 1         & Rademacher projections per sample in dynamic learning \\
\midrule 
$N_\mathrm{pol}$ & 250 & Number of policy learning update iterations in each episode \\
$H_{\mathrm{pol}}$                      & 5        & Length of each subsequence (horizon) in policy learning \\
\midrule 
$T$ & 10 & Length of each test trajectory at the end of every episode \\
\multirow{2}{*}{$T_\mathrm{warm-up}$} & \multirow{2}{*}{3} & The warm-up subsequence of each test trajectory that does not \\
& & collect rewards and evaluate at observation time points \\
$N_\mathrm{test}$ & 10 & Number of test trajectories at the end of every episode \\
\bottomrule
\end{tabular}
\end{table}

\begin{table}[ht]
\centering
\caption{Architecture of Neural Network in numerical experiment}
\label{tab:Network Architecture}
\begin{tabular}{lcccc}
\toprule
\textbf{Network} & \textbf{Architecture} & \textbf{Hidden Activation} & \textbf{Output Activation} \\
\midrule
Dynamics & [4, 200, 200, 200, 3] $\times 10$ & ELU  & Linear \\
Policy   & [3, 200, 200, 1]                  & ReLU & Tanh   \\
Critic   & [3, 200, 200, 1]                  & Tanh & Linear \\
\bottomrule
\end{tabular}
\end{table}

\end{document}